\documentclass[11pt]{article}

\usepackage[a4paper,margin=1in]{geometry}
\usepackage[T1]{fontenc}
\usepackage[utf8]{inputenc}
\usepackage{lmodern}
\usepackage{authblk} 

\usepackage{microtype}
\usepackage{graphicx}
\usepackage{subcaption}
\usepackage{booktabs}
\usepackage{wrapfig}
\usepackage{algorithm}
\usepackage{algorithmic}

\usepackage{amsmath,amssymb,mathtools,amsthm}
\usepackage{tikz}
\usetikzlibrary{arrows.meta,positioning,calc}

\usepackage[round,authoryear]{natbib}
\usepackage[colorlinks=true,citecolor=blue,linkcolor=blue,urlcolor=blue]{hyperref}
\usepackage[textsize=tiny]{todonotes}

\theoremstyle{plain}
\newtheorem{theorem}{Theorem}[section]
\newtheorem{proposition}[theorem]{Proposition}
\newtheorem{lemma}[theorem]{Lemma}
\newtheorem{corollary}[theorem]{Corollary}
\theoremstyle{definition}
\newtheorem{definition}[theorem]{Definition}
\newtheorem{assumption}[theorem]{Assumption}
\theoremstyle{remark}
\newtheorem{remark}[theorem]{Remark}

\newcommand{\D}{D}

\newcommand{\Di}{D^{-i}}
\newcommand{\A}{\mathcal{A}}
\newcommand{\U}{\mathcal{U}}
\newcommand{\Prob}{\mathbb{P}}

\newcommand{\R}{\mathbb{R}}

\newcommand{\eps}{\varepsilon}
\newcommand{\del}{\delta}

\title{Certified Per-Instance Unlearning Using Individual Sensitivity Bounds}

\author[1]{Hanna Benarroch\thanks{Corresponding author: \texttt{hanna.benarroch@ens.psl.eu}}}
\author[2]{Jamal Atif}
\author[1]{Olivier Cappé}

\affil[1]{DI ENS, École normale supérieure, Université PSL, CNRS, 75005 Paris, France}
\affil[2]{CMAP, École polytechnique, Institut Polytechnique de Paris, 91120 Palaiseau, France}

\begin{document}

\maketitle


\begin{abstract}

Certified machine unlearning can be achieved via noise injection leading to differential 
privacy guarantees, where noise is calibrated to worst-case sensitivity. Such 
conservative calibration often results in performance degradation, limiting practical 
applicability.
In this work, we investigate an alternative approach based on adaptive per-instance noise 
calibration tailored to the individual contribution of each data point to the learned solution. 
This raises the following challenge: \emph{how can one establish formal unlearning guarantees 
when the mechanism depends on the specific point to be removed?}
To define individual data point sensitivities in noisy gradient dynamics, we consider 
the use of per-instance differential privacy. For ridge regression trained via 
Langevin dynamics, we derive high-probability per-instance sensitivity bounds, 
yielding certified unlearning with substantially less
noise injection.
We corroborate our theoretical findings through experiments in linear settings and 
provide further empirical evidence on the relevance of the approach in deep learning 
ones.

\end{abstract}

\section{Introduction}
\label{sec:intro}

Modern machine learning systems increasingly operate under regulatory and contractual 
constraints that may necessitate the \emph{post-hoc} removal of individual training examples,
such as those arising from the “right to be forgotten” \citep{marino2025position}.
The gold standard is to retrain the model from scratch on the dataset with the 
target example removed, but such retraining is often computationally prohibitive in practice.
Machine unlearning seeks to approximate the effect of retraining at a substantially lower cost, 
either by enforcing exact equivalence with retraining (\emph{exact unlearning}) 
\citep{cao2015towards} or by matching the retrained model only in distribution 
(\emph{approximate unlearning}) \citep{ginart2019,sekhari2021}.
We focus on approximate unlearning,
restricting our attention to \emph{certified} unlearning, where deletion
procedures are accompanied by explicit, provable guarantees.

Such guarantees are naturally connected to differential privacy (DP), which
quantifies how the output distribution of a randomized mechanism changes when a
single training point is removed.
While DP provides a natural language for reasoning about certified deletion, its
guarantees do not directly transfer to the unlearning setting.
A first mismatch concerns the notion of adjacency.
Unlearning compares training to retraining 
as if a given point had never been included, making removal-based adjacency more natural 
than substitution-based adjacency, despite the latter being sometimes used for technical 
convenience \citep[e.g.,][]{chien2025langevinunlearningnewperspective}. 
The most important difference, however, lies in the nature of the guarantee itself.
Differential privacy certifies a mechanism \emph{ex ante}, uniformly over all datasets and all 
points, whereas unlearning is intrinsically \emph{post-hoc}: it concerns a specific 
trained model, dataset, and deletion request.

First, this post-hoc nature affects how unlearning guarantees should be
defined.
While many existing approaches compare unlearning to a full retraining oracle
\citep{ginart2019,neel2020descenttodeletegradientbasedmethodsmachine,guo2020certified,
koloskova2025certifiedunlearningneuralnetworks,mu2025rewindtodeletecertifiedmachineunlearning},
we follow several recent works
\citep{lu2025systemawareunlearningalgorithmsuse,basaran2025certifiedunlearningapproachaccess,VanWaerebeke2025}
and adopt a \emph{self-referenced} notion of unlearning, which compares two executions 
of the same unlearning procedure on adjacent datasets. This isolates what is certified 
intrinsically by the unlearning mechanism.

Second, this conceptual difference affects how guarantees should be
calibrated across data points. Uniform DP guarantees, while stronger in
generality, enforce a homogeneous treatment of all data points, regardless of
their actual influence on the learned model. This regime is implicitly adopted
by some certified unlearning approaches, including Langevin-based methods
\citep{chien2025langevinunlearningnewperspective}. In contrast, we argue that
certified unlearning should exploit the fact that deletion targets a specific
dataset and a specific point. Inspired by per-instance differential privacy
\citep{wang2018perinstancedifferentialprivacy}, we introduce
\emph{certified per-instance unlearning}
(Definition~\ref{def:per_instance_unlearning}), where the guarantee is tailored
to the point being removed. This avoids unnecessary noise for low-influence
points and is consistent with recent evidence that unlearning difficulty varies
across examples
\citep{sepahvand2025leveragingperinstanceprivacymachine,thudi2024gradientslookalikesensitivity}.



To derive per-instance unlearning guarantees, we study a Langevin-based
unlearning procedure, following
\citet{chien2025langevinunlearningnewperspective}. Building on the
shifted-interpolation accountant of
\citet{bok2024shiftedinterpolationdifferentialprivacy}, we prove in
Section~\ref{sec:gdp-accounting} a general privacy-loss tracking result for the
\emph{learn--then--unlearn} setting: the learning noise is fixed, while the
additional unlearning noise is calibrated to the target $(\varepsilon,\delta)$ guarantee. In this
accounting, we introduce \emph{per-instance sensitivities},
which track the influence of the deleted point along the learning trajectory
and enable the final certificate to be tailored to that point.

We then show how these sensitivities can be controlled in ridge regression in
Section~\ref{sec:ridge}.
Although Langevin iterates have unbounded Gaussian support, making deterministic
sensitivity bounds unavailable, the linear-Gaussian structure allows us to derive
high-probability per-instance bounds and convert them into
$(\varepsilon,\delta)$ certified unlearning guarantees. To our knowledge, this
is the first certified unlearning result that calibrates the unlearning noise to
the deleted point in this setting.


In Section~\ref{sec:experiments-linear}, we show empirically 
that our per-instance calibration is not merely formal: the amount of noise
required to obtain the same unlearning certificate can vary by about \emph{a factor of four} across
deletion points.
We also compare our method against learning-time privacy and Newton-step unlearning baselines, and
provide additional evidence beyond the linear setting in
Section~\ref{sec:experiments-non-linear}. All proofs are deferred to the
appendix.


\section{Problem Setting}

We consider supervised learning with a training dataset
$\D=\{(x_j,y_j)\}_{j=1}^n$, where $x_j\in\R^p$ and
$y_j\in\R^{d}$. Given a per-example loss $\ell(\theta; x,y)$, a
parameter $\lambda>0$ and a regularizer $r(\theta)$, the regularized
empirical risk minimization (ERM) objective is defined as
\begin{equation}
\label{eq:erm_obj}
f_{\D}(\theta) \;:=\;
\sum_{j=1}^n \ell(\theta; x_j,y_j) \;+\; \lambda\, r(\theta)
\end{equation}
For a set of designated data points $D_f = \{(x_j,y_j)\}_{j\in I_f}$
that need to be forgotten, we define the retain dataset as $\D_r = \D
\setminus \D_f$ and the corresponding ERM as
\begin{equation}
\label{eq:erm_obj_retain}
f_{\D_r}(\theta) \;:=\;
\sum_{j\notin I_f} \ell(\theta; x_j,y_j) \;+\; \lambda\, r(\theta)
\end{equation}

Early works on machine unlearning focused on \emph{exact} deletion,
requiring the post-deletion model to coincide with the minimizer of
the retain ERM in~\eqref{eq:erm_obj_retain} \citep{cao2015towards} or
possibly to approximate it at reduced computational cost
\citep{izzo2021approx}.  However, except in the most basic machine
learning models, the initial training is done using gradient or
stochastic gradient methods and hence the learned parameter does not
correspond to the minimizer of the initial ERM in~\eqref{eq:erm_obj},
making exact unlearning infeasible (this will be illustrated below in
the case of ridge regression in Section~\ref{sec:exact-ridge}). This
motivates the probabilistic formulation introduced by
\citet{ginart2019}, which applies to randomized unlearning algorithms
and requires the output distribution of an unlearning algorithm to be
close to that of retraining. Our work adopts this probabilistic
viewpoint focusing on perturbed gradient dynamics.

Descent-to-Delete
\citep{neel2020descenttodeletegradientbasedmethodsmachine} performs
gradient descent steps on the retain
objective~\eqref{eq:erm_obj_retain} followed by output
perturbation. 
However, as shown by
\citet{chien2025langevinunlearningnewperspective}, this approach
leads to more conservative noise calibration than injecting noise at
each gradient step. Building on this insight, \citet{chien2025langevinunlearningnewperspective}
propose a Langevin-based unlearning procedure. 
Our work follows the same
noisy-gradient paradigm, but differs in how the noise is calibrated: their
procedure fixes a uniform noise level before the unlearning request is known,
so as to certify the removal of any possible point, whereas we calibrate the
noise to the specific point being removed.

\subsection{Defining Per-Instance Unlearning}

We first briefly recall the standard notion of differential privacy
{\citep{dwork2006tcc}}.  A randomized algorithm $\A$ is
$(\eps,\del)$-differentially private if for any pair of adjacent
(e.g. differing by one data point) datasets $\D,\D'$ and any
measurable set $S$,
\[
\Prob(\A(\D)\in S)\le e^\eps \Prob(\A(\D')\in S)+\del.
\]
Differential privacy provides a distributional closeness guarantee
with respect to the presence or absence of a single data point in the
input dataset.

In certified machine unlearning, this DP definition has been adapted
by requiring that the output of the unlearning procedure be
statistically indistinguishable from an output that does not depend on
the forgotten data. The first formalization of this idea is due to
\citet{ginart2019}. According to their definition an unlearning
algorithm $\U$ satisfies $(\eps,\del)$-reference unlearning if there
exists a \emph{reference algorithm} $\A$ such that, for any retain
dataset $D_r$, forget dataset $D_f$, and any measurable set $S$,
\[
\Prob(\U(\A(D), \,D_r, \, D_f)\in S)
\le e^\eps \Prob(\A(D_r)\in S)+\del,
\]

\citealp{VanWaerebeke2025} refer to this definition as \emph{reference
unlearning}, as the guarantee is stated relative to an external
reference algorithm $\A$, typically taken to be retraining from
scratch on the retain set $D_r$.  As emphasized in subsequent works
(e.g., \citealp{Georgiev2024}, \citealp{VanWaerebeke2025}), this
definition is unsatisfactory both conceptually and practically, since
it depends on the reference algorithm rather than on the unlearning
procedure alone.

To avoid this dependence, \citet{sekhari2021} proposed a stronger
definition that instead compares $\U$ to itself on adjacent unlearning
scenarios.  Concretely, $(\eps,\del)$-unlearning implies
$(\eps,\del)$-reference unlearning by taking
$\A'(D_r)=\U(\A(D_r),D_r,\varnothing)$. Following \cite{sekhari2021},
an unlearning algorithm $\U$ satisfies $(\eps,\del)$-unlearning if for
a given learning algorithm $\A$ and any retain dataset $D_r$, forget
dataset $D_f$, and any measurable set $S$,
\begin{equation}
\Prob(\U(\A(D), \,D_r, \, D_f)\in S)
\le e^\eps \Prob(\U(\A(D_r), \,D_r, \,\varnothing)\in S)+\del.
\end{equation}

In the above definition, the forget set $D_f$ is provided as an input,
but the guarantees must not depend on the retain
and forget datasets. Instead, we introduce \emph{per-instance unlearning},
where (1) the forget set is used to
calibrate the unlearning procedure, and (2) its privacy guarantees can depend
on the forget and retain datasets. This is close in spirit to per-instance differential privacy 
\citep{wang2018perinstancedifferentialprivacy}: for a given dataset
$D$ and a data point $z$, a randomized algorithm $\A$ is
$(\eps,\del)$-per-instance differentially private if, for any
measurable set $S$,
\[
\Prob(\A(\D)\in S)\le e^\eps \Prob(\A(\D\setminus {z})\in S)+\del.
\]



We denote the resulting per-instance unlearning
procedure by $\U_{D_f}$. 

\begin{definition}[$(\eps,\del)$-Per-Instance Unlearning]
\label{def:per_instance_unlearning}
Let $\A$ be a learning algorithm, $D$ a dataset, $D_f\subset D$ a forget
set, and $D_r=D\setminus D_f$ the retain set. For the deletion request
$(D,D_f)$, let $\U_{D_f}$ denote the unlearning procedure calibrated for
forgetting $D_f$.
We say that $\U_{D_f}$ satisfies $(\eps,\del)$-per-instance unlearning for
the request $(D,D_f)$ if, for any measurable set $S$,
\begin{equation}
\Prob\bigl(\U_{D_f}(\A(D),D_r)\in S\bigr)
\le
e^\eps
\Prob\bigl(\U_{D_f}(\A(D_r),D_r)\in S\bigr)
+\del .
\end{equation}
\end{definition}

The important point is that, once $\U_{D_f}$ has been fixed, the same
procedure is used on both sides of the comparison. In our Langevin
construction, this means that the same unlearning noise level, calibrated
for $D_f$, is used both when \emph{unlearning} from $\A(D)$ and when \emph{retraining} by applying
the same procedure directly on $D_r$.

Additionally, the key distinction from standard DP and uniform certified
unlearning is that the unlearning algorithm $\U_{D_f}$ may
return a certificate $(\eps,\del)$ whose value depends on this particular
request, rather than a single worst-case guarantee valid uniformly over all
datasets and all possible deletions. The analysis therefore aims to quantify the amount
of noise and computation required so that the post-hoc unlearned
output matches, in distribution, the outcome that would have been
obtained had the forget set never been included in the training set.

\subsection{Unlearning Algorithm}

For simplicity, we focus on the deletion of a single data point
$(x_i,y_i)$. We write $\Di := D\setminus\{(x_i,y_i)\}$ for the retain
set and denote by $\U_i$ the corresponding targeted unlearning
procedure. The retain ERM is then the leave-one-out objective
(see Appendix~\ref{app:group-unlearning} for the case of group unlearning)
\begin{equation}
\label{eq:erm_obj_minus_i}
f_{\Di}(\theta)
\;:=\;
\sum_{j\neq i} \ell(\theta; x_j,y_j) \;+\; \lambda\, r(\theta),
\end{equation}

For both learning and unlearning, we consider Gaussian-perturbed full
gradient updates that we conventionally write as discrete Langevin
updates.

\paragraph{Learning Algorithm $\A$}
Given a step size $\eta$ and an initial noise level $\sigma_{\mathrm{learn}}$, 
the discrete Langevin update is
\begin{equation}
\theta_{k+1}
= \theta_k - \eta \nabla_\theta f_{\D}(\theta_k)
   + \sqrt{2\eta\sigma_{\mathrm{learn}}^2}\,\xi_k, 
\qquad \xi_k \sim \mathcal N(0,I).
\label{eq:langevin-learn}
\end{equation}

After $T$ steps, $\theta_T\sim \A(\D)$.

\paragraph{Targeted Unlearning $\U_i$ Via Leave-One-Out Fine-Tuning}
Given an unlearning request for $(x_i,y_i)$, we run $K$ additional Langevin steps on $\Di$
with calibrated noise level $\sigma_{\mathrm{unlearn}}$:
\begin{equation}
\theta_{k+1}
= \theta_k - \eta \nabla_\theta f_{\Di}(\theta_k)
   + \sqrt{2\eta\sigma_{\mathrm{unlearn}}^2}\,\xi_k, 
\qquad \xi_k  \sim\mathcal N(0,I).
\label{eq:langevin-unlearn}
\end{equation}

The output is $\theta_{T+K}\sim \U_i(\A(\D),\,\Di)$.

A key insight is that $\sigma_{\mathrm{learn}}$ cannot be taken to be
exactly zero, so that the learning algorithm $\A$ is indeed randomized
and induces a non-degenerate output distribution $\A(\D)$. As will be
detailed below, this randomness in both the training and unlearning is
necessary to certify the validity of the whole process according to
Definition~\ref{def:per_instance_unlearning}. However it is important
to keep in mind that unlearning according to the above scheme can only
be useful if (1) $\sigma_{\mathrm{learn}}$ can be taken very small, so
as to avoid unnecessary loss of performance in the absence of
unlearning request; (2) $K \ll T$, meaning that unlearning is indeed
much less computationally demanding than full retraining.

For simplicity, in the main paper we only consider the case where
unlearning is done using gradient steps corresponding to the full
leave-one-out objective. In Appendix~\ref{app:mini-batch}, however, we
show that our results can be extended to cyclic mini-batch training
and unlearning. For large datasets the use of mini batch training is
known to produce reliable performance at reduced computational cost,
which stays true for unlearning. This approach also allows to obtain
non-vacuous unlearning guarantees even when using less than one full
epoch during unlearning.


\section{Certified Per-Instance Unlearning}


We track the privacy loss along the optimization trajectory generated by the learning and
unlearning algorithms $\A$ and $\U_i$ (Proposition~\ref{prop:gdp-tracking}).
This analysis explicitly reveals how privacy guarantees depend on individual data points, as captured by their
\emph{per-instance} sensitivities.
Our key contribution is Theorem~\ref{thm:ridge-gdp-main}, which leverages a
high-probability control of these sensitivities to derive
$(\varepsilon,\delta)$-per-instance unlearning guarantees (in the sense of
Definition~\ref{def:per_instance_unlearning}) for ridge regression.

\subsection{Privacy Loss Tracking With Per-Instance Sensitivities}
\label{sec:gdp-accounting}
We rely on the shifted interpolation framework of
\citet{bok2024shiftedinterpolationdifferentialprivacy} to track
privacy loss along the optimization trajectory in terms of
$\mu$-Gaussian Differential Privacy (GDP) \citep{dong2019gdp}. The
interest of this approach is that the conversion to
$(\varepsilon,\delta)$ guarantees can be done sharply (using the exact
privacy curve of the Gaussian mechanism \citep{dong2019gdp}), leading
to improved guarantees when compared to earlier analyses of Langevin
dynamics based on Rényi differential privacy
\citep{chien2025langevinunlearningnewperspective,chourasia2021,ryffel2022}. All
these analyses exploit a form of \emph{privacy amplification by
iteration} \citep{Feldman_2018}, whereby the repeated application of
noisy contractive updates yields strictly stronger guarantees than
those derived from the use of Gaussian composition results.  This
analysis crucially relies on the contraction of the deterministic
gradient map associated with Langevin dynamics. We formalize this
requirement in the following assumption.

\begin{assumption}
  \label{ass:contraction}
  Both objectives $f_{\D}$ and $f_{\Di}$ are differentiable,
$m$-strongly convex and $L$-smooth.
  Under these assumptions, the gradient map
  $\Phi(\theta)=\theta-\eta\nabla f(\theta)$
  is contractive for any step size $0<\eta<2/L$, with contraction factor
  $
  c=\max\{\,|1-\eta m|,\;|1-\eta L|\,\}
  $
  \citep{bubeck2015convex,nesterov2004introductory}.
  To ensure a strict contraction ($c<1$), we fix $\eta=1/L$, yielding $c=1-\eta m<1$.
\end{assumption}

Following \citep{bok2024shiftedinterpolationdifferentialprivacy}, we work within 
the Gaussian Differential Privacy (GDP) framework.
For $\mu$-GDP guarantees, we denote by $\varepsilon_{\mathrm{GDP}}(\mu,\delta)$
the value of $\varepsilon$ such that $(\varepsilon,\delta)$-differential
privacy holds.
Formal definitions of GDP and the conversion from $\mu$ to
$(\varepsilon,\delta)$ are given in Appendix~\ref{app:gdp}.

\paragraph{Compared trajectories for privacy analysis.}
The analyzed privacy gap is measured between two parameter trajectories
$(\theta_k)_{k\ge0}$ and $(\theta'_k)_{k\ge0}$  (see Figure~\ref{fig:learning-unlearning}) 
initialized at the same deterministic point $\theta_0=\theta'_0$.
The (real) trajectory $(\theta_k)$ follows learning algorithm $\A$ on the full dataset
$\D$ for $T$ iterations with noise level $\sigma_{\mathrm{learn}}>0$, and
$K$ unlearning steps via $\U_i$ on the retain set $\Di$ with noise level
$\sigma_{\mathrm{unlearn}}>0$.
The second trajectory $(\theta'_k)$ corresponds to the (theoretical) retraining path, 
obtained by running the same learning and unlearning procedures directly on $\Di$.

\begin{figure}[!htbp]
\centering
\begin{tikzpicture}[>=Stealth, x=0.8cm, y=0.7cm, line cap=round, line join=round, thick]

  \draw[->] (0,0) -- (8.2,0) node[below] {$k$};
  \foreach \x/\lab in {0/{0}, 4.0/{$T$}, 7.0/{$T+K$}}{
    \draw (\x,0.08)--(\x,-0.08) node[below=3pt] {\lab};
  }
  \draw[densely dashed] (4.0,-0.2)--(4.0,3.8);
  \draw[densely dashed] (7.0,-0.2)--(7.0,3.8);

  \coordinate (start) at (0,1.6);
  \node[anchor=north east] at ($(start)+(-0.05,-0.02)$)
     {$\theta_0=\theta'_0$};

  \path (start)            coordinate (u0)
        (1.0,2.15)         coordinate (u1)
        (2.0,2.85)         coordinate (u2)
        (3.0,2.55)         coordinate (u3)
        (4.0,3.20)         coordinate (u4)
        (5.0,2.45)         coordinate (u5)
        (6.0,2.35)         coordinate (u6)
        (7.0,2.10)         coordinate (u7);
  \draw[black] (u0)--(u1)--(u2)--(u3)--(u4)--(u5)--(u6)--(u7);
  \node[anchor=south east] at ($(u4)+(-0.02,0.02)$) {$\theta_T$};
  \node[anchor=south west] at ($(u7)+(0.06,0.02)$) {$\theta_{T+K}$};

  \path (start)            coordinate (l0)
        (1.0,1.40)         coordinate (l1)
        (2.0,1.10)         coordinate (l2)
        (3.0,1.05)         coordinate (l3)
        (4.0,0.70)         coordinate (l4)
        (5.0,0.90)         coordinate (l5)
        (6.0,1.25)         coordinate (l6)
        (7.0,1.50)         coordinate (l7);
  \draw[black, dashed] (l0)--(l1)--(l2)--(l3)--(l4)--(l5)--(l6)--(l7);
  \node[anchor=north east] at ($(l4)+(-0.06,0.08)$) {$\theta'_T$};
  \node[anchor=north west] at ($(l7)+(-0.06,-0.02)$) {$\theta'_{T+K}$};

  \draw[<->] (2.2,2.7) -- (2.2,1.15);
  \node[right] at (2.15,1.6) {$\Delta_{i,k}>0$};

  \draw[<->] (4.7,2.6) -- (4.7,0.9);
  \node at (5.6,1.7) {$\Delta_{i,k}=0$};

\end{tikzpicture}

\caption{
Learn--unlearn vs.\ retraining trajectories. The solid path $(\theta_k)$ corresponds to 
training on the full dataset $\D$ followed by unlearning, while the dashed path $(\theta'_k)$ 
corresponds to (fictitiously) training and retraining on the retain dataset $\Di$.
}
\label{fig:learning-unlearning}
\end{figure}

\begin{proposition}[GDP accounting of $\U_i$ for bounded 
  per-instance sensitivities]
\label{prop:gdp-tracking}
Consider the two parameter trajectories $(\theta_k)_{k\ge0}$ and
$(\theta'_k)_{k\ge0}$ defined as above.
Suppose that the contraction condition in Assumption~\ref{ass:contraction}
holds with factor $c<1$.     
For an unlearning request concerning $(x_i,y_i)$, define the per-instance sensitivity 
at time $k$ as
\[ \Delta_{i, k} = \|
\eta\bigl(\nabla f_{\Di}(\theta_k)-\nabla f_{\D}(\theta_k)\bigr) \|
= \eta \| \nabla \ell(\theta_k; x_i, y_i)\|
\]
and assume that there exists a deterministic sequence $\{s_{i,k}\}_{k=0}^{T-1}$  
with $s_{i,k}>0$ for all $k=0,\dots,T-1$ (which does not depend on the stochastic 
trajectory of $\theta_k$) such that
$ \forall k< T,\, \Delta_{i, k} 
\le
s_{i,k}.
$
Then, the final outputs
($\theta_{T+K},\theta'_{T+K}$)
satisfy $\mu_i$-GDP where $\mu_i$ is given by
\[
\mu_i
=
\frac{
\displaystyle\sum_{k=0}^{T-1}
c^{T+K-1-k}\,s_{i,k}
}{
\displaystyle\sqrt{
\sum_{k=0}^{T-1} 2\eta\sigma_{\mathrm{learn}}^2\,c^{\,2(T+K-1-k)}
+
\sum_{k=T}^{T+K-1} 2\eta\sigma_{\mathrm{unlearn}}^2\,c^{\,2(T+K-1-k)}
}
}
\]
Additionally, for any $\delta\in(0,1)$, the unlearning mechanism $\U_i$ achieves
$(\varepsilon_{\mathrm{GDP}}(\mu_i,\delta),\delta)$ per-instance 
unlearning (in the sense of Definition~\ref{def:per_instance_unlearning}).
\end{proposition}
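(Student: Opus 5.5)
We will use the shifted-interpolation (privacy amplification by iteration) argument of \citet{bok2024shiftedinterpolationdifferentialprivacy}, phrased in the GDP framework. The two trajectories differ only through their gradient maps during the first $T$ steps. Write $\Phi(\theta)=\theta-\eta\nabla f_{\Di}(\theta)$ and $\Phi_{\D}(\theta)=\theta-\eta\nabla f_{\D}(\theta)$. Then the real update is $\theta_{k+1}=\Phi(\theta_k)+u_k+\sigma_k\xi_k$ with $u_k=\eta(\nabla f_{\Di}(\theta_k)-\nabla f_{\D}(\theta_k))$ and $\|u_k\|=\Delta_{i,k}\le s_{i,k}$ for $k<T$. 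Here $\sigma_k^2=2\eta\sigma_{\mathrm{learn}}^2$ for $k<T$, $\sigma_k^2=2\eta\sigma_{\mathrm{unlearn}}^2$ for $k\ge T$, and $u_k=0$ for $k\ge T$. The retraining trajectory is $\theta'_{k+1}=\Phi(\theta'_k)+\sigma_k\xi'_k$. Both trajectories therefore apply the same $c$-contractive map $\Phi$ (Assumption~\ref{ass:contraction}) plus Gaussian noise, and differ by the bounded perturbations $u_k$.

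\textbf{Step 1: shifted interpolation.} Introduce auxiliary sequences $\tilde\theta_k$ with $\tilde\theta_0=\theta_0$ and $\tilde\theta_{k+1}=\Phi(\tilde\theta_k)+\lambda_{k}(\ldots)+\sigma_k\xi_k$. At each step, a fraction $\lambda_k\in[0,1]$ of the current gap $\|\theta_k-\tilde\theta_k\|$ is shifted from one trajectory to the other, so that $\tilde\theta_{T+K}=\theta'_{T+K}$ in law at the end. The one-step tool is the Gaussian shift lemma: $\mathcal N(a,\sigma^2 I)$ versus $\mathcal N(b,\sigma^2 I)$ is $\|a-b\|/\sigma$-GDP. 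This is combined with contraction, which gives $\|\Phi(x)-\Phi(y)\|\le c\|x-y\|$, and with GDP composition, under which $\mu$'s add in quadrature. This yields
\[
\mu^2\le\sum_{k}\frac{a_k^2}{\sigma_k^2},
\]
where $a_k$ is the amount shifted at step $k$. The budget constraint is that the accumulated gap, propagated by the contraction, is $z_{k+1}=c\,z_k+s_{i,k}\mathbf 1_{k<T}-a_k$ with $z_{T+K}=0$.

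\textbf{Step 2: optimize the shifts.} Unrolling the recursion, the constraint becomes $\sum_{k}c^{T+K-1-k}a_k=\sum_{k<T}c^{T+K-1-k}s_{i,k}=:S$, a linear constraint. Minimizing $\sum_k a_k^2/\sigma_k^2$ subject to it is a Cauchy--Schwarz problem, with optimal value
\[
\frac{S^2}{\sum_k\sigma_k^2c^{2(T+K-1-k)}}.
\]
Taking the square root gives exactly the stated $\mu_i$, with the two noise sums. Two points need checking. First, the optimal $a_k\propto\sigma_k^2c^{T+K-1-k}$ must be nonnegative. Second, the gaps $z_k$ must stay nonnegative, or the shifts must be admissible. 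If $z_k$ would go negative early, I would use the triangle-inequality version of the shift lemma (shifting toward the other trajectory with $\|\cdot\|$ bounds), which handles signed/vector gaps. This is precisely the construction of Bok et al.

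\textbf{Main obstacle.} The sensitivity $\Delta_{i,k}$ depends on the random $\theta_k$, whereas the shift lemma needs a deterministic bound. This is why the statement requires the deterministic dominating sequence $s_{i,k}$, and the step I would check most carefully is that the conditional argument goes through. At each step, conditional on the past, the perturbation has norm at most $s_{i,k}$. The GDP tradeoff bound for the adaptive composition then holds conditionally and hence unconditionally, since the per-step $\mu$ does not depend on the realization. Finally, $\mu_i$-GDP of the pair $(\theta_{T+K},\theta'_{T+K})$ means the laws of $\U_i(\A(\D),\Di)$ and $\U_i(\A(\Di),\Di)$ are $\mu_i$-GDP indistinguishable. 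The standard GDP-to-$(\varepsilon,\delta)$ conversion (Appendix~\ref{app:gdp}) then gives $(\varepsilon_{\mathrm{GDP}}(\mu_i,\delta),\delta)$ per-instance unlearning in the sense of Definition~\ref{def:per_instance_unlearning}, since the same $\sigma_{\mathrm{unlearn}}$ is used on both sides.
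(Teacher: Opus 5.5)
Your route is the paper's. $\tilde\theta$ shares the real trajectory's noise, and the real step is written as $\Phi(\theta_k)+u_k$ with $u_k$ evaluated at $\theta_k$. In the paper's notation, $\phi'=\Phi$ and $\phi(\theta_k)=\Phi(\theta_k)+u_k$, so this is exactly the modified decomposition $\|\phi(\theta_k)-\phi'(\tilde\theta_k)\|\le\|\phi(\theta_k)-\phi'(\theta_k)\|+\|\phi'(\theta_k)-\phi'(\tilde\theta_k)\|$ that the paper insists on, since $s_{i,k}$ only controls the sensitivity along $\theta_k$. The rest (one-step Gaussian shifts, composition, telescoping, Cauchy--Schwarz) also matches.

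One slip: the interpolation starts at $\tilde\theta_0=\theta'_0$ and ends at $\tilde\theta_{T+K}=\theta_{T+K}$ almost surely (via $\lambda_{T+K}=1$), and the tradeoff is tracked between $\tilde\theta_k$ and $\theta'_k$. It does not end at $\theta'_{T+K}$ \emph{in law}. The orientation matters: the mirrored construction would need sensitivity bounds along $\theta'_k$ or $\tilde\theta_k$, which the hypothesis does not give.

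The genuine gap is feasibility. The argument certifies $\bigl(\sum_k a_{k+1}^2/(2\eta\sigma_k^2)\bigr)^{1/2}$ only for shifts with $0\le a_{k+1}\le cz_k+s_{i,k}\mathbf 1_{\{k<T\}}$, i.e.\ $\lambda_{k+1}\in[0,1]$ and $z_{k+1}\ge 0$. The Cauchy--Schwarz value is merely a lower bound for this constrained problem.

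Your remedy of letting $z_k$ go negative does not work, because $z_k$ bounds a norm. For any $\lambda_{k+1}\ge 0$ one has $\|\theta_{k+1}-\tilde\theta_{k+1}\|=|1-\lambda_{k+1}|\,\|\phi(\theta_k)-\phi'(\tilde\theta_k)\|$, while the step costs $\lambda_{k+1}\|\phi(\theta_k)-\phi'(\tilde\theta_k)\|$. An overshoot therefore becomes a new \emph{positive} gap, paid for now and again later, rather than a credit against future sensitivities. Exploiting cancellations would require directional information on $u_k$, which the hypothesis $\|u_k\|\le s_{i,k}$ does not provide; the triangle inequality is exactly what destroys the sign you would need.

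Infeasibility does occur. Take $\sigma_{\mathrm{learn}}=\sigma_{\mathrm{unlearn}}$, $T=2$, $K=1$, $s_{i,0}=1$, $s_{i,1}=3$ and $c\to1$. The candidate gives $a_1\to 4/3>s_{i,0}$, and the constrained minimum of $\sum_k a_{k+1}^2$ tends to $11/2$ instead of the Cauchy--Schwarz value $16/3$.

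The paper does not close this gap either. It uses the Cauchy--Schwarz allocation only when feasible and checks $z_{k+1}\ge0$ numerically for each deletion. Your proof, like the paper's, therefore yields the stated $\mu_i$ only under this feasibility condition, and otherwise the larger value of the constrained problem. It should be stated that way rather than with a claimed fix.
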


For the deletion of groups of points, the associated sensitivity control 
and privacy accounting are presented in Appendix~\ref{app:group-unlearning}.

\paragraph{Limits of deterministic sensitivity bounds.}



Proposition~\ref{prop:gdp-tracking} requires deterministic sensitivity bounds on $\Delta_{i,k}$,
which in turn demand either bounded loss gradients or Lipschitz iterates, assumptions that
exclude standard linear regression. Indeed, Gaussian noise makes the support of $\theta_k$ unbounded,
so deterministic bounds on $\Delta_{i,k}$ are not available.
Theorem~\ref{thm:ridge-gdp-main} overcomes this by exploiting the Gaussian structure of
ridge regression gradients to derive sharp \emph{high-probability} sensitivity bounds,
enabling data-dependent calibration of the unlearning noise.

\subsection{High-Probability Sensitivity Bounds for Ridge Regression}
\label{sec:ridge}

In this section, we consider multi-output linear ridge regression with inputs
$X\in\R^{n\times p}$, outputs $Y\in\R^{n\times d}$, and parameter
$\theta\in\R^{p\times d}$ with deterministic initialization $\theta_0$. The ERM objective 
and sensitivity $\Delta_{i, k}$ become respectively
\[
f_{\D}(\theta)
=
\sum_{j=1}^n \tfrac12\|x_j^\top\theta-y_j\|_2^2
+\tfrac{\lambda}{2}\|\theta\|_F^2, \qquad \Delta_{i, k} = \eta \|x_i\|_2 \|x_i^\top\theta_k - y_i\|_2.
\]

\begin{theorem}[Certified per-instance unlearning for ridge regression]
\label{thm:ridge-gdp-main}
Consider multi-output ridge regression trained with algorithm $\mathcal{A}$,
learning noise $\sigma_{\mathrm{learn}}>0$, for $T$ iterations, 
with contraction factor $c<1$ (Assumption~\ref{ass:contraction}). 
Let $(x_i,y_i)$ be the point to unlearn,
$(\varepsilon,\delta)$ the target per-instance unlearning guarantee,
$K\ge 1$ the unlearning horizon, and choose the probabilities 
$(\delta_{\mathrm{m}}, \delta_{\mathrm{s}})$, such that 
$\delta_{\mathrm{s}}+\delta_{\mathrm{m}}=\delta$.

\paragraph{(i) Bounding $\Delta_{i,k}$ with high-probability.}
Define $M:=I_p-\eta(X^\top X+\lambda I_p)$, $B:=X^\top Y$, and for each $k\ge 1$:
\[
  u_{i,k} := x_i^\top\!\Bigl(M^k\theta_0+\eta\sum_{j=0}^{k-1}M^j B\Bigr)-y_i,
  \qquad
  v_{i,k} := 2\eta\sigma_{\mathrm{learn}}^2\sum_{j=0}^{k-1}\|(M^j)^\top x_i\|_2^2.
\]
At time $k$, $x_i^\top\theta_k - y_i \sim \mathcal{N}(u_{i,k},\, v_{i,k}I_d)$,
so $\|x_i^\top\theta_k - y_i\|_2^2/v_{i,k} \sim \chi'^2_d\bigl(\|u_{i,k}\|_2^2/v_{i,k}\bigr)$.
Let $q_{i,k}(\cdot)$ be the quantile function of this distribution and set the high-probability sensitivity bound
\[
  s_{i,0}^{\delta_{\mathrm{s}}}
  := \eta \|x_i\|_2 \|x_i^\top\theta_0 - y_i\|_2, 
  \quad \text{and for } k\ge 1, \quad
  s_{i,k}^{\delta_{\mathrm{s}}}
  := \eta\,\|x_i\|_2\sqrt{v_{i,k}\, q_{i,k}\!\left(1-\tfrac{\delta_{\mathrm{s}}}{T}\right)}.
  \]
Then, 
$\;\Prob\,\bigl(\forall k< T,\;\Delta_{i,k}\le s_{i,k}^{\delta_{\mathrm{s}}}\bigr)\ge 1-\delta_{\mathrm{s}}$.
\paragraph{(ii) Calibrating the unlearning noise.} Define
\(
  \sigma_{\mathrm{unlearn}}
  :=
  \inf\Bigl\{
  \sigma\ge 0:
  \varepsilon_{\mathrm{GDP}}(\mu_i(\sigma),\delta_{\mathrm{m}})
  \le \varepsilon
  \Bigr\},
\) where
\[
  \mu_i(\sigma)
  :=
  \frac{\displaystyle\sum_{k=0}^{T-1} c^{T+K-1-k}
  s_{i,k}^{\delta_{\mathrm{s}}}}
  {\displaystyle\sqrt{
  \sum_{k=0}^{T-1} 2\eta\sigma_{\mathrm{learn}}^2 c^{2(T+K-1-k)}
  +
  \sum_{k=T}^{T+K-1} 2\eta\sigma^2 c^{2(T+K-1-k)}
  }}.
\]
Then the unlearning mechanism $\U_i$, run with noise level
$\sigma_{\mathrm{unlearn}}$, satisfies
$(\varepsilon,\delta)$ per-instance unlearning 
(in the sense of Definition~\ref{def:per_instance_unlearning}).
\end{theorem}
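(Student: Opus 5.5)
Part (i) follows from the linear-Gaussian structure of the learning iterates, and part (ii) from Proposition~\ref{prop:gdp-tracking} applied on the high-probability event, with the failure probability absorbed into $\delta$.

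\emph{Part (i).} For ridge regression, $\nabla f_{\D}(\theta)=(X^\top X+\lambda I_p)\theta - B$. The Langevin update is therefore affine:
\[
\theta_{k+1}=M\theta_k+\eta B+\sqrt{2\eta\sigma_{\mathrm{learn}}^2}\,\xi_k .
\]
Unrolling gives
\[
\theta_k=M^k\theta_0+\eta\sum_{j<k}M^jB+\sqrt{2\eta\sigma_{\mathrm{learn}}^2}\sum_{j<k}M^{k-1-j}\xi_j ,
\]
a Gaussian matrix with independent $\mathcal N(0,I_p)$ columns driving each of the $d$ outputs. Hence $x_i^\top\theta_k-y_i$ is Gaussian in $\R^{1\times d}$ with mean $u_{i,k}$. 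Its covariance is $2\eta\sigma_{\mathrm{learn}}^2\sum_j\|(M^{j})^\top x_i\|^2\, I_d=v_{i,k}I_d$, because the $d$ columns share the same isotropic noise structure.

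Normalizing by $\sqrt{v_{i,k}}$ gives the noncentral $\chi'^2_d$ law with the stated noncentrality. Then $\Prob(\Delta_{i,k}>s^{\delta_{\mathrm s}}_{i,k})=\delta_{\mathrm s}/T$ for $k\ge1$, by the definition of the quantile (continuity of the distribution). At $k=0$ the bound holds deterministically, since $\theta_0$ is fixed. A union bound over $k=1,\dots,T-1$ gives failure probability at most $(T-1)\delta_{\mathrm s}/T\le\delta_{\mathrm s}$. Note that $v_{i,k}>0$ requires $x_i\neq0$; if $x_i=0$, then $\Delta_{i,k}=0$ and the result is trivial.

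\emph{Part (ii).} The main obstacle is that Proposition~\ref{prop:gdp-tracking} demands a \emph{deterministic} bound, whereas we only have one with high probability. I will resolve this by a coupling/truncation argument.

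Let $E$ be the good event $\{\forall k<T,\ \Delta_{i,k}\le s^{\delta_{\mathrm s}}_{i,k}\}$. Define an auxiliary trajectory $\tilde\theta_k$ run on $D$, whose update uses the clipped difference: the gradient step on $\Di$ plus a correction $g_k$, where $g_k$ equals the true per-example term $\eta\nabla\ell(\tilde\theta_k;x_i,y_i)$ projected onto the ball of radius $s^{\delta_{\mathrm s}}_{i,k}$. The auxiliary map $\theta\mapsto \Phi_{\Di}(\theta)-\mathrm{clip}(\cdot)$ need not be contractive. However, the shifted-interpolation argument of \citet{bok2024shiftedinterpolationdifferentialprivacy} only uses contraction of the common map $\Phi_{\Di}$ together with bounded additive perturbations $\|g_k\|\le s_{i,k}$. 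I expect this to be the delicate check. Granting it, re-running the proof of Proposition~\ref{prop:gdp-tracking} yields that the outputs $(\tilde\theta_{T+K},\theta'_{T+K})$ are $\mu_i(\sigma_{\mathrm{unlearn}})$-GDP. Consequently they are $(\varepsilon_{\mathrm{GDP}}(\mu_i,\delta_{\mathrm m}),\delta_{\mathrm m})$-indistinguishable, and by the choice of $\sigma_{\mathrm{unlearn}}$ this is at most $(\varepsilon,\delta_{\mathrm m})$.

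For the infimum to be attained, I use that $\mu_i(\sigma)$ is continuous and decreasing in $\sigma$ and that $\varepsilon_{\mathrm{GDP}}$ is continuous and increasing in $\mu$. Hence the set in the definition of $\sigma_{\mathrm{unlearn}}$ is closed, and the infimum, if finite, belongs to it.

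Finally, couple the two trajectories using the same noise. On $E$, clipping is inactive at every step, so $\tilde\theta_{T+K}=\theta_{T+K}$. For any measurable $S$, part (i) then gives
\[
\Prob(\theta_{T+K}\in S)\le\Prob(\tilde\theta_{T+K}\in S)+\Prob(E^c)\le e^{\varepsilon}\Prob(\theta'_{T+K}\in S)+\delta_{\mathrm m}+\delta_{\mathrm s}.
\]
This is exactly $(\varepsilon,\delta)$ per-instance unlearning in the sense of Definition~\ref{def:per_instance_unlearning}.
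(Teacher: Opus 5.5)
Your proposal is correct and follows essentially the same route as the paper. For (i), you use Gaussian propagation of the affine Langevin iterates, a noncentral $\chi^2_d$ quantile and a union bound; for (ii), you use a virtual clipped mechanism that coincides with the real one on the good event, apply Proposition~\ref{prop:gdp-tracking} to it at level $\delta_{\mathrm m}$, and absorb $\delta_{\mathrm s}$ into $\delta$. Your explicit check that only contractivity of the retain-set map $\Phi_{\Di}$ is needed (so the non-contractive clipped map is harmless) is a step the paper applies silently, and it is justified by the decomposition in Remark~\ref{rem:shifted-step}.
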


For the deterministic cyclic minibatch extension, 
we defer the construction and the corresponding privacy accounting to 
Appendix~\ref{app:mini-batch}.

\subsection{Comparison with Exact Unlearning}
\label{sec:exact-ridge}
It is well known that exact unlearning is possible in ridge regression.
Indeed, letting
$
A := X^\top X+\lambda I_p
$
and
$
H := X A^{-1} X^\top,
$
the leave-one-out (LOO) prediction admits a closed form following from the Sherman--Morrison formula
\citep{golub1979generalized,hastie2009elements},
$
\hat y^{(-i)}_i
=
\hat y_i
-
\frac{\hat y_i-y_i}{1-(H)_{ii}},
\text{where }\ \hat y = H y.
$
However, this closed-form expression applies only at the exact penalized least-squares solution.
As a result, the LOO formula does not solve the unlearning problem when
using a limited number of learning iterations $T$.
From a computational standpoint, it also requires forming the hat
matrix and explicitly inverting the normal matrix $A$, whose cost and numerical stability degrade rapidly 
with the dimension~$p$, especially in weakly regularized regimes. While our approach also involves storing the 
hat matrix, it avoids any matrix inversion and operates directly through iterative updates.

In practice, using the LOO formula at finite-time introduces a discrepancy with exact unlearning that 
must be safeguarded by a DP mechanism to ensure valid privacy
guarantees. This is exactly the method introduced by
\citet{guo2020certified}, which uses a generalization of the LOO
formula in the non-quadratic case, based on the use of a (second-order) Newton correction.  In
the next section, we implement this baseline at finite training time
and compare it with our method.

\section{Experiments}

\subsection{Linear Setting}
\label{sec:experiments-linear}

In this section, we implement the approach analyzed in
Theorem~\ref{thm:ridge-gdp-main} to perform per-instance
Langevin unlearning for ridge regression, which we refer to as PILU (see also Algorithm~\ref{alg:full-unlearning} in Appendix~\ref{app:algo},
which describes the complete learning and unlearning pipeline).


We consider dense depth regression on NYU-Depth
V2~\citep{silberman2012indoor} with a fixed-feature pipeline based on
DINOv2 patch and class tokens~\citep{oquab2024dinov2}. Each RGB image
is encoded as a $16\times16$ grid of $1536$-dimensional patch embeddings, 
yielding $256$ patch-level feature vectors per image. We
keep $150$ train images and $25$ test images, and flatten the
training patches into a dataset of size $n=38\,400$. A ridge regression
($\lambda=10^{-4}$)
with an additional bias term is trained on top, so the effective
optimization dimension is $p=1537$ and the output dimension is $d=1$.
This places all unlearning operations in a linear, strongly convex
regime where Theorem~\ref{thm:ridge-gdp-main} applies. A fixed Gaussian noise
$\sigma_{\mathrm{learn}}=0.01$ is injected during training, while
$\sigma_{\mathrm{unlearn}}$ is calibrated per point to meet a target
$(\varepsilon,\delta)$ guarantee with $\delta=1/n$.
All values reported in the figures are expressed in meters.

Rather than unlearning every training patch, we select five
representative indices spanning distinct sensitivity regimes, ordered
by the per-sample gradient norm $\|\nabla_\theta
\ell(\theta_T;x_i,y_i)\|$ at convergence
(Figure~\ref{fig:nyuv2-selected-indices}). For each selected point,
Theorem~\ref{thm:ridge-gdp-main} provides a deterministic
high-probability upper bound $\{s_{i,k}^{\delta_{\mathrm s}}\}_{k<
  T}$ on the empirical sensitivity trajectory $\{\Delta_{i,k}\}_{k<
  T}$. As shown in Figure~\ref{fig:nyuv2-hp-sensitivity-map}, both these
  quantities vary substantially across points and iterations, highlighting that a single
uniform sensitivity bound would be overly conservative.

\begin{figure}[!htbp]
  \centering

  \begin{subfigure}[t]{0.49\linewidth}
    \centering
    \includegraphics[width=\linewidth]{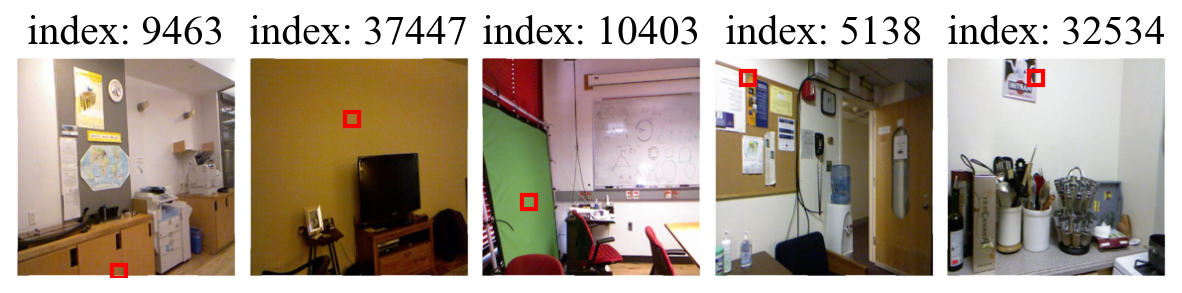}
    \caption{NYU-Depth}
    \label{fig:nyuv2-selected-indices}
  \end{subfigure}
  \hfill
  \begin{subfigure}[t]{0.49\linewidth}
    \centering
    \includegraphics[width=\linewidth]{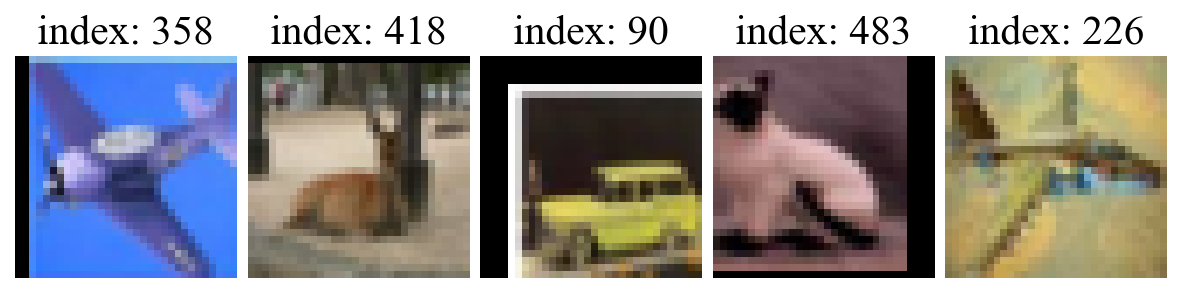}
    \caption{CIFAR-10}
    \label{fig:representative-images}
  \end{subfigure}

  \caption{Representative samples selected for unlearning on NYU-Depth (left) and CIFAR-10 (right), 
  ordered from left to right by increasing difficulty.}
  \label{fig:representative-samples}
\end{figure}

Figure~\ref{fig:nyuv2-privacy-utility} shows the privacy--utility trade-off of 
PILU. As expected, a stronger unlearning guarantee, i.e., smaller $\varepsilon$, 
requires more noise and increases the final test error. Crucially, at any fixed
$\varepsilon$, post-unlearning RMSE varies substantially across points, 
confirming that unlearning guarantees are inherently point-dependent (see 
Figure~\ref{fig:sigma-per-instance} in Appendix~\ref{app:fig-privacy-utility} for 
the corresponding values of $\sigma_{\mathrm{unlearn}}$). 
The cyclic minibatch extension of PILU with batch size $b=512$ achieves a similar
privacy--utility trade-off with an unlearning budget of only slightly more than
one pass over the dataset, highlighting the computational advantage of
minibatching; see Figure~\ref{fig:nyuv2-minibatch-tradeoff-b512} in
Appendix~\ref{app:mini-batch}.

\begin{figure}[!htbp]
    \centering
    \begin{subfigure}[t]{0.47\linewidth}
        \centering
        \includegraphics[width=\linewidth]{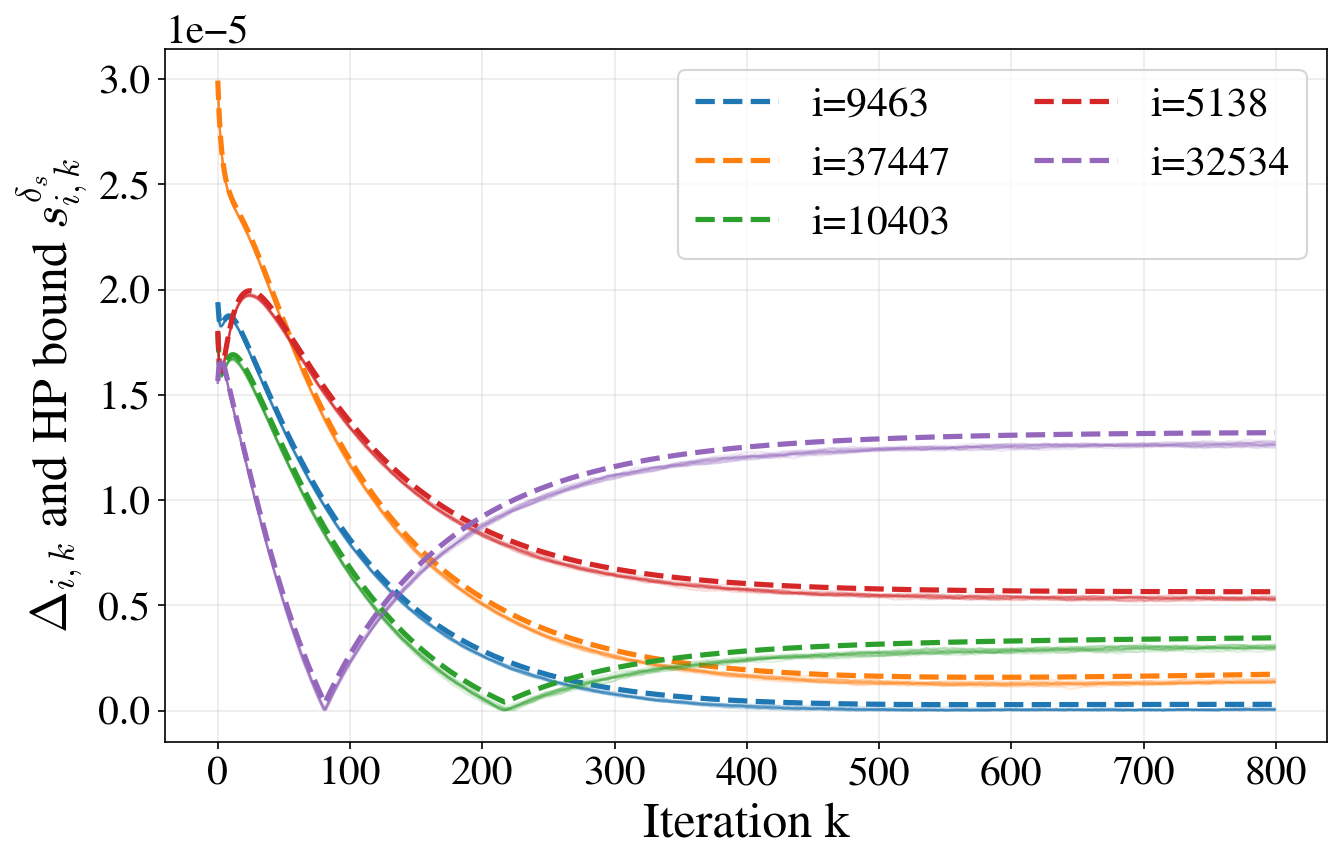}
        \caption{Empirical trajectories $\Delta_{i,k}$ (solid lines) and the corresponding high-probability 
        sensitivity bounds $s_{i,k}^{\delta_s}$ (dashed lines) during training. }
        \label{fig:nyuv2-hp-sensitivity-map}
    \end{subfigure}
    \hfill
    \begin{subfigure}[t]{0.49\linewidth}
        \centering
        \includegraphics[width=\linewidth]{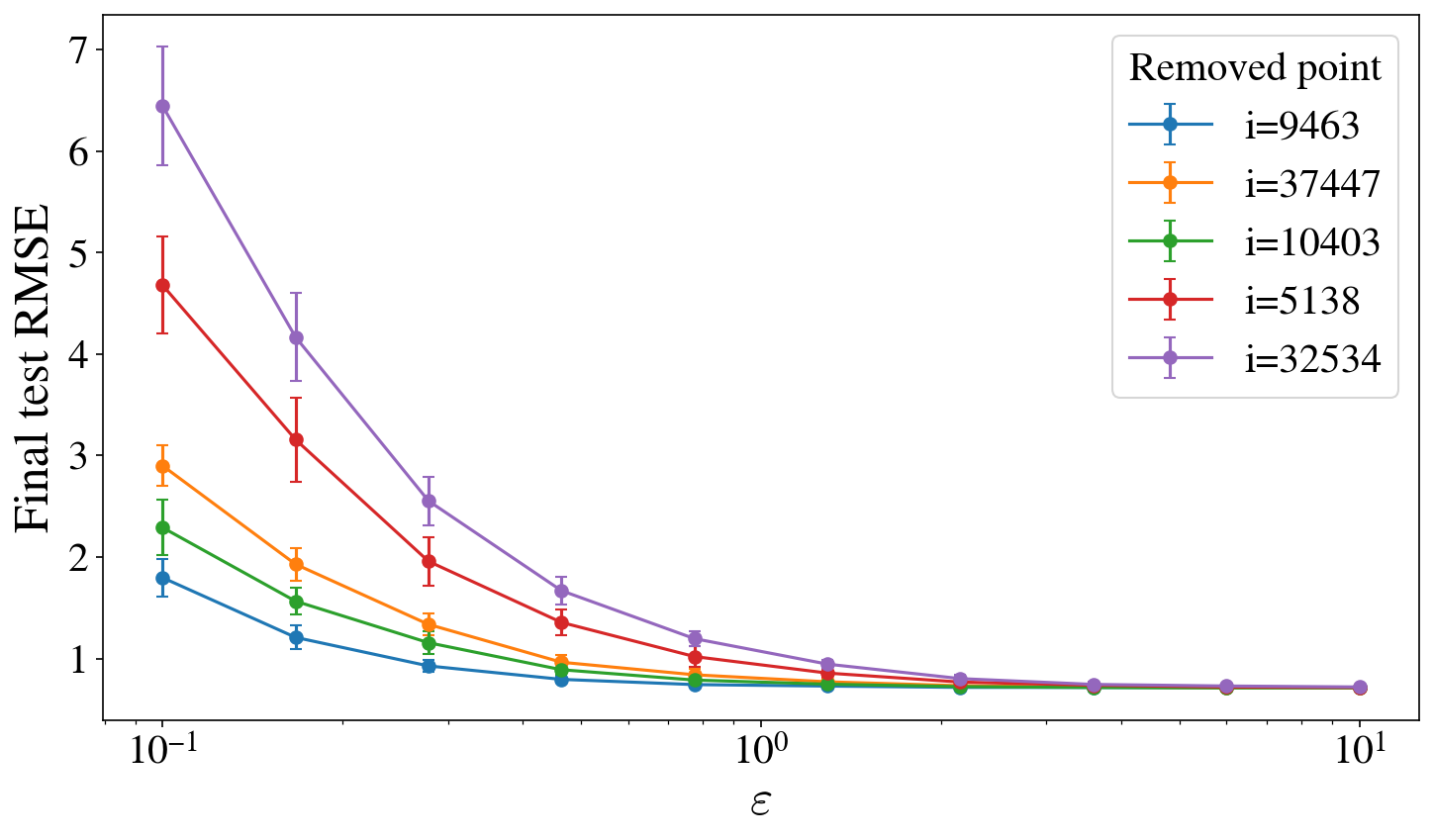}
        \caption{Privacy--utility trade-off of PILU ($T=800, K=80$). 
        Each curve corresponds to a distinct removed training point.}
        \label{fig:nyuv2-privacy-utility}
    \end{subfigure}
    \caption{Per-instance heterogeneity for five representative data points of NYU-Depth.}
    \label{fig:nyuv2-side-by-side}
\end{figure}

Figure~\ref{fig:ablation-TK} highlights the role of $T$ and $K$.
Increasing $K$ monotonically improves the RMSE but with rapidly
diminishing returns (gains plateau beyond $K\!\approx\!80$),
suggesting small $K$ already suffices for most points, consistent
with the goal of avoiding full retraining. 
For the learning horizon $T$, although longer training improves convergence, it
also increases the privacy cost incurred during learning by accumulating
sensitivities along the trajectory,
which degrades the final test error. 
Using $T$ too large is therefore useless, since increasing
it further brings little utility gain while worsening the unlearning guarantee.

\begin{figure}[!htbp]
  \centering
  \begin{subfigure}[t]{0.49\linewidth}
    \centering
    \includegraphics[width=\linewidth]{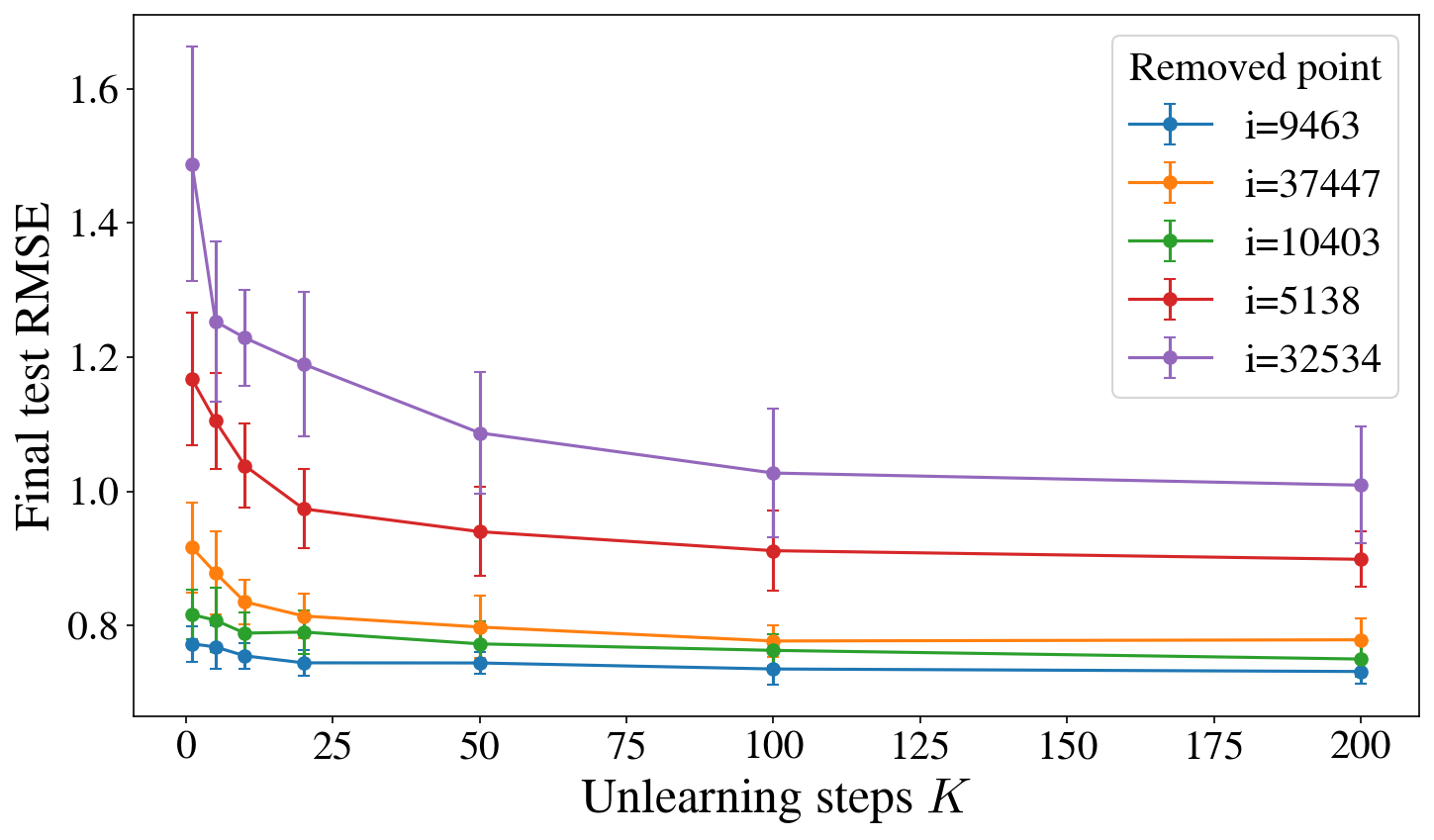}
    \caption{Effect of $K$ at fixed $\varepsilon=1$ and $T=800$.}
    \label{fig:ablation-K}
  \end{subfigure}
  \hfill
  \begin{subfigure}[t]{0.49\linewidth}
    \centering
    \includegraphics[width=\linewidth]{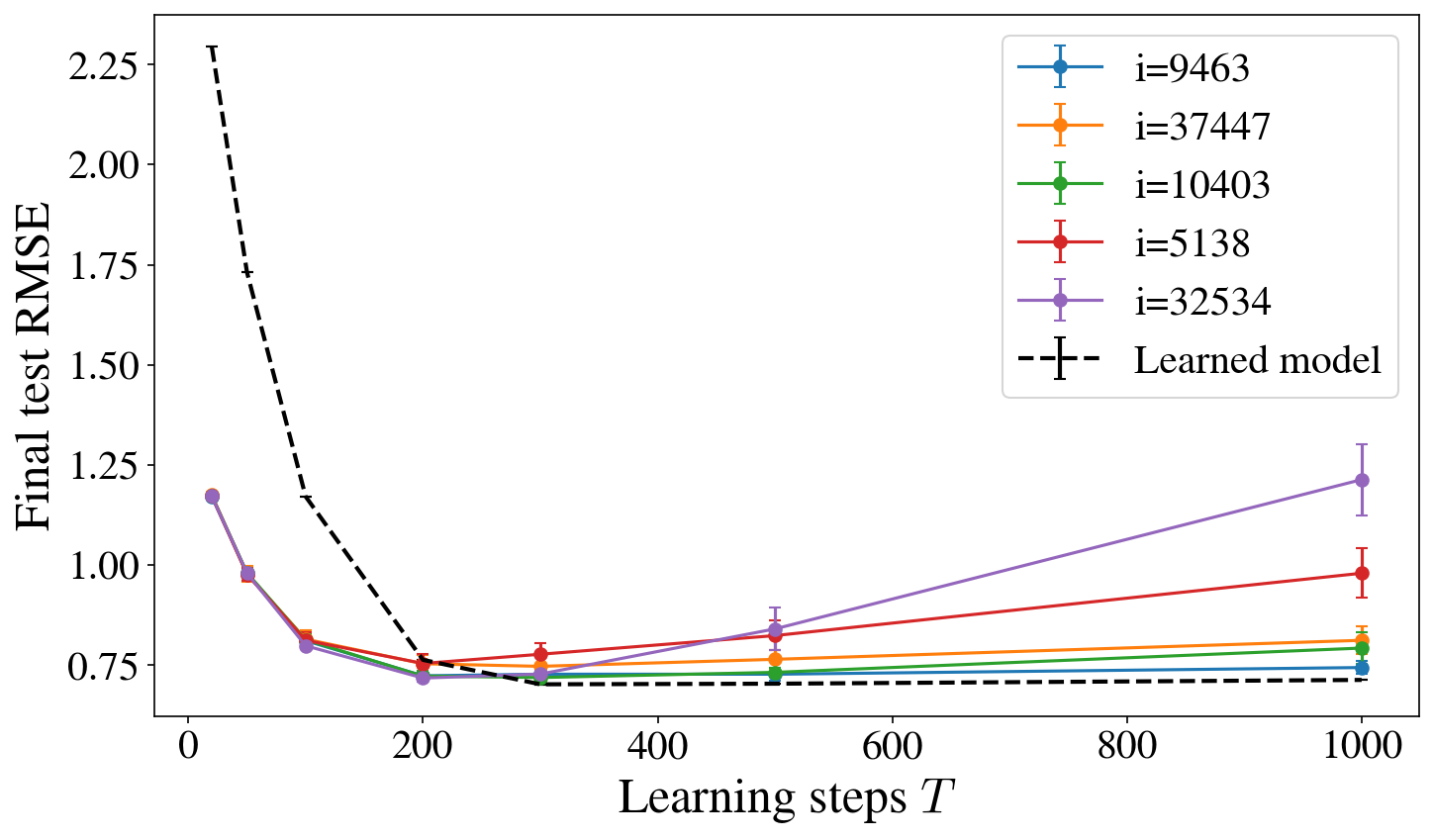}
    \caption{Effect of $T$ at fixed $\varepsilon=1$ and $K=80$.}
    \label{fig:ablation-T}
  \end{subfigure}
  \caption{
  Ablation study of PILU on $K$ and $T$ for five representative data points of NYU-Depth.
  }
  \label{fig:ablation-TK}
\end{figure}

We then proceed to compare PILU against two complementary baselines.

\emph{Learning-time privacy (DP-GD) \citep{Abadi_2016}.}
This baseline enforces privacy during training through
clipped noisy gradient descent. The unlearning guarantee is therefore paid for
before any deletion request is made, with a corresponding loss in utility. In
contrast, our approach calibrates the privacy cost to the sample being deleted. As
shown in Figure~\ref{fig:nyuv2-ours-vs-dpgd}, DP-GD yields a single global
trade-off, whereas PILU produces markedly different curves across removed
points: for easy-to-unlearn points, it achieves substantially higher utility at
the same target $\varepsilon$, while remaining comparable on harder ones.

\emph{Newton step \citep{guo2020certified}.}
This baseline relies on Gaussian objective perturbation at training time, 
followed by a single Newton correction for unlearning. For a fair comparison, we use PILU with 
the same training horizon $T$, only one unlearning step 
($K=1$) and no additional noise 
($\sigma_\mathrm{learn}=\sigma_\mathrm{unlearn}$). Figure~\ref{fig:nyuv2-ours-vs-guo}, 
which compares the final unlearning guarantees $\varepsilon$ at fixed $\delta=1/n$, shows that the
two methods have opposite behaviors as $T$ varies: the Newton baseline
becomes competitive only for sufficiently large $T$, consistent with
the fact that its correction is exact only at the LS solution
($T\to\infty$). By contrast, PILU is competitive for smaller and more realistic
training horizons.

\begin{figure}[!htbp]
    \centering

    \begin{subfigure}[t]{0.49\linewidth}
        \centering
        \includegraphics[width=\linewidth]{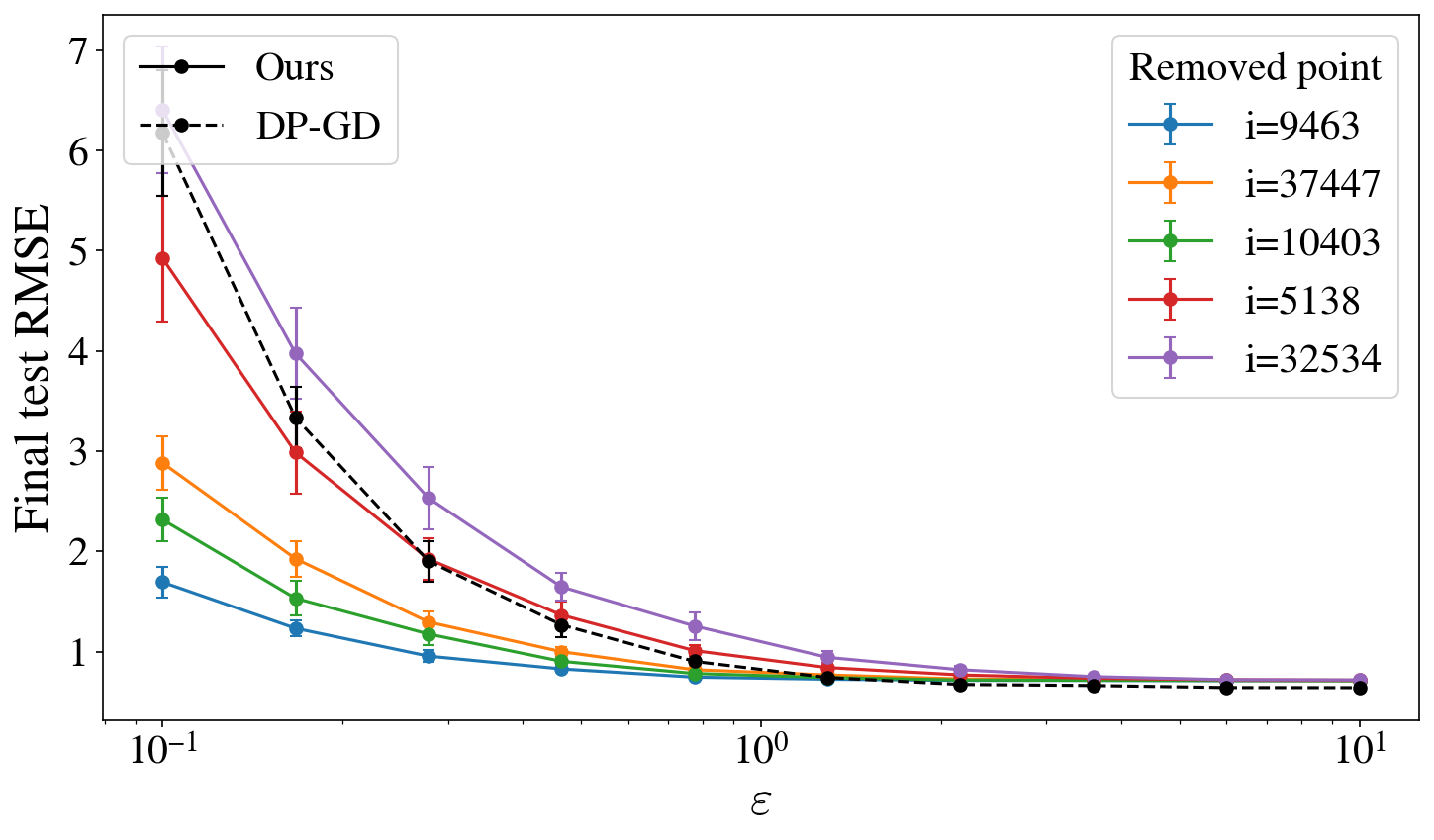}
        \caption{PILU vs. learning-time privacy (DP-GD)}
        \label{fig:nyuv2-ours-vs-dpgd}
    \end{subfigure}
    \hfill
    \begin{subfigure}[t]{0.48\linewidth}
        \centering
        \includegraphics[width=\linewidth]{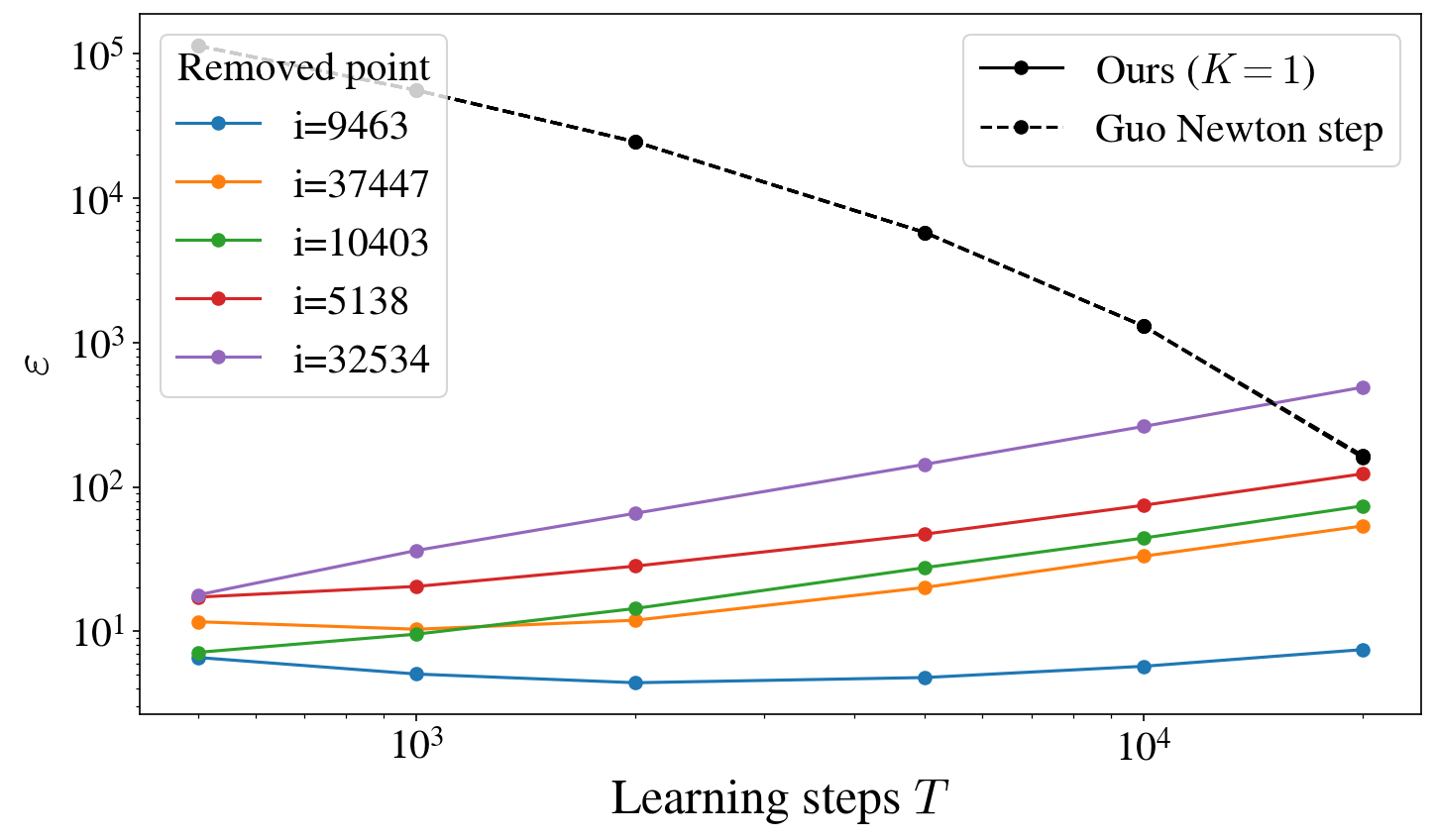}
        \caption{PILU vs. Guo Newton step}
        \label{fig:nyuv2-ours-vs-guo}
    \end{subfigure}

    \caption{Comparison of PILU with two baseline methods on NYU-Depth.
    Left: privacy--utility trade-off ($T=800, K=80$).
    Right: unlearning guarantee $\varepsilon$ as a function of the training horizon
    $T$.}
    \label{fig:nyuv2-tradeoff-baselines}
\end{figure}

\subsection{Empirical Evidence Beyond the Linear Case}
\label{sec:experiments-non-linear}

We now investigate whether the per-instance heterogeneity observed in the linear setting persists for realistic deep networks.

In the ridge setting (Theorem~\ref{thm:ridge-gdp-main}), we calibrate
$\sigma_{\mathrm{unlearn}}$ to achieve a prescribed $(\varepsilon,\delta)$
guarantee but in the non-linear setting, such closed-form calibration is not
available. Instead, we fix a common
unlearning noise $\sigma_{\mathrm{unlearn}}$ and estimate the resulting
privacy guarantee separately for each point. This is consistent with
Proposition~\ref{prop:gdp-tracking}, which implies that under uniform
$\sigma_{\mathrm{unlearn}}$, the GDP parameter $\mu_i$ and thus the induced privacy
level $\varepsilon_i$ remain point-dependent through the sensitivities $s_{i,k}$.

\begin{wraptable}{r}{0.45\linewidth}
\centering
\caption{Empirical $\hat{\mu}_i$-GDP and $\hat{\varepsilon}_i$ values for five representative 
CIFAR-10 points (\ref{fig:representative-images}) in the non-linear setting.}
\label{tab:nonlinear-epsilon}
\small
\begin{tabular}{c c c c}
\toprule
Index $i$ & AUC & $\hat{\mu}_i$ & $\hat{\varepsilon}_i$ \\
\midrule
358 & 0.77 & 1.16 & 3.05 \\
418 & 0.85 & 1.47 & 3.87 \\
90  & 0.86 & 1.61 & 4.28 \\
483 & 0.91 & 2.06 & 5.00 \\
226 & 0.97 & 2.86 & 7.10 \\
\bottomrule
\end{tabular}
\end{wraptable}

We consider classification on  CIFAR-10~\citep{krizhevsky2009learning} with a 
VGG-style CNN ($\eta=10^{-2}$, batch size $256$, $\ell_2$-regularization 
$\lambda=10^{-4}$) trained for $T=500$ steps, followed by
\(K=20\) unlearning steps with a common noise level
\(\sigma_{\mathrm{unlearn}}=0.02\) for all deleted points.

For the representative CIFAR-10 indices in
Figure~\ref{fig:representative-images}, we estimate an empirical unlearning 
guarantee $\hat\varepsilon_i$ at fixed $\delta=1/n$  
(see Appendix~\ref{app:cifar10} for details). The results in
Table~\ref{tab:nonlinear-epsilon} show substantial heterogeneity: despite using
the same unlearning noise for every point, $\hat\varepsilon_i$ ranges from
$3.05$ to $7.10$. Uniform noise therefore does \emph{not} translate into uniform empirical privacy in the
non-linear setting, further motivating per-instance calibration.


\section{Conclusion}
\label{sec:conclusion}




In this work, we propose a point-dependent view of certified machine
unlearning, where noise calibration can depend on the specific data point being
removed. In the ridge regression setting, our method yields non-vacuous
certified unlearning guarantees, whereas uniform worst-case calibration can
severely degrade post-unlearning performance. Although we obtain a full
certified result only for ridge regression, where per-instance sensitivity
bounds can be computed explicitly, this model captures a practically relevant
regime, as illustrated by our NYU-Depth experiments, which use frozen features
and train only the final linear head.

Our trajectory-level GDP accounting applies more generally to 
dynamics satisfying a contraction condition, but extending
the full certification beyond ridge regression remains open. Beyond this
certified linear case, our non-linear experiments reveal large variations in
effective privacy levels across data points under identical unlearning noise,
suggesting that heterogeneous unlearning difficulty is not confined to the
convex linear setting. Our work therefore paves the way for further development
of principled certified \emph{per-instance} unlearning, in particular for non-linear
training dynamics.

\section*{Broader Impact}
\label{sec:broader-impact}

This work contributes to the study of machine unlearning, with the
goal of enabling reliable post-hoc data deletion without full
retraining.  Such capabilities are relevant for privacy regulation
compliance, data governance, and user control over personal data in
deployed machine learning systems. Overall, we expect this work to
support the development of more accountable and interpretable
unlearning mechanisms, rather than enabling new forms of misuse.


\bibliographystyle{plainnat}
\bibliography{preprint}

\newpage
\appendix

\section{Gaussian Differential Privacy}
\label{app:gdp}

We recall the definition of Gaussian Differential Privacy (GDP) introduced by
\citet{dong2019gdp}, together with its conversion to $(\varepsilon,\delta)$-differential
privacy.

\begin{definition}[Gaussian Differential Privacy {\citep{dong2019gdp}}]
For two distributions $P$ and $Q$, define the \emph{trade-off function}
$\mathcal{T}(P,Q):[0,1]\to[0,1]$ by
\[
\mathcal{T}(P,Q)(\alpha)
\;:=\;
\inf_{\phi:\,\Prob_P(\phi=1)\le \alpha}\Prob_Q(\phi=0),
\]
where the infimum ranges over all (possibly randomized) tests $\phi$.
An algorithm $\A$ satisfies $\mu$-Gaussian Differential Privacy ($\mu$-GDP) if, for any
adjacent datasets $\D,\D'$,
\[
\mathcal{T}\!\big(\text{Law}(\A(\D)),\, \text{Law}(\A(\D'))\big)
\;\ge\;
G(\mu),
\]
where $G(\mu)$ is the trade-off function between $\mathcal N(0,1)$ and
$\mathcal N(\mu,1)$.
\end{definition}

A $\mu$-GDP guarantee implies $(\varepsilon,\delta)$-differential privacy for any
$\varepsilon>0$, with
\begin{equation}
  \label{eq:mu-eps-relation}
\delta
=
\Phi\!\left(-\frac{\varepsilon}{\mu}+\frac{\mu}{2}\right)
-
e^{\varepsilon}\,
\Phi\!\left(-\frac{\varepsilon}{\mu}-\frac{\mu}{2}\right),
\end{equation}

where $\Phi$ denotes the standard normal cumulative distribution function.
Accordingly, for any $\delta\in(0,1)$, one can obtain an exact conversion from
$\mu$-GDP to $(\varepsilon,\delta)$-DP by defining
$\varepsilon_{\mathrm{GDP}}(\mu,\delta)$ as the unique value of $\varepsilon$
that satisfies~\eqref{eq:mu-eps-relation}.

\section{Proof of Proposition~\ref{prop:gdp-tracking}}
\label{app:proof-prop-gdp-tracking}

We adapt the shifted interpolation framework of
\citet{bok2024shiftedinterpolationdifferentialprivacy} to the learn--unlearn setting.

\paragraph{Setup.}
Let $\D$ and $\Di$ be neighboring datasets differing by a single point $i$.
We consider two trajectories initialized at the same point $\theta_0=\theta'_0$ and evolving as
\[
\theta_{k+1} = \phi(\theta_k) + Z_{k+1},
\qquad
\theta'_{k+1} = \phi'(\theta'_k) + Z'_{k+1},
\qquad k=0,\dots,T+K-1,
\]
where the deterministic update maps are
\[
\phi(\theta)
=
\begin{cases}
\theta-\eta\nabla_\theta f_{\D}(\theta), & k< T,\\
\theta-\eta\nabla_\theta f_{\Di}(\theta), & k\ge T,
\end{cases}
\qquad
\phi'(\theta)=\theta-\eta\nabla_\theta f_{\Di}(\theta),
\]
and the noise variables satisfy
\[
Z_{k+1},Z'_{k+1}\sim \mathcal N(0,2\eta\sigma_k^2 I),
\qquad
\sigma_k=
\begin{cases}
\sigma_{\mathrm{learn}},& k< T,\\
\sigma_{\mathrm{unlearn}}, & k\ge T.
\end{cases}
\]

By Assumption~\ref{ass:contraction}, the gradient step
$\Phi(\theta)=\theta-\eta\nabla_\theta f(\theta)$ is $c$-contractive, hence both maps
$\phi$ and $\phi'$ satisfy
\[
\|\phi(\theta)-\phi(\theta')\|\le c\|\theta-\theta'\|,
\qquad
\|\phi'(\theta)-\phi'(\theta')\|\le c\|\theta-\theta'\|.
\]

During learning ($k< T$), the two update maps differ. Recall the per-instance 
per-step sensitivity:
\[
\Delta_{i,k}:=
\|\phi(\theta_k)-\phi'(\theta_k)\|
=
\eta\|\nabla_\theta f_{\D}(\theta_k)-\nabla_\theta f_{\Di}(\theta_k)\|
=
\eta\|\nabla_\theta \ell(\theta_k;x_i,y_i)\|
\]

By assumption of the theorem, there exists a deterministic sequence
$\{s_{i,k}\}_{k=0}^{T-1}$ with $s_{i,k}>0$ for all $k=0,\dots,T-1$, such that
\[
\Delta_{i,k} \le s_{i,k}
\qquad \text{for all } k < T.
\]

During unlearning ($k \ge T$), the two maps coincide so $\Delta_{i,k}=0$.

We summarize this via
\begin{equation}
\label{eq:sensitivity-bound}
\Delta_{i,k} \le s_{i,k}\,\mathbf 1_{\{k< T\}}
\qquad \text{for all } k\in \{0,\dots,T+K-1\}.
\end{equation}

\subsection{Shifted Interpolation}

Following \citet{bok2024shiftedinterpolationdifferentialprivacy}, we introduce the
\emph{shifted interpolated process} $\{\tilde\theta_k\}_{k=0}^{T+K}$ defined by
$\tilde\theta_0=\theta_0=\theta'_0$ and
\[
\tilde\theta_{k+1}
=
\lambda_{k+1}\phi(\theta_k)
+
(1-\lambda_{k+1})\phi'(\tilde\theta_k)
+
Z_{k+1},
\qquad k=0,\dots,T+K-1,
\]
where the shift parameters $\lambda_{k+1}\in[0,1]$ are deterministic and satisfy
$\lambda_{T+K}=1$. This ensures
\[
\tilde\theta_{T+K}=\theta_{T+K}.
\]

We define a deterministic sequence $\{z_k\}_{k=0}^{T+K}$ by
\[
z_0=0,
\qquad
z_{k+1}
=
(1-\lambda_{k+1})
\bigl(cz_k+s_{i,k}\mathbf 1_{\{k<T\}}\bigr).
\]
We claim that
\[
\|\theta_k-\tilde\theta_k\|\le z_k
\qquad\text{almost surely for all }k.
\]
The claim is true at $k=0$. Assume it holds at step $k$. By definition of the
interpolated process and cancellation of the shared noise,
\[
\|\theta_{k+1}-\tilde\theta_{k+1}\|
=
(1-\lambda_{k+1})
\|\phi(\theta_k)-\phi'(\tilde\theta_k)\|.
\]
Using the triangle inequality,
\begin{equation} 
  \label{eq:decomposition} 
  \|\phi(\theta_k)-\phi'(\tilde\theta_k)\| 
  \le \|\phi(\theta_k)-\phi'(\theta_k)\| + 
  \|\phi'(\theta_k)-\phi'(\tilde\theta_k)\|. 
\end{equation}

The first term is bounded by the deterministic sensitivity assumption:
\[
\|\phi(\theta_k)-\phi'(\theta_k)\|
=
\Delta_{i,k}
\le
s_{i,k}\mathbf 1_{\{k<T\}}.
\]
The second term is bounded by contractivity and the induction hypothesis:
\[
\|\phi'(\theta_k)-\phi'(\tilde\theta_k)\|
\le
c\|\theta_k-\tilde\theta_k\|
\le
cz_k.
\]
Therefore,
\[
\|\theta_{k+1}-\tilde\theta_{k+1}\|
\le
(1-\lambda_{k+1})
\bigl(cz_k+s_{i,k}\mathbf 1_{\{k<T\}}\bigr)
=
z_{k+1}.
\]
This proves the claim.

We now define the deterministic privacy increment
\begin{equation}
\label{eq:a-def}
a_{k+1}
:=
\lambda_{k+1}
\bigl(cz_k+s_{i,k}\mathbf 1_{\{k<T\}}\bigr).
\end{equation}
Combining this definition with the recursion for $z_{k+1}$ gives the
deterministic identity
\begin{equation}
\label{eq:az-id}
z_{k+1}+a_{k+1}
=
cz_k+s_{i,k}\mathbf 1_{\{k<T\}}.
\end{equation}
Equivalently,
\[
z_{k+1}
=
cz_k+s_{i,k}\mathbf 1_{\{k<T\}}-a_{k+1}.
\]

\subsection{One-Step Tradeoff Bound}

We now detail how the one-step privacy increment is obtained, following the
shifted interpolation analysis of \citet{bok2024shiftedinterpolationdifferentialprivacy}.

\begin{lemma}[Shifted step lemma (\cite{bok2024shiftedinterpolationdifferentialprivacy})]
\label{lem:shifted-step}
Let $\phi,\phi'$ be $c$-contractive maps.
Assume that for two random variables $\theta_k,\tilde \theta_k$ we have 
$\|\theta_k-\tilde \theta_k\|\le z_k$,
and that $\|\phi(\theta_k)-\phi'(\theta_k)\|<s_k$ almost surely.
Then for any $\lambda \in[0,1]$ and independent Gaussian noises
$Z_k,Z_k'\sim\mathcal N(0,\sigma^2 I)$,
\[
\mathcal T\big(\lambda\phi(\theta_k)+(1-\lambda)\phi'(\tilde \theta_k)+Z,\
 \phi'(\theta'_k)+Z'\big)
\ \ge\
\mathcal T(\tilde \theta_k,\theta'_k)\ \otimes\
G\!\left(\frac{\lambda(cz_k+s_k)}{\sigma}\right).
\]
\end{lemma}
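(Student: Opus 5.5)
The plan is to reduce the claim to the \emph{shift-reduction} principle of \citet{bok2024shiftedinterpolationdifferentialprivacy}. That principle states that adding a bounded shift before Gaussian noise costs at most one Gaussian trade-off factor. The pieces are then glued with data processing and adaptive composition of trade-off functions \citep{dong2019gdp}.

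\textbf{Step 1 (rewriting the output as a shifted point).} I will write the left output as
\[
\lambda\phi(\theta_k)+(1-\lambda)\phi'(\tilde\theta_k)+Z=\phi'(\tilde\theta_k)+W+Z,\qquad W:=\lambda\bigl(\phi(\theta_k)-\phi'(\tilde\theta_k)\bigr).
\]
By the triangle-inequality decomposition \eqref{eq:decomposition}, the sensitivity hypothesis and $c$-contractivity of $\phi'$, we get
\[
\|W\|\le\lambda\bigl(\|\phi(\theta_k)-\phi'(\theta_k)\|+c\|\theta_k-\tilde\theta_k\|\bigr)\le\lambda(s_k+cz_k)=:a
\]
almost surely. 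So the comparison becomes $U+W+Z$ against $U'+Z'$, where $U:=\phi'(\tilde\theta_k)$ and $U':=\phi'(\theta'_k)$. Since $\phi'$ is a deterministic map, post-processing gives $\mathcal T(U,U')\ge\mathcal T(\tilde\theta_k,\theta'_k)$.

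\textbf{Step 2 (shift reduction).} I will show $\mathcal T(U+W+Z,\,U'+Z')\ge \mathcal T(U,U')\otimes G(a/\sigma)$. The idea is to view the release as a two-stage adaptive mechanism.
\begin{itemize}
\item Stage one releases $U$, respectively $U'$.
\item Stage two, given the released value $u$, releases $u+W+Z$ in world one and $u+Z'$ in world two.
\end{itemize}
The final output is a post-processing of the pair, so adaptive composition yields the claim once the second stage is shown to be $G(a/\sigma)$-private uniformly in $u$.

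For fixed $u$, I will enlarge the view by also revealing a copy of $W$ in both worlds. In world two this copy is drawn independently from the conditional law of $W$ given $U=u$. This enlargement can only lower the trade-off, by data processing. Conditionally on $W=w$, the two outputs are $\mathcal N(u+w,\sigma^2I)$ and $\mathcal N(u,\sigma^2I)$. Their trade-off is $G(\|w\|/\sigma)\ge G(a/\sigma)$. Because the revealed $W$ has the same marginal in both worlds, the trade-off of the pair is at least the infimum over $w$ of these conditional trade-offs. This uses convexity of $G$ and the standard mixture argument for identical mixing laws.

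\textbf{Main obstacle.} The main obstacle is the subtlety in Step 2: $W$ is random and correlated with $U$ through the joint law of $(\theta_k,\tilde\theta_k)$. A naive argument treating it as a deterministic shift is invalid. The device of revealing $W$ with a matched marginal is exactly what handles this, and one must check that the adaptive composition theorem of \citet{dong2019gdp} applies when the second-stage mechanism depends on $u$ only through conditional laws.

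If this mixture step proves delicate, the fallback is to invoke Bok et al.'s shift-reduction lemma directly as a black box. The remaining steps are then just Step 1 and post-processing.
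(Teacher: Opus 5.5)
Your proof is correct and takes the same route as the paper, which defers to the shift-reduction argument of \citet{bok2024shiftedinterpolationdifferentialprivacy}. The paper's only modification is the decomposition \eqref{eq:decomposition}, which bounds the shift $\lambda\bigl(\phi(\theta_k)-\phi'(\tilde\theta_k)\bigr)$ using the sensitivity at the real iterate $\theta_k$ only, exactly as in your Step~1, and your two-stage composition in Step~2 is a valid explicit rendering of that lemma. One small point is left implicit: replacing $\mathcal T(U,U')$ by $\mathcal T(\tilde\theta_k,\theta'_k)$ inside $\otimes$ uses monotonicity of $\otimes$, which you can avoid by taking $\tilde\theta_k$ versus $\theta'_k$ as the first stage directly.
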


\begin{remark}[Remark on the shifted step lemma]
\label{rem:shifted-step}

In the original formulation of the shifted step lemma in
\citet{bok2024shiftedinterpolationdifferentialprivacy},
the authors assume a uniform bound
$
\|\phi(x)-\phi'(x)\|\le s
$
holding for all $x$ in the entire space.
In contrast, in Lemma~\ref{lem:shifted-step} we only assume that
$
\|\phi(\theta_k)-\phi'(\theta_k)\|\le s_k
$
almost surely, where $\theta_k$ denotes the random variable governing the current
iterate.

This relaxation is justified by the structure of the proof of
\citet{bok2024shiftedinterpolationdifferentialprivacy}.
Indeed, their argument only involves the random quantity
$
\|\phi(\theta_k)-\phi'(\theta_k)\|
$,
and does not require a uniform bound over the entire parameter space.
As a result, it is sufficient to control this quantity on the support of
the law of $\theta_k$.

A subtle but important technical point concerns the decomposition of
$
\|\phi(\theta_k)-\phi'(\tilde \theta_k)\|
$.
In the original proof, the authors use
\[
\|\phi(\theta_k)-\phi'(\tilde \theta_k)\|
\le
\|\phi(\theta_k)-\phi(\tilde \theta_k)\|
+
\|\phi(\tilde \theta_k)-\phi'(\tilde \theta_k)\|.
\]
The first term is controlled by $c\|\theta_k-\tilde \theta_k\|$ by contractivity of $\phi$.
However, the second term involves $\tilde \theta_k$, the auxiliary process, whose law 
is not explicitly characterized along the interpolation path, which necessitates a uniform
bound on $\|\phi(x)-\phi'(x)\|$.

In our setting, the sensitivity bound is available only along the
\emph{real trajectory} $\theta_k$.
We therefore rely instead on the alternative decomposition,
as in \eqref{eq:decomposition},
\[
\|\phi(\theta_k)-\phi'(\tilde \theta_k)\|
\le
\|\phi(\theta_k)-\phi'(\theta_k)\|
+
\|\phi'(\theta_k)-\phi'(\tilde \theta_k)\|.
\]
The second term is again controlled by $c\|\theta_k-\tilde \theta_k\|$ by contractivity of
$\phi'$, while the first term depends only on the sensitivity of the update
map at the actual iterate $\theta_k$.
This allows us to work with bounds that are deterministic but
data-dependent, which is precisely the regime required for
per-instance certified unlearning.
\end{remark}

We apply the shifted step Lemma~\ref{lem:shifted-step} at iteration $k$ with

\[
\|\phi(\theta_k)-\phi'(\theta_k)\|
\le
s_{i,k}\,\mathbf 1_{\{k< T\}},
\qquad 
\sigma:=\sqrt{2\eta}\sigma_k.
\]

With $a_{k+1}:= \lambda_{k+1}(c z_k+s_{i,k}\,\mathbf 1_{\{k< T\}})$, the one-step 
tradeoff increment is
\[
\mathcal T(\tilde\theta_{k+1},\theta'_{k+1})
\ \ge\
\mathcal T(\tilde\theta_k,\theta'_k)
\ \otimes\
G\!\left(\frac{a_{k+1}}{\sqrt{2\eta}\sigma_k}\right).
\]

\subsection{Composition}

Using the identities $\tilde\theta_{0}=\theta'_{0}$ (hence
$\mathcal T(\tilde \theta_{0},\theta'_{0})=G(0)$) and
$\tilde\theta_{T+K}=\theta_{T+K}$, we can iterate the composition
inequality over $k=0,\dots,T+K-1$
\begin{equation}
\label{eq:composition}
\begin{aligned}
\mathcal T(\theta_{T+K},\theta'_{T+K})
&= \mathcal T(\tilde \theta_{T+K},\theta'_{T+K}) \\[0.3em]
&\ge
\mathcal T(\tilde \theta_{T+K-1},\theta'_{T+K-1})
\;\otimes\;
G\!\left(
\sqrt{\frac{a_{T+K}^2}{2\eta\sigma_{T+K-1}^2}}
\right) \\[0.3em]
&\ge
\mathcal T(\tilde \theta_{0},\theta'_{0})
\;\otimes\;
G\!\left(
\sqrt{\sum_{k=0}^{T+K-1}\frac{a_{k+1}^2}{2\eta\sigma_k^2}}
\right) \\[0.3em]
&=
G\!\left(
\sqrt{\sum_{k=0}^{T+K-1}\frac{a_{k+1}^2}{2\eta\sigma_k^2}}
\right).
\end{aligned}
\end{equation}
where we used strong composition of Gaussian tradeoff functions.

\subsection{Endpoint Constraint and Optimization}

To optimize \eqref{eq:composition}, we must solve
\[
\begin{aligned}
\operatorname{minimize}\quad&
\sum_{k=0}^{T+K-1}\frac{a_{k+1}^2}{2\eta\sigma_k^2}
\\
\operatorname{subject\ to}\quad&
z_0=z_{T+K}=0,
\\
&
z_{k+1}
=
cz_k+s_{i,k}\mathbf 1_{\{k<T\}}-a_{k+1},
\qquad k=0,\dots,T+K-1,
\\
&
0\le a_{k+1}\le cz_k+s_{i,k}\mathbf 1_{\{k<T\}},
\qquad k=0,\dots,T+K-1.
\end{aligned}
\tag{QP}
\]
These constraints are exactly those ensuring that the corresponding
\(\lambda_{k+1}\) belong to \([0,1]\).

Multiplying \eqref{eq:az-id} by $c^{T+K-1-k}$ and summing over $k=0,\dots,T+K-1$,
the terms in $z_k$ telescope. Using $z_0=0$ and $z_{T+K}=0$ (from $\lambda_{T+K}=1$), 
we obtain
\begin{equation}
\sum_{k=0}^{T+K-1} c^{T+K-1-k} a_{k+1}
=
\sum_{k=0}^{T+K-1} c^{T+K-1-k} s_{i,k}\,\mathbf 1_{\{k< T\}}
=
\sum_{k=0}^{T-1} c^{T+K-1-k}\,s_{i,k}.
\label{eq:endpoint}
\end{equation}

By Cauchy--Schwarz and \eqref{eq:endpoint},
\[
\sum_{k=0}^{T+K-1}\frac{a_{k+1}^2}{2\eta\sigma_k^2}
\ \ge\
\frac{(\sum_{k=0}^{T+K-1} c^{T+K-1-k}\,a_{k+1})^2}
{\sum_{k=0}^{T+K-1}2\eta\sigma_k^2\,c^{2(T+K-1-k)}}
\ =\
\frac{(\sum_{k=0}^{T-1} c^{T+K-1-k}\,s_{i,k})^2}
{\sum_{k=0}^{T+K-1}2\eta\sigma_k^2\,c^{2(T+K-1-k)}}
\]



The equality candidate in the above Cauchy--Schwarz bound is used whenever it is
feasible. Written explicitly, it is
\[
a_{k+1}
=
\frac{
c^{T+K-1-k}
\left(
\sigma_{\mathrm{learn}}^2\mathbf 1_{\{k<T\}}
+
\sigma_{\mathrm{unlearn}}^2\mathbf 1_{\{k\ge T\}}
\right)
}{
\sigma_{\mathrm{learn}}^2
\sum_{r=0}^{T-1} c^{2(T+K-1-r)}
+
\sigma_{\mathrm{unlearn}}^2
\sum_{r=T}^{T+K-1} c^{2(T+K-1-r)}
}
\,
\sum_{r=0}^{T-1} c^{T+K-1-r}s_{i,r}.
\]
The associated sequence \(z_k\) is then defined by
\[
z_0=0,
\qquad
z_{k+1}
=
cz_k+s_{i,k}\mathbf 1_{\{k<T\}}-a_{k+1}.
\]
To satisfy the linear constraints, it is sufficient to check that
\[
z_{k+1}\ge 0,
\qquad k=0,\dots,T+K-1.
\]

In the regimes considered in our experiments, the Cauchy--Schwarz candidate is
typically feasible. The main possible obstruction is that the proposed increments
\(a_{k+1}\) may be too large compared with the available upper bound
\[
cz_k+s_{i,k}\mathbf 1_{\{k<T\}}.
\]
During the learning phase, the closed-form allocation satisfies
\[
a_{k+1}
\propto
\sigma_{\mathrm{learn}}^2 c^{T+K-1-k},
\qquad k<T.
\]
Therefore, if \(\sigma_{\mathrm{learn}}\) is large relative to the sensitivity
sequence, the allocation may place too much mass on early learning steps, where
\(z_k\) is still small. In our experiments, this obstruction is mitigated by two
facts: first, the learning noise is kept small
\((\sigma_{\mathrm{learn}}=0.01)\); second, the sensitivity bounds satisfy
\(s_{i,k}>0\) for all \(k<T\), so the learning-phase upper bound is nonzero even
when \(z_k\) is small. Moreover, during unlearning,
\[
a_{k+1}
\propto
\sigma_{\mathrm{unlearn}}^2 c^{T+K-1-k},
\qquad k\ge T,
\]
and \(\sigma_{\mathrm{unlearn}}\) is calibrated separately to meet the target
privacy level. Hence, when \(\sigma_{\mathrm{unlearn}}\) dominates
\(\sigma_{\mathrm{learn}}\), most of the interpolation shift is assigned to the
unlearning phase.
We nevertheless check this feasibility condition explicitly for each deletion.

Combining the resulting value with \eqref{eq:composition} gives
\[
\mathcal T(\theta_{T+K},\theta'_{T+K})
\ge
G(\mu_i),
\]
where
\[
\mu_i
=
\frac{\sum_{k=0}^{T-1} c^{T+K-1-k}s_{i,k}}
{\sqrt{
\sum_{k=0}^{T-1}2\eta\sigma_{\mathrm{learn}}^2c^{2(T+K-1-k)}
+
\sum_{k=T}^{T+K-1}2\eta\sigma_{\mathrm{unlearn}}^2c^{2(T+K-1-k)}
}}.
\]
This concludes the proof. \qed

\paragraph{Remark (trajectory sensitivities).}
The quantities $s_{i,k}$ are deterministic upper bounds on the
\emph{trajectory sensitivities}
\[
\|\phi_k(\theta_k)-\phi'_k(\theta_k)\|,
\]
defined on the support of the iterate $\theta_k$.
The proof above only requires such bounds to hold pointwise on
$\operatorname{supp}\!\left(\mathsf{Law}(\theta_k)\right)
$.
Deriving explicit expressions or computable upper bounds for
$\Delta_{i,k}$ is problem-dependent and is addressed separately.

\section{Extension to Group Unlearning}
\label{app:group-unlearning}

\begin{corollary}[Extension to group unlearning]
\label{cor:group-unlearning}
Let $D_f$ be a group of points to be deleted indexed by $I_f=\{i_1,\dots,i_R\}$.
Assume that, for each $i_r \in I_f$, there exists a deterministic sequence
$\{s_{i_r,k}\}_{k=0}^{T-1}$ such that
\[
\eta \|\nabla \ell(\theta_k;x_{i_r},y_{i_r})\| \le s_{i_r,k},
\qquad \forall k < T.
\]
Define
\[
s_{I_f,k} \coloneqq \sum_{r=1}^R s_{i_r,k}.
\]
Then Proposition~\ref{prop:gdp-tracking} extends to the deletion of the whole group $D_f$
by replacing the per-instance sensitivity bound $s_{i,k}$ with the group bound $s_{I_f,k}$.

\end{corollary}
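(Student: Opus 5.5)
We prove the corollary by rerunning the proof of Proposition~\ref{prop:gdp-tracking} with $\Di$ replaced by the retain set $D_r=D\setminus D_f$, and checking that the deleted set enters only through one sensitivity bound.

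\emph{Setup.} The real trajectory $(\theta_k)$ runs $T$ Langevin steps on $f_{\D}$ and then $K$ steps on $f_{D_r}$. The comparison trajectory $(\theta'_k)$ runs all $T+K$ steps on $f_{D_r}$. Assumption~\ref{ass:contraction} must now be read for $f_{D_r}$ instead of $f_{\Di}$, i.e.\ $f_{D_r}$ is $m$-strongly convex and $L$-smooth. This holds automatically when the regularizer supplies the strong convexity, as in ridge regression. Both update maps $\phi$ and $\phi'$ are then $c$-contractive.

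\emph{Step 1: group sensitivity.} For $k<T$, the gradients differ exactly by the losses of the removed points:
\[
\|\phi(\theta_k)-\phi'(\theta_k)\|=\eta\Bigl\|\sum_{r=1}^R\nabla\ell(\theta_k;x_{i_r},y_{i_r})\Bigr\|\le\sum_{r=1}^R\eta\|\nabla\ell(\theta_k;x_{i_r},y_{i_r})\|\le s_{I_f,k},
\]
by the triangle inequality. This bound holds almost surely on the real trajectory. The sequence $s_{I_f,k}$ is deterministic and strictly positive, being a sum of deterministic positive terms. For $k\ge T$ the two maps coincide, so the sensitivity is $0$.

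\emph{Step 2: rerun the accounting.} The remainder of the proof of Proposition~\ref{prop:gdp-tracking} uses only the almost-sure bound $\|\phi(\theta_k)-\phi'(\theta_k)\|\le s_k\mathbf 1_{\{k<T\}}$ and contractivity. That is, it uses:
\begin{itemize}
\item the shifted interpolation with its decomposition along the real trajectory;
\item the inductive bound $z_k$;
\item Lemma~\ref{lem:shifted-step};
\item composition of Gaussian tradeoff functions;
\item the telescoping identity~\eqref{eq:endpoint};
\item the Cauchy--Schwarz optimization of (QP).
\end{itemize}
None of these steps depends on which data points produced $s_k$. Substituting $s_{I_f,k}$ for $s_{i,k}$ therefore yields the same $\mu$ formula with $s_{I_f,k}$ in the numerator. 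GDP-to-$(\varepsilon,\delta)$ conversion then gives per-instance unlearning for the request $(D,D_f)$ in the sense of Definition~\ref{def:per_instance_unlearning}.

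\emph{Main obstacle.} The only point needing care is the feasibility of the Cauchy--Schwarz candidate $a_{k+1}\le cz_k+s_{I_f,k}\mathbf 1_{\{k<T\}}$. As in the single-point case, this is checked per request rather than proved in general. Larger $s_{I_f,k}$ only enlarges the admissible upper bounds, but it also scales the proposed $a_{k+1}$ proportionally. Feasibility is therefore governed by the same ratio condition as before. If the candidate is infeasible, we solve (QP) directly, which yields a valid (possibly larger) $\mu$.
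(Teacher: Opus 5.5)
Your proposal is correct and follows the paper's proof: you bound the group sensitivity by the triangle inequality, $\eta\|\sum_r\nabla\ell(\theta_k;x_{i_r},y_{i_r})\|\le s_{I_f,k}$, and then rerun the accounting of Proposition~\ref{prop:gdp-tracking} verbatim with $s_{I_f,k}$ in place of $s_{i,k}$. Your extra remarks are correct refinements that the paper leaves implicit: Assumption~\ref{ass:contraction} must now hold for $f_{D_r}$, feasibility of the Cauchy--Schwarz candidate is checked per request, and (QP) can be solved directly if that candidate fails. One small imprecision is the phrase ``the same ratio condition as before'': $s_{I_f,k}$ is not a rescaling of any single $s_{i,k}$, so only the form of the feasibility check carries over, not its outcome.
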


\begin{proof}
Let $I_f=\{i_1,\dots,i_R\}$. The sensitivity term at step $k$ becomes
\[
\Delta_{I_f,k}
=
\left\|
\eta\bigl(\nabla f_{D\setminus D_f}(\theta_k)-\nabla f_D(\theta_k)\bigr)
\right\|.
\]
By the triangle inequality,
\[
\Delta_{I_f,k}
= \|\sum_{r=1}^R \eta \nabla \ell(\theta_k;x_{i_r},y_{i_r})\|
\le
\sum_{r=1}^R \eta \|\nabla \ell(\theta_k;x_{i_r},y_{i_r})\|
\le
\sum_{r=1}^R s_{i_r,k}
=
s_{I_f,k}.
\]
The proof of Proposition~\ref{prop:gdp-tracking} then applies verbatim with
$s_{i,k}$ replaced by $s_{I_f,k}$.
\end{proof}

\section{Proof of Theorem~\ref{thm:ridge-gdp-main}}
\label{app:proof-thm-ridge}

We work in the multi-output linear ridge regression setting of \ref{sec:ridge},
with $X\in\R^{n\times p}$, $Y\in\R^{n\times d}$, and parameter
$\theta\in\R^{p\times d}$.
Recall the objective
\[
f_{\D}(\theta)
=
\sum_{j=1}^n \tfrac12\|x_j^\top\theta-y_j\|_2^2
+\tfrac{\lambda}{2}\|\theta\|_F^2.
\]
A direct computation yields
\[
\nabla_\theta f_{\D}(\theta)
=
(X^\top X+\lambda I_p)\theta - X^\top Y
=: A\theta - B,
\qquad
A:=X^\top X+\lambda I_p,\ \ B:=X^\top Y .
\]

\subsection{Langevin Dynamics and Gaussian Iterates}

The learning phase follows the discrete-time Langevin dynamics
\begin{equation}
\label{eq:langevin-proof}
\theta_{k+1}
=
M\theta_k+\eta B+\sqrt{2\eta\sigma_{\mathrm{learn}}^2}\,\Xi_k,
\qquad
M:=I_p-\eta A,
\end{equation}
where $\Xi_k\in\R^{p\times d}$ has i.i.d.\ columns
$\xi_k^{(t)}\sim\mathcal N(0,I_p)$, independent across $t\in\{1,\dots,d\}$ and $k$.
We assume a deterministic initialization $\theta_0$.

Writing $\theta_k=[\theta_k^{(1)}\ \cdots\ \theta_k^{(d)}]$ column-wise,
\eqref{eq:langevin-proof} is equivalent to the $d$ independent recursions
\begin{equation}
\label{eq:column-recursion}
\theta_{k+1}^{(t)}
=
M\theta_k^{(t)}+\eta B^{(t)}+\sqrt{2\eta\sigma_{\mathrm{learn}}^2}\,\xi_k^{(t)},
\qquad t=1,\dots,d.
\end{equation}

\begin{lemma}[Gaussian propagation and Kronecker covariance]
\label{lem:gaussian-kronecker-proof}
For every $k\ge0$,
\[
\mathrm{vec}(\theta_k)
\sim
\mathcal N\!\bigl(\mathrm{vec}(m_k),\, I_d\otimes\Sigma_k\bigr),
\]
where $m_k\in\R^{p\times d}$ and $\Sigma_k\in\R^{p\times p}$ satisfy
\[
m_{k+1}=Mm_k+\eta B,\qquad m_0=\theta_0,
\]
and
\[
\Sigma_{k+1}=M\Sigma_kM^\top+2\eta\sigma_{\mathrm{learn}}^2 I_p,
\qquad \Sigma_0=0.
\]
\end{lemma}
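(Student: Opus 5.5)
We argue by induction on $k$, treating the $d$ independent column recursions \eqref{eq:column-recursion} separately and then stacking them. Since $\theta_0$ is deterministic, $\mathrm{vec}(\theta_0)$ is a degenerate Gaussian with mean $\mathrm{vec}(\theta_0)=\mathrm{vec}(m_0)$ and covariance $0=I_d\otimes\Sigma_0$, which settles the base case.

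For the inductive step, suppose $\mathrm{vec}(\theta_k)\sim\mathcal N(\mathrm{vec}(m_k),I_d\otimes\Sigma_k)$. Two facts are needed about the relation between $\theta_k$ and $\Xi_k$:
\begin{itemize}
\item $\theta_k$ is a measurable function of $\theta_0$ and $\Xi_0,\dots,\Xi_{k-1}$ only;
\item hence $\Xi_k$ is independent of $\theta_k$.
\end{itemize}
Next we vectorize the recursion \eqref{eq:langevin-proof}. Using $\mathrm{vec}(M\theta)=(I_d\otimes M)\mathrm{vec}(\theta)$, we get
\[
\mathrm{vec}(\theta_{k+1})=(I_d\otimes M)\mathrm{vec}(\theta_k)+\eta\,\mathrm{vec}(B)+\sqrt{2\eta\sigma_{\mathrm{learn}}^2}\,\mathrm{vec}(\Xi_k),
\qquad \mathrm{vec}(\Xi_k)\sim\mathcal N(0,I_{pd}).
\]
The right-hand side is an affine image of a Gaussian plus an independent Gaussian, so $\mathrm{vec}(\theta_{k+1})$ is Gaussian. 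Its mean is
\[
(I_d\otimes M)\mathrm{vec}(m_k)+\eta\,\mathrm{vec}(B)=\mathrm{vec}(Mm_k+\eta B)=\mathrm{vec}(m_{k+1}).
\]
Its covariance is
\[
(I_d\otimes M)(I_d\otimes\Sigma_k)(I_d\otimes M)^\top+2\eta\sigma_{\mathrm{learn}}^2 I_{pd}.
\]
By the mixed-product property of the Kronecker product, this equals
\[
I_d\otimes\bigl(M\Sigma_kM^\top\bigr)+I_d\otimes\bigl(2\eta\sigma_{\mathrm{learn}}^2 I_p\bigr)=I_d\otimes\Sigma_{k+1},
\]
which closes the induction.

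The only points requiring care are the independence of $\Xi_k$ from $\theta_k$, which makes the covariances add with no cross term, and the Kronecker identity $\mathrm{vec}(M\theta)=(I_d\otimes M)\mathrm{vec}(\theta)$ under column-stacking. Both are standard, so I do not expect a genuine obstacle.

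As a remark useful for part (i) of Theorem~\ref{thm:ridge-gdp-main}, unrolling the recursions gives
\[
m_k=M^k\theta_0+\eta\sum_{j=0}^{k-1}M^jB,\qquad \Sigma_k=2\eta\sigma_{\mathrm{learn}}^2\sum_{j=0}^{k-1}M^j(M^j)^\top.
\]
Hence $x_i^\top\theta_k-y_i$ has mean $u_{i,k}$ and covariance $x_i^\top\Sigma_kx_i\,I_d=v_{i,k}I_d$, which is the distribution used in the theorem.
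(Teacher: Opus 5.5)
Your proof is correct. It differs from the paper's mainly in bookkeeping.

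The paper works column by column. It argues that each $\theta_k^{(t)}$ is Gaussian and derives the mean recursion. It then obtains the covariance recursion for a single column, via the centered process $\bar\theta_k^{(t)}$ and the independence of $\xi_k^{(t)}$ from the past. Next it runs a separate induction showing that cross-covariances between distinct columns vanish, because the columns are driven by independent noises. Finally it assembles the block-diagonal matrix $I_d\otimes\Sigma_k$.

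You instead vectorize the whole recursion once. The identity $\mathrm{vec}(M\theta)=(I_d\otimes M)\mathrm{vec}(\theta)$ and the mixed-product property then give the identical-block structure in a single computation.

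Your route buys two things. Joint Gaussianity of the stacked vector $\mathrm{vec}(\theta_k)$ is explicit, and the cross-covariance step disappears. The paper only shows that each column is Gaussian and that cross-covariances are zero. It leaves implicit that the columns are jointly Gaussian, which is true because they are in fact independent, or because everything is affine in the jointly Gaussian noise.

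The paper's route avoids Kronecker identities. It also makes transparent that the columns evolve independently under the same linear map $M$ with the same noise level, which is exactly why the covariance blocks coincide.

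One presentational point: your opening sentence announces a column-wise treatment followed by stacking, but the argument you give is the vectorized one. You may want to make the announcement match what you actually do.
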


\begin{proof}
Fix $t\in\{1,\dots,d\}$. Since $\theta_0^{(t)}$ is deterministic and
\eqref{eq:column-recursion} is affine with additive Gaussian noise,
$\theta_k^{(t)}$ is Gaussian for all $k$.
Taking expectations in \eqref{eq:column-recursion} gives
$\mathbb E[\theta_{k+1}^{(t)}]=M\,\mathbb E[\theta_k^{(t)}]+\eta B^{(t)}$,
which yields the recursion $m_{k+1}=Mm_k+\eta B$ column-wise.

For covariances, define $\bar\theta_k^{(t)}:=\theta_k^{(t)}-\mathbb E[\theta_k^{(t)}]$.
Using independence of $\xi_k^{(t)}$ from the past and $\mathbb E[\xi_k^{(t)}]=0$,
\[
\bar\theta_{k+1}^{(t)}
=
M\bar\theta_k^{(t)}+\sqrt{2\eta\sigma_{\mathrm{learn}}^2}\,\xi_k^{(t)}.
\]
Thus
\[
\mathrm{Cov}(\theta_{k+1}^{(t)})
=
M\,\mathrm{Cov}(\theta_k^{(t)})\,M^\top
+
2\eta\sigma_{\mathrm{learn}}^2 I_p,
\]
and since $\mathrm{Cov}(\theta_0^{(t)})=0$ we obtain the stated recursion for
$\Sigma_k=\mathrm{Cov}(\theta_k^{(t)})$, which is the same for all $t$.

Finally, for columns $u\neq t$, the recursions \eqref{eq:column-recursion} are driven by
independent noises $\xi_k^{(u)}$ and $\xi_k^{(t)}$ and share the same deterministic
drift; since $\theta_0$ is deterministic, an induction gives
$\mathrm{Cov}(\theta_k^{(u)},\theta_k^{(t)})=0$ for all $k$.
Therefore,
$\mathrm{Cov}(\mathrm{vec}(\theta_k))$ is block-diagonal with $d$ identical blocks
$\Sigma_k$, i.e.\ $\mathrm{Cov}(\mathrm{vec}(\theta_k))=I_d\otimes\Sigma_k$.
\end{proof}

\subsection{Residual Isotropy and High-Probability Sensitivity Control}

Fix a point $(x_i,y_i)$ with $x_i\in\R^p$ and $y_i\in\R^d$, and define the residual
\[
r_{i,k}:=x_i^\top\theta_k-y_i\in\R^d.
\]
The per-sample loss is $\ell(\theta;x_i,y_i)=\tfrac12\|x_i^\top\theta-y_i\|_2^2$ and
\[
\nabla_\theta\ell(\theta_k;x_i,y_i)=x_i r_{i,k}^\top\in\R^{p\times d},
\qquad
\|\nabla_\theta\ell(\theta_k;x_i,y_i)\|_F=\|x_i\|_2\,\|r_{i,k}\|_2,
\]
so the per-step sensitivity is
\[
\Delta_{i,k}
:=
\eta\|\nabla_\theta\ell(\theta_k;x_i,y_i)\|_F
=
\eta\|x_i\|_2\|r_{i,k}\|_2.
\]

\begin{lemma}[Isotropic Gaussian law of the residual]
\label{lem:residual-isotropic-proof}
For every $k\ge0$, the residual
$
r_{i,k}:=x_i^\top\theta_k-y_i
$
is Gaussian with
\[
r_{i,k}\sim\mathcal N(u_{i,k},\,v_{i,k}I_d),
\qquad
u_{i,k}:=x_i^\top m_k-y_i,
\quad
v_{i,k}:=x_i^\top\Sigma_k x_i.
\]
\end{lemma}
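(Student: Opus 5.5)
The residual is an affine function of the Gaussian vector $\mathrm{vec}(\theta_k)$, so it is Gaussian; the work is to compute its mean and covariance using Lemma~\ref{lem:gaussian-kronecker-proof}. The plan proceeds in three steps.

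\emph{Step 1 (vectorized form).} Write the residual as
\[
r_{i,k}=\theta_k^\top x_i-y_i .
\]
Its $t$-th coordinate is $x_i^\top\theta_k^{(t)}-y_i^{(t)}$. Stacking these coordinates gives
\[
r_{i,k}=(I_d\otimes x_i^\top)\,\mathrm{vec}(\theta_k)-y_i ,
\]
which uses the identity $\mathrm{vec}(x_i^\top\theta_k)=(I_d\otimes x_i^\top)\mathrm{vec}(\theta_k)$. Since $\mathrm{vec}(\theta_k)$ is Gaussian by Lemma~\ref{lem:gaussian-kronecker-proof}, and $y_i$ is deterministic, $r_{i,k}$ is Gaussian as an affine image of a Gaussian vector.

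\emph{Step 2 (mean).} By linearity of expectation,
\[
\mathbb E[r_{i,k}]=(I_d\otimes x_i^\top)\mathrm{vec}(m_k)-y_i=x_i^\top m_k-y_i=u_{i,k}.
\]

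\emph{Step 3 (covariance).} The covariance of the affine image is
\[
(I_d\otimes x_i^\top)(I_d\otimes\Sigma_k)(I_d\otimes x_i).
\]
By the mixed-product property of the Kronecker product, this equals
\[
I_d\otimes(x_i^\top\Sigma_k x_i)=v_{i,k}I_d .
\]
Equivalently, the coordinates are independent across $t$ because the columns $\theta_k^{(t)}$ are uncorrelated and jointly Gaussian, and each coordinate has variance $x_i^\top\Sigma_k x_i$.

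\emph{Consistency with Theorem~\ref{thm:ridge-gdp-main}.} Finally, I would check that this $v_{i,k}$ matches the expression in the theorem. Unrolling the covariance recursion from $\Sigma_0=0$ gives
\[
\Sigma_k=2\eta\sigma_{\mathrm{learn}}^2\sum_{j=0}^{k-1}M^j(M^j)^\top ,
\]
so that
\[
v_{i,k}=2\eta\sigma_{\mathrm{learn}}^2\sum_{j=0}^{k-1}\|(M^j)^\top x_i\|_2^2 .
\]
Similarly, unrolling the mean recursion gives $m_k=M^k\theta_0+\eta\sum_{j<k}M^jB$, which recovers $u_{i,k}$ as defined in the theorem.

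No step is a real obstacle. The only point needing care is the Kronecker bookkeeping: the ordering in $\mathrm{vec}$ must be column-stacking, so that the covariance is $I_d\otimes\Sigma_k$ rather than $\Sigma_k\otimes I_d$.
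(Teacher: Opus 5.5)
Your proposal is correct and matches the paper's proof step for step: you write $r_{i,k}=(I_d\otimes x_i^\top)\mathrm{vec}(\theta_k)-y_i$, get Gaussianity from Lemma~\ref{lem:gaussian-kronecker-proof}, compute the mean by linearity, and obtain the covariance $I_d\otimes(x_i^\top\Sigma_k x_i)=v_{i,k}I_d$ via the mixed-product property. Your closing consistency check with the unrolled recursions is not needed for this lemma; the paper carries it out separately in Lemma~\ref{lem:mu-v-explicit}.
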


\begin{proof}
By Lemma~\ref{lem:gaussian-kronecker-proof},
$\mathrm{vec}(\theta_k)$ is a Gaussian random vector in $\R^{pd}$ with mean
$\mathrm{vec}(m_k)$ and covariance $I_d\otimes\Sigma_k$.
Using the identity
\[
x_i^\top\theta_k = (I_d\otimes x_i^\top)\,\mathrm{vec}(\theta_k),
\]
the residual can be written as
\[
r_{i,k}
=
(I_d\otimes x_i^\top)\,\mathrm{vec}(\theta_k) - y_i .
\]
As an affine transformation of a Gaussian vector, $r_{i,k}$ is Gaussian.

Its mean follows by linearity of expectation:
\[
\mathbb E[r_{i,k}]
=
(I_d\otimes x_i^\top)\,\mathbb E[\mathrm{vec}(\theta_k)] - y_i
=
(I_d\otimes x_i^\top)\,\mathrm{vec}(m_k) - y_i
=
x_i^\top m_k - y_i
=
u_{i,k}.
\]

Its covariance is obtained by standard covariance propagation for linear maps:
\begin{align*}
\mathrm{Cov}(r_{i,k})
&=
(I_d\otimes x_i^\top)\,
\mathrm{Cov}(\mathrm{vec}(\theta_k))\,
(I_d\otimes x_i) \\
&=
(I_d\otimes x_i^\top)\,
(I_d\otimes\Sigma_k)\,
(I_d\otimes x_i) \\
&=
I_d\otimes(x_i^\top\Sigma_k x_i)
=
v_{i,k} I_d .
\end{align*}
This shows that the residual covariance is isotropic across the $d$ output
coordinates.
\end{proof}

\begin{lemma}[Explicit expressions and efficient recursions for $u_{i,k}$ and $v_{i,k}$]
\label{lem:mu-v-explicit}
For every $k\ge0$, the mean and covariance iterates admit the closed forms
\[
m_k
=
M^k\theta_0+\eta\sum_{j=0}^{k-1}M^j B,
\qquad
\Sigma_k
=
2\eta\sigma_{\mathrm{learn}}^2
\sum_{j=0}^{k-1} M^j (M^j)^\top .
\]
Consequently, for any fixed data point $(x_i,y_i)$,
\[
u_{i,k}
=
x_i^\top\!\Bigl(
M^k\theta_0+\eta\sum_{j=0}^{k-1}M^j B
\Bigr)-y_i,
\qquad
v_{i,k}
=
2\eta\sigma_{\mathrm{learn}}^2
\sum_{j=0}^{k-1}\|(M^j)^\top x_i\|_2^2 .
\]

\end{lemma}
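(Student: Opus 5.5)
The plan is to solve the two recursions of Lemma~\ref{lem:gaussian-kronecker-proof} by induction on $k$, and then substitute the resulting closed forms into the definitions of $u_{i,k}$ and $v_{i,k}$ from Lemma~\ref{lem:residual-isotropic-proof}.

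\emph{Mean.} The claimed formula gives $m_0=\theta_0$, since the empty sum vanishes. Suppose it holds at step $k$. Then
\[
m_{k+1}=Mm_k+\eta B=M^{k+1}\theta_0+\eta\sum_{j=0}^{k-1}M^{j+1}B+\eta B .
\]
Reindexing $j+1\mapsto j$ and absorbing $\eta B$ as the $j=0$ term yields $\eta\sum_{j=0}^{k}M^jB$, which is the formula at step $k+1$.

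\emph{Covariance.} Here $\Sigma_0=0$ matches the empty sum. Assume the formula holds at step $k$. Then
\[
M\Sigma_kM^\top=2\eta\sigma_{\mathrm{learn}}^2\sum_{j=0}^{k-1}M^{j+1}(M^{j+1})^\top .
\]
Adding $2\eta\sigma_{\mathrm{learn}}^2 I_p=2\eta\sigma_{\mathrm{learn}}^2M^0(M^0)^\top$ and reindexing gives the sum over $j=0,\dots,k$. Equivalently, one can unroll $\theta_k=M^k\theta_0+\eta\sum_j M^jB+\sqrt{2\eta\sigma_{\mathrm{learn}}^2}\sum_{j}M^j\Xi_{k-1-j}$ and read off the mean and covariance directly. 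This uses the independence of the $\Xi$'s and the fact that $\mathrm{Cov}(M^j\xi)=M^j(M^j)^\top$.

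\emph{Consequences.} Substituting into $u_{i,k}=x_i^\top m_k-y_i$ gives the stated expression immediately. For the variance,
\[
v_{i,k}=x_i^\top\Sigma_kx_i=2\eta\sigma_{\mathrm{learn}}^2\sum_{j=0}^{k-1}x_i^\top M^j(M^j)^\top x_i ,
\]
and each term equals $\|(M^j)^\top x_i\|_2^2$. Since $M$ is symmetric, $(M^j)^\top=M^j$, but this symmetry is not needed.

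For the efficient recursions the lemma alludes to, I would maintain $w_j:=(M^j)^\top x_i$ via $w_{j+1}=M^\top w_j$. This gives $v_{i,k+1}=v_{i,k}+2\eta\sigma_{\mathrm{learn}}^2\|w_k\|^2$. Similarly, $u_{i,k}$ can be updated through $x_i^\top m_{k+1}=(M^\top x_i)^\top m_k+\eta x_i^\top B$, or more simply through $m_k$ itself. None of this involves matrix inversion.

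There is no real obstacle. The only care points are the index bookkeeping in the reindexing, including the empty-sum convention at $k=0$, and noting that the scalar identity $x_i^\top\Sigma_kx_i$ is applied per output column, which is legitimate because of the Kronecker structure established earlier.
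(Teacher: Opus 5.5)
Your proposal is correct and follows essentially the paper's route: unroll the affine mean and covariance recursions (you make the induction and reindexing explicit), then take the quadratic form $x_i^\top\Sigma_k x_i$ to get $v_{i,k}$. Your remarks on incremental computation via $w_{j+1}=M^\top w_j$ are a valid addition that the paper's proof does not spell out.
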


\begin{proof}
The expression for $m_k$ follows by unrolling the affine recursion
$m_{k+1}=Mm_k+\eta B$ with $m_0=\theta_0$.
Similarly, unrolling the recursion
$\Sigma_{k+1}=M\Sigma_kM^\top+2\eta\sigma_{\mathrm{learn}}^2 I_p$ with $\Sigma_0=0$
yields
\[
\Sigma_k
=
2\eta\sigma_{\mathrm{learn}}^2
\sum_{j=0}^{k-1} M^j (M^j)^\top .
\]
Taking the quadratic form along $x_i$ gives
\[
v_{i,k}
=
x_i^\top\Sigma_k x_i
=
2\eta\sigma_{\mathrm{learn}}^2
\sum_{j=0}^{k-1}
x_i^\top M^j (M^j)^\top x_i
=
2\eta\sigma_{\mathrm{learn}}^2
\sum_{j=0}^{k-1}\|(M^j)^\top x_i\|_2^2 .
\]
Finally, $u_{i,k}=x_i^\top m_k-y_i$ follows directly from the definition.
\end{proof}

For $k=0$, the initialization is deterministic, hence
\[
r_{i,0}=x_i^\top\theta_0-y_i,
\qquad
\Delta_{i,0}
=
\eta\|x_i\|_2\|x_i^\top\theta_0-y_i\|_2.
\]
We therefore set
\[
s_{i,0}^{\delta_{\mathrm s}}
:=
\eta\|x_i\|_2\|x_i^\top\theta_0-y_i\|_2,
\]
so that $\Delta_{i,0}\le s_{i,0}^{\delta_{\mathrm s}}$ holds deterministically.

For $k\ge 1$, Lemma~\ref{lem:residual-isotropic-proof} and
Lemma~\ref{lem:mu-v-explicit} give
\[
z_{i,k}:=\frac{r_{i,k}}{\sqrt{v_{i,k}}}
\sim
\mathcal N\!\left(\frac{u_{i,k}}{\sqrt{v_{i,k}}},\,I_d\right),
\]
and therefore
\[
\frac{\|r_{i,k}\|_2^2}{v_{i,k}}
=
\|z_{i,k}\|_2^2
\sim
\chi'^2_d\!\left(\frac{\|u_{i,k}\|_2^2}{v_{i,k}}\right).
\]
Let $q_{i,k}(\cdot)$ denote the quantile function of this distribution and define,
for $k\ge1$,
\[
s_{i,k}^{\delta_{\mathrm s}}
:=
\eta\,\|x_i\|_2
\sqrt{
v_{i,k}\,
q_{i,k}\!\left(1-\tfrac{\delta_{\mathrm s}}{T}\right)
}.
\]
Define the event
\[
\mathcal G_i
:=
\Bigl\{
\forall k<T,\;
\Delta_{i,k}\le s_{i,k}^{\delta_{\mathrm s}}
\Bigr\}.
\]
For each fixed
$k\ge0$, by definition of the quantile,
\[
\Prob\!\left(\Delta_{i,k}\le s_{i,k}^{\delta_{\mathrm s}}\right)
=
\Prob\!\left(
\|r_{i,k}\|_2^2
\le
v_{i,k}\,
q_{i,k}\!\left(1-\tfrac{\delta_{\mathrm s}}{T}\right)
\right)
\ge
1-\tfrac{\delta_{\mathrm s}}{T}.
\]
Thus, by a union bound,
\[
\Prob(\mathcal G_i^c)
\le
\sum_{k=0}^{T-1}
\Prob\!\left(\Delta_{i,k}>s_{i,k}^{\delta_{\mathrm s}}\right)
\le
T\frac{\delta_{\mathrm s}}{T}
=
\delta_{\mathrm s}.
\]
Hence
\[
\Prob(\mathcal G_i)\ge 1-\delta_{\mathrm s}.
\]

\subsection{From Conditional $(\varepsilon,\delta_{\mathrm m})$ to Unconditional 
  $(\varepsilon,\delta)$}

Recall the definition of certified per-instance unlearning:
a randomized mechanism $\U_i$ satisfies
$(\varepsilon,\delta)$-per-instance unlearning for point $(x_i,y_i)$ if for all
measurable events $S$,
\begin{equation}
\label{eq:def-per-instance-proof}
\Prob \big(\U_i(\A(\D),(x_i,y_i))\in S\big)
\le
e^\eps
\Prob \big(\U_i(\A(\Di),\varnothing)\in S\big)
+\del .
\end{equation}

The event $\mathcal G_i$ cannot be used by directly conditioning the privacy
accounting on it: after conditioning on $\mathcal G_i$, the learning noises are
no longer independent isotropic Gaussians, so Proposition~\ref{prop:gdp-tracking}
does not apply to the conditional process. We therefore introduce a virtual
mechanism.

Define a virtual mechanism $\widetilde{\mathcal A}$ that follows the same
Langevin recursion as $\mathcal A$, except that the contribution of point $i$
to the one-step update is deterministically clipped so that, at every
learning step $k<T$,
\[
\widetilde\Delta_{i,k}
\le
s_{i,k}^{\delta_{\mathrm s}} .
\]
The same unlearning procedure is then applied to $\widetilde{\mathcal A}$,
yielding the virtual unlearning mechanism
$\widetilde{\U_i}\circ\widetilde{\mathcal A}$.

For this virtual mechanism, the sensitivity bounds
$s_{i,k}^{\delta_{\mathrm s}}$ hold deterministically. Hence
Proposition~\ref{prop:gdp-tracking}, together with the calibration of
$\sigma_{\mathrm{unlearn}}$, gives for all measurable $S$,
\begin{equation}
\label{eq:virtual-dp}
\Prob\big(
\widetilde{\U_i}(\widetilde{\mathcal A}(\D),(x_i,y_i))\in S
\big)
\le
e^\eps
\Prob\big(
\widetilde{\U_i}(\widetilde{\mathcal A}(\Di),\varnothing)\in S
\big)
+
\delta_{\mathrm m}.
\end{equation}

Moreover, by construction, the true and virtual mechanisms produce identical
trajectories whenever $\mathcal G_i$ holds. Therefore their total variation
distance is bounded by the probability of the failure event:
\[
d_{\mathrm{TV}}
\Bigl(
\U_i(\mathcal A(\D),(x_i,y_i)),
\widetilde{\U_i}(\widetilde{\mathcal A}(\D),(x_i,y_i))
\Bigr)
\le
\Prob(\mathcal G_i^c)
\le
\delta_{\mathrm s}.
\]
On the neighboring dataset $\Di$, the point $i$ is absent, so the clipping rule
is never invoked and the virtual and true retain-side mechanisms coincide:
\[
\widetilde{\U_i}(\widetilde{\mathcal A}(\Di),\varnothing)
=
\U_i(\mathcal A(\Di),\varnothing).
\]

Thus, for any measurable $S$,
\begin{align*}
\Prob\big(
\U_i(\mathcal A(\D),(x_i,y_i))\in S
\big)
&\le
\Prob\big(
\widetilde{\U_i}(\widetilde{\mathcal A}(\D),(x_i,y_i))\in S
\big)
+
\delta_{\mathrm s}
\\
&\le
e^\eps
\Prob\big(
\widetilde{\U_i}(\widetilde{\mathcal A}(\Di),\varnothing)\in S
\big)
+
\delta_{\mathrm m}
+
\delta_{\mathrm s}
\\
&=
e^\eps
\Prob\big(
\U_i(\mathcal A(\Di),\varnothing)\in S
\big)
+
\delta_{\mathrm m}
+
\delta_{\mathrm s}.
\end{align*}
Since $\delta=\delta_{\mathrm m}+\delta_{\mathrm s}$, this proves
\eqref{eq:def-per-instance-proof}. This concludes the proof of
Theorem~\ref{thm:ridge-gdp-main}.

\section{Implementation Details}
\subsection{Learning, Calibration, and Targeted Unlearning}
\label{app:algo}

Using Theorem~\ref{thm:ridge-gdp-main} yields an explicit and fully
implementable procedure for per-instance Langevin unlearning in the case of ridge regression (PILU).
Algorithm~\ref{alg:full-unlearning} summarizes the complete pipeline.

\begin{algorithm}[!htbp]
\caption{Learning, per-instance calibration, and targeted Langevin unlearning for ridge regression}
\label{alg:full-unlearning}
\begin{algorithmic}
\STATE {\bfseries Input:}
dataset $\D=\{(x_j,y_j)\}_{j=1}^n$,
deletion index $i$,
learning horizon $T$,
unlearning horizon $K$,
step size $\eta$,
regularization $\lambda$,
strong convexity $m$,
learning noise $\sigma_{\mathrm{learn}}$,
target privacy $(\varepsilon,\delta)$,
sensitivity tail probability $\delta_s$,
initial parameter $\theta_0$.

\STATE {\bfseries Output:}
unlearned parameter $\theta_{T+K}$,
calibrated noise $\sigma_{\mathrm{unlearn}}$.

\STATE {\bfseries Precomputation.}
Set
\[
A=X^\top X+\lambda I,
\qquad
M=I-\eta A,
\qquad
B=X^\top Y.
\]

\STATE {\bfseries Analytic residual statistics.}
Compute $(u_{i,k},v_{i,k})_{k=0}^T-1$ using the closed-form recursions of
Lemma~\ref{lem:mu-v-explicit}, without simulating the Langevin trajectory.

\STATE {\bfseries Learning phase (full dataset).}
Initialize $\theta\gets\theta_0$.
\FOR{$k=0$ {\bfseries to} $T-1$}
  \STATE Perform one Langevin step on $\D$:
  \[
  \theta\gets
  \theta
  -\eta\nabla_\theta f_{\D}(\theta)
  +\sqrt{2\eta}\,\sigma_{\mathrm{learn}}\,\xi_k,
  \qquad \xi_k\sim\mathcal N(0,I).
  \]
  \STATE Compute the high-probability sensitivity bound
  \[
  s_{i,k}^{\delta_s}
  =
  \eta\|x_i\|_2
  \sqrt{
  v_{i,k}\,
  q_{i,k}\!\left(1-\frac{\delta_s}{T}\right)
  },
  \]
  where $q_{i,k}$ is the quantile of
  $\chi'^2_d(\|u_{i,k}\|_2^2/v_{i,k})$.
\ENDFOR
\STATE Set $s_{i,k}^{\delta_s}\gets0$ for $k=T,\dots,T+K-1$.
Store $\theta_T\gets\theta$.

\STATE {\bfseries Noise calibration (GDP accounting).}
Let $c=1-\eta m$ and define
\[
\mu_i(\sigma)
=
\frac{
\sum_{k=0}^{T-1} c^{T+K-1-k}\, s_{i,k}^{\delta_s}
}{
\sqrt{
\sum_{k=0}^{T-1} 2\eta\sigma_{\mathrm{learn}}^2 c^{2(T+K-1-k)}
+
\sum_{k=T}^{T+K-1} 2\eta\sigma^2 c^{2(T+K-1-k)}
}
}.
\]
Find the smallest $\sigma_{\mathrm{unlearn}}$ such that the
GDP-to-DP conversion at level $\delta_m = \delta - \delta_s$ satisfies
\[
\varepsilon_{\mathrm{GDP}}(\mu_i(\sigma_{\mathrm{unlearn}}),\delta_m)\le\varepsilon.
\]

\STATE {\bfseries Targeted unlearning (dataset $\D^{-i}$).}
Initialize $\theta\gets\theta_T$.
\FOR{$k=1$ {\bfseries to} $K$}
  \STATE Perform one Langevin step on $\D^{-i}$:
  \[
  \theta\gets
  \theta
  -\eta\nabla_\theta f_{\D^{-i}}(\theta)
  +\sqrt{2\eta}\,\sigma_{\mathrm{unlearn}}\,\xi_k,
  \qquad \xi_k\sim\mathcal N(0,I).
  \]
\ENDFOR

\STATE \textbf{return} $\theta_{T+K}\gets\theta$, $\sigma_{\mathrm{unlearn}}$.
\end{algorithmic}
\end{algorithm}

\subsection{Implementation Details for the Experiments on NYU-Depth}
\label{app:nyuv2-implementation}

\paragraph{Dataset and feature extraction.}
We use the NYU-Depth V2 dataset~\citep{silberman2012indoor}. 
RGB images are resized to $224\times224$, and frozen DINOv2 
features~\citep{oquab2024dinov2} are extracted using a \texttt{dinov2\_vitb14} 
backbone. For each image, the backbone returns a $16\times16$ grid of patch tokens,
yielding $P=256$ patch-level representations. In addition, we append the
normalized class token to each patch embedding, so that each patch is
represented by a $1536$-dimensional feature vector before adding the bias term.
Depth supervision is defined at the patch level. For each image, the resized
depth map is partitioned into the same $16\times16$ grid, and the target of a
patch is taken to be the median valid depth value inside that patch.
We do not normalize or log-transform the depth targets: predictions and
errors are therefore expressed directly in meters, which makes the final RMSE
readable in the physical scale of the task. In our processed training set, the
patch-level targets range from $0.7157$ m to $9.9231$ m.

\paragraph{Splits and dimensions.}
We keep $150$ images for training and $25$ for test, splitting at the
image level before flattening the patch dimension. This yields $n=38400$ training
patches and $6400$ test patches. The patch features are standardized
using the training set statistics only, while the scalar depth targets are left
unstandardized. Finally, a constant bias feature is appended to each patch
representation, so that the effective linear input dimension is $p=1537$. The
final ridge regression task is therefore a scalar-output problem with
$n=38400$, $p=1537$, and $d=1$. 

\paragraph{Choice of hyperparameters.} The regularization parameter is set 
to $\lambda=10^{-4}$, yielding strong convexity $m=\lambda$, the smoothness parameter
is $L= \lambda_\mathrm{max} (X^\top X + \lambda)$, and the step size is $\eta=1/L$.
Throughout the experiments, we use $\delta=1/n$, with $\delta_\mathrm{m}=\delta_\mathrm{s} = 1/2n$.

\paragraph{Choice of representative indices.} 
We first rank training patches by the norm of their per-sample gradient at the
end of training. On NYU-Depth, this distribution is extremely spread out: in our
run, the smallest norm is on the order of $10^{-4}$, while the largest one is
of order $10^{2}$. Using the full range would therefore produce overly stretched
privacy--utility plots and make representative comparisons difficult to read.

To focus on the regime where most points lie, we restrict attention to the
subset of training patches whose final gradient norm is at most $50$, which
covers $35\,689$ data points, roughly $92\%$ of the $n=38\,400$ training patches. We then sort these
remaining points by gradient norm and select five representative indices as the
empirical quantiles $q\in\{0,0.25,0.5,0.75,1\}$ within this truncated set. The
corresponding patches are shown in Figure~\ref{fig:nyuv2-selected-indices}, 
and the corresponding image patches, together with their ground-truth and predicted
patch-level depth maps, are shown in Figure~\ref{fig:nyuv2-patches-prediction}.
For completeness, we also report in Figure~\ref{fig:nyuv2-loglog-all} the
privacy--utility tradeoff obtained from the full gradient-norm range, displayed
on log--log scales to accommodate the much larger spread induced by the most
influential points.

\begin{figure}[!htbp]
    \centering
    \includegraphics[width=\linewidth]{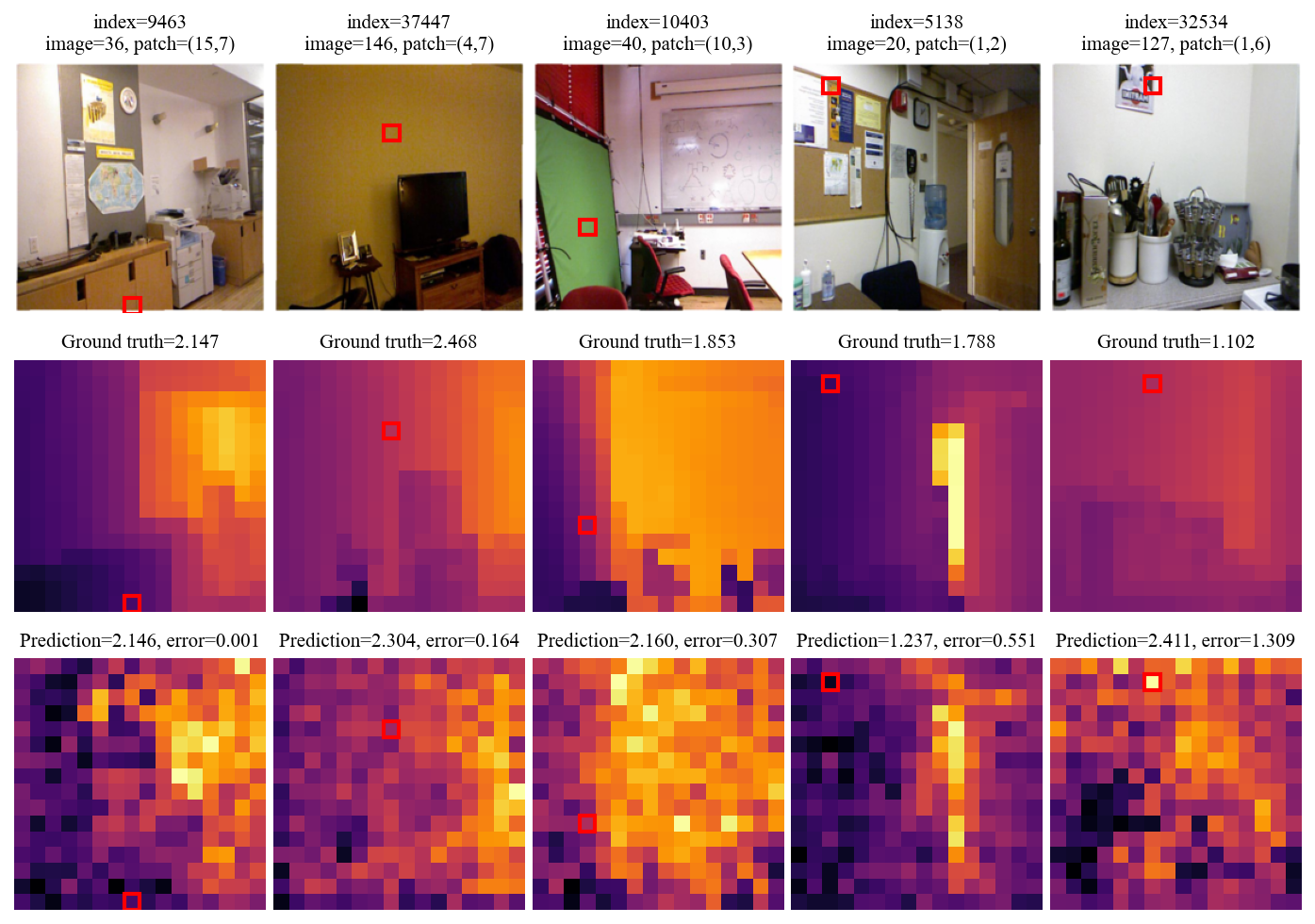}
    \caption{
    Visual inspection of the five representative NYU-Depth training patches
    selected from the truncated gradient-norm spectrum. For each selected
    flat index, the top row shows the corresponding RGB image with the
    selected $16\times16$ patch highlighted in red. The middle row shows the
    ground-truth patch-level depth map obtained by taking the median valid
    depth in each patch. The bottom row shows the corresponding prediction of
    the ridge model. Ground-truth depths, predictions, and absolute errors are
    reported in meters for the highlighted patches. This qualitative view
    illustrates that the selected indices cover patches with different image
    locations, depths, and prediction errors.
    }
    \label{fig:nyuv2-patches-prediction}
\end{figure}

\begin{figure}[!htbp]
    \centering
    \begin{subfigure}[t]{0.49\linewidth}
        \centering
        \includegraphics[width=\linewidth]{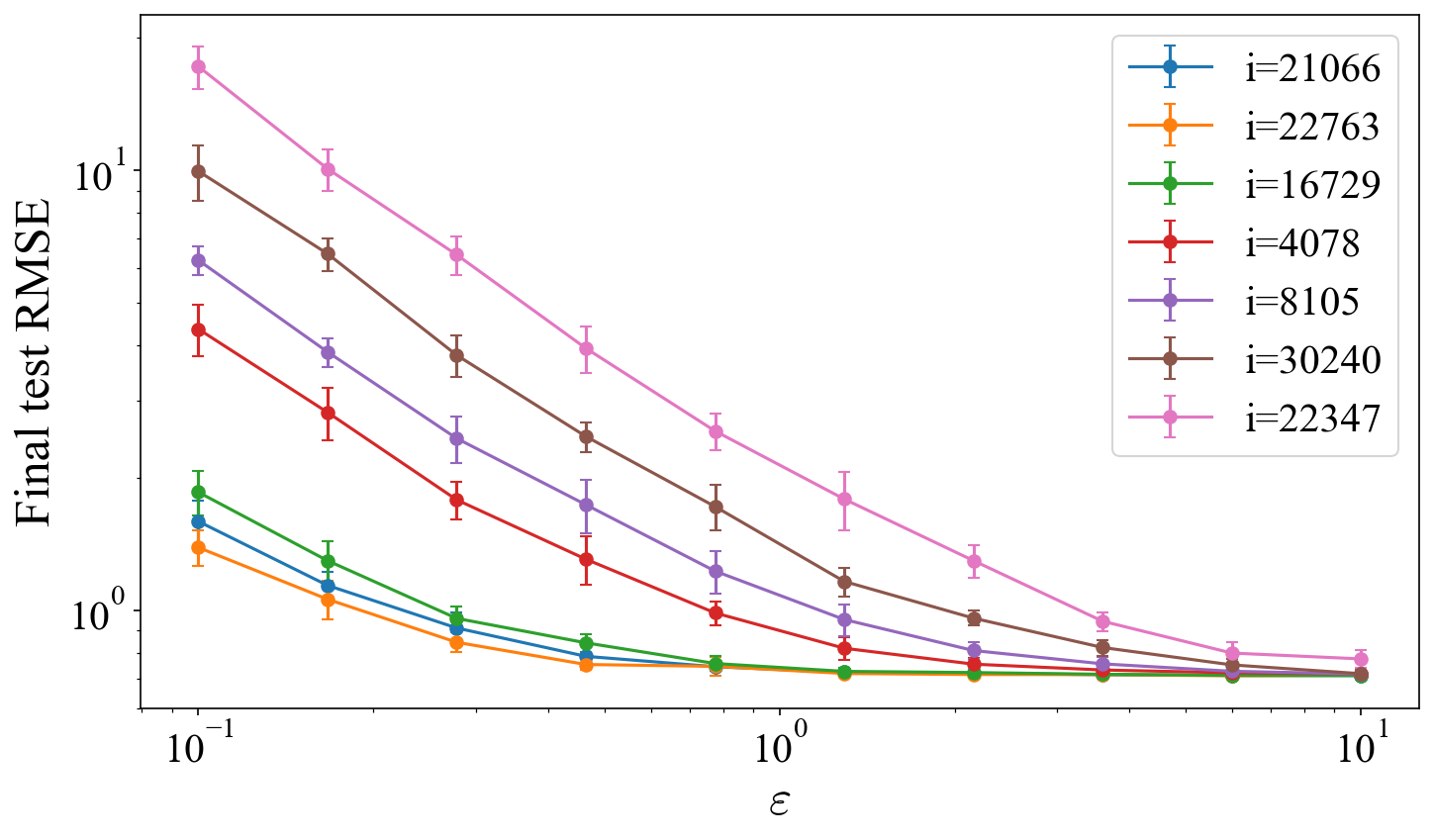}
        \caption{Final test RMSE versus $\varepsilon$ (log--log scale).}
        \label{fig:nyuv2-loglog-rmse}
    \end{subfigure}
    \hfill
    \begin{subfigure}[t]{0.49\linewidth}
        \centering
        \includegraphics[width=\linewidth]{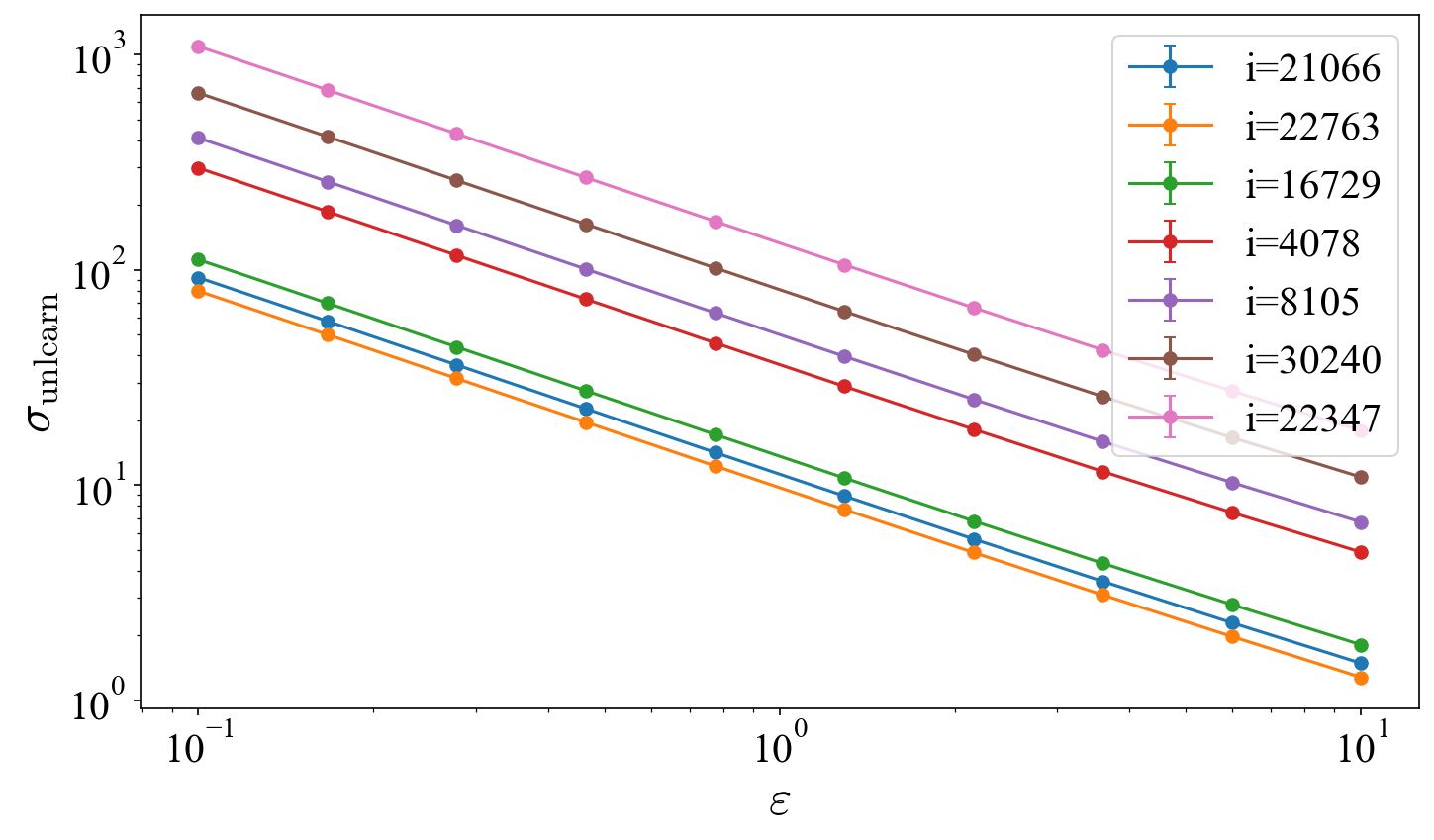}
        \caption{Calibrated $\sigma_{\mathrm{unlearn}}$ versus $\varepsilon$ (log--log scale).}
        \label{fig:nyuv2-loglog-sigma}
    \end{subfigure}
    \caption{Full-range privacy--utility tradeoff on NYU-Depth for the seven
    representative indices selected from the full gradient-norm spectrum. Left:
    utility improves as the target privacy budget $\varepsilon$ increases.
    Right: the calibrated unlearning noise decreases with $\varepsilon$, with
    large variability across removed points.}
    \label{fig:nyuv2-loglog-all}
\end{figure}

\paragraph{Implementation details for Figure~\ref{fig:nyuv2-hp-sensitivity-map}.}
For each representative index, we plot $20$ independent Langevin trajectories, using
$T=800$ learning iterations and $\sigma_{\mathrm{learn}}=0.01$.

\paragraph{Implementation details for Figure~\ref{fig:nyuv2-privacy-utility}.}
\label{app:fig-privacy-utility}
We use $\sigma_{\mathrm{learn}}=0.01$.
Each value reports the mean over $20$ independent runs; error bars show one standard deviation.
In addition to final test RMSE, we report in Figure~\ref{fig:sigma-per-instance}
the values of the unlearning noise level $\sigma_{\mathrm{unlearn}}$ required to achieve
a target privacy budget $\varepsilon$ for individual data points.
Each curve corresponds to a distinct removal request, and illustrates how the noise
calibration varies across points.
We highlight the magnitude of this heterogeneity: 
for a fixed privacy level $\varepsilon$, the required $\sigma_{\mathrm{unlearn}}$ can
differ by a factor of up to four between the least and most influential points.
This variability persists across the entire range of privacy budgets considered.

\begin{figure}[!htbp]
    \centering
    \includegraphics[width=0.5\linewidth]{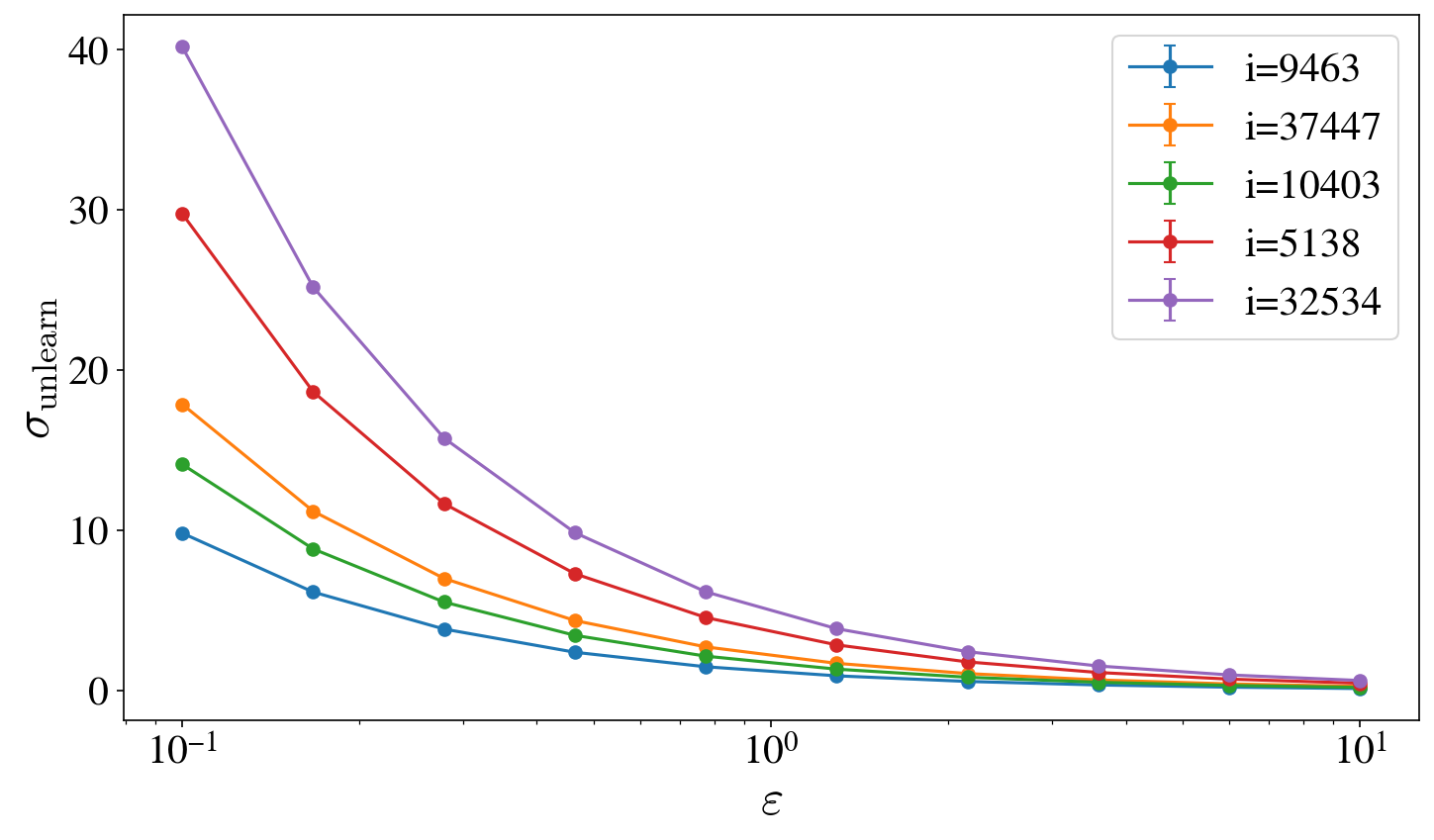}
    \caption{
    Required unlearning noise level $\sigma_{\mathrm{unlearn}}$ as a function of the
    target privacy budget $\varepsilon$ for several individual removal requests (NYU-Depth).
    Each curve corresponds to a distinct unlearned data point.
    For a fixed privacy target, the required noise varies substantially across points,
    with differences of up to a factor four between the least and most influential
    examples.
    }
    \label{fig:sigma-per-instance}
\end{figure}

\paragraph{Implementation details for Figure~\ref{fig:ablation-TK}.}
We use $\sigma_{\mathrm{learn}}=0.01, T=800, K=80$.
Each value reports the mean over $20$ independent runs; error bars show one standard deviation.

\paragraph{Learning-time privacy baseline \citep{Abadi_2016}.}
\label{app:learning-dp-baseline}

As a first baseline, we compare against a learning-time privacy mechanism,
which protects all training points uniformly during optimization rather than
performing targeted post-hoc deletion.
In contrast to PILU, this baseline does not adapt the privacy cost to the
specific point being removed.

Concretely, we consider a clipped noisy gradient-descent procedure on the full
training objective, with Gaussian noise calibrated to satisfy a target privacy
budget $(\varepsilon,\delta)$ over the whole training-and-unlearning trajectory, 
i.e. $(T+K)$ iterations.
Writing $\theta_k$ for the parameter at iteration $k$, the update takes the form
\[
\theta_{k+1}
=
\theta_k
-
\eta \bigl(\sum_{j=1}^n\mathrm{clip}_C (\nabla \ell(\theta; x_j,y_j)) + \zeta_k\bigr),
\qquad
\zeta_k\sim\mathcal N(0,\sigma^2 I),
\]
where $\mathrm{clip}_C$ denotes the clipped gradient at threshold $C$ and $\sigma$ is the noise
multiplier.

Privacy accounting is then performed through a Renyi-DP \citep{Mironov_2017}
upper bound over the total number of optimization steps $(T+K)$.

For each Renyi order $\alpha>1$, we use the bound
\[
\varepsilon_\alpha(\sigma)
=
(T+K) \frac{\alpha C^2}{2\sigma^2}
+
\frac{\log(1/\delta)}{\alpha-1},
\]
and the final privacy estimate is obtained by minimizing over a fixed grid of
orders:
\[
\varepsilon(\sigma,\delta)
=
\min_{\alpha>1}
\left\{
(T+K) \frac{\alpha C^2}{2\sigma^2}
+
\frac{\log(1/\delta)}{\alpha-1}
\right\}.
\]
Given a target privacy level $\varepsilon_{\max}$, the corresponding noise multiplier
$\sigma$ is calibrated by binary search as the smallest value such that
\[
\varepsilon(\sigma,\delta)\le \varepsilon_{\max}.
\]

This baseline yields a single global privacy--utility trade-off, since the same
noise level is imposed during training regardless of which point is later deleted.
It should therefore be interpreted as a uniform learning-time privacy baseline,
rather than as a targeted post-hoc unlearning method.

\paragraph{Implementation details for Figure~\ref{fig:nyuv2-ours-vs-dpgd}.}
Both methods use $T=800, K=80$ and the same hyperparameters $\eta=1/L, \lambda=10^{-4}$. 
Each value reports the mean over $20$ independent runs; error bars show one standard deviation.

For PILU, we use $\sigma_{\mathrm{learn}}=0.01$ and $\sigma_{\mathrm{unlearn}}$ 
is calibrated for each removed index and each target $\varepsilon$ with Theorem~\ref{thm:ridge-gdp-main}.

For the DP-GD baseline, we use clipping $C=24$ which corresponds to the $75\%$ 
quantile of observed per-sample gradient norm in the dataset, and 
$\sigma$ is calibrated before training to reach the targeted unlearning guarantee $\varepsilon$.

\paragraph{Newton step baseline \citep{guo2020certified}.}
\label{app:newton-baseline}

Our second baseline is the \emph{Newton update removal mechanism} introduced by
\citet{guo2020certified}. The idea is to first train a model
with a method called objective perturbation \citep{chaudhuri2011}, then remove a point through a one-step Newton
update, and finally control the quality of this approximation through the norm of
the gradient residual.

More precisely, the training objective is perturbed by a random linear term:
\begin{equation}
L_b(\theta;\D)
=
\sum_{j=1}^n \ell(\theta^\top x_j,y_j)
+
\frac{\lambda n}{2}\|\theta\|_2^2
+
b^\top \theta,
\label{eq:perturbed-objective-guo}
\end{equation}
where $b$ is sampled once at training time from a centered Gaussian distribution
of standard deviation $\sigma_{\mathrm{Guo}}$. 
The randomness is thus injected once in the objective
rather than at every gradient step. The model $\theta^\star$ is then obtained by
approximately minimizing $L_b(\theta;\D)$.

To remove one data point $(x_i,y_i)$, \citet{guo2020certified}
define
\[
\Delta
=
\lambda \theta^\star + \nabla \ell((\theta^\star)^\top x_i,y_i),
\]
and denote by
\[
H_{\theta^\star} = \nabla^2 L(\theta^\star;\D')
\qquad\text{with}\qquad
\D'=\D\setminus\{(x_i,y_i)\},
\]
the Hessian of the retained objective evaluated at $\theta^\star$.
The removal update is the one-step Newton correction
\[
\theta^{-}
=
\theta^\star + H_{\theta^\star}^{-1}\Delta.
\]
As emphasized in the paper, this correction is motivated by the influence-function
view of the removed sample \citep{kohliang2017}. 
A central point in \citet{guo2020certified} is that certified removal does not follow
from the Newton update alone: even a small gradient residual $\|\nabla L_b(\theta^{-};\D')\|$ 
may still leak information about the
removed point. The role of objective perturbation is precisely to mask the gradient residual. 
Then, if
\[
\|\nabla L_b(\theta^{-};\D')\|_F \le \varepsilon',
\]
the residual bound is translated into a certified-removal parameter through the
objective-perturbation calibration
\begin{equation}
\varepsilon
=
\frac{\varepsilon'}{\sigma_{\mathrm{Guo}}}
\sqrt{2\log\!\left(\frac{1.5}{\delta}\right)}.
\label{eq:epsilon-guo}
\end{equation}


As highlighted by \citet{guo2020certified}, for ridge regression the Newton
correction recovers the exact leave-one-out solution when applied at the exact
minimizer of the perturbed least-squares objective. Indeed, since the objective
is quadratic, the Newton step is exact and the retained-set gradient residual
vanishes. In contrast, in our finite-time setting, the model is only an
approximate minimizer after $T$ gradient steps. Consequently, the gradient
residual after applying the Newton step $\|\nabla L(\theta_T^{-};\D')\|_F$ 
(with no objective perturbation), remains non-zero even for large training
horizons; see the middle column in Table~\ref{tab:guo-gradient-residual-vs-T}. Thus, to obtain valid
unlearning guarantees with the Newton step method by \citet{guo2020certified}, 
we must add noise via objective perturbation and bound the gradient residual after
applying the Newton step.

\begin{table}[!htbp]
\centering
\caption{Gradient residual $\|\nabla L_b(\theta_T^{-};\D')\|$ after the Guo Newton step for different
training horizons $T$ on NYU-Depth.}
\label{tab:guo-gradient-residual-vs-T}
\begin{tabular}{rcc}
\toprule
$T$ & No obj. pertubation & $\sigma_{\mathrm{Guo}}=0.1$ \\
\midrule
$100$    & $39568.20$ & $39568.21$\\
$200$   & $15340.94$  & $15340.92$\\
$500$     & $2433.58$ & $2433.50$\\
$1000$   & $1194.44$  & $1194.36$\\
$2000$     & $525.18$ & $525.12$\\
$5000$    & $123.35$ & $123.35$ \\
$10000$    & $27.55$ & $27.66$ \\
$20000$     & $2.46$ & $3.47$ \\
\bottomrule
\end{tabular}
\end{table}

For ridge regression, let
\[
L_b(\theta;\D)
=
\frac12\|X\theta-y\|_F^2
+
\frac{\lambda n}{2}\|\theta\|_F^2
+
\langle b,\theta\rangle,
\qquad
H=X^\top X+\lambda n I .
\]
Gradient descent with $L$-smoothness, step size $\eta\le 1/L$, and
$m$-strong convexity gives the finite-time bound
\[
\|\nabla L_b(\theta_T;\D)\|_F
\le
L(1-\eta m)^T
\frac{
\|X^\top y\|_F+\|b\|_F
}{m}.
\]
For the quadratic ridge objective, the Newton correction on the retained
objective eliminates the leave-one-out term exactly. Hence,
\[
\|\nabla L_b(\theta_T^-;\D')\|_F
=
\|\nabla L_b(\theta_T;\D)\|_F
\le
L(1-\eta m)^T
\frac{
\|X^\top y\|_F+\|b\|_F
}{m}.
\]
Using the high-probability Gaussian bound
\[
\|b\|_F
\le
\sigma_{\mathrm{Guo}}
\left(\sqrt{pd}+\sqrt{2\log(1/\rho)}\right),
\]
we obtain the following high-probability bound on the gradient residual:
\[
\varepsilon'_T(\rho)
=
L(1-\eta m)^T
\frac{
\|X^\top y\|_F+
\sigma_{\mathrm{Guo}}
\left(\sqrt{pd}+\sqrt{2\log(1/\rho)}\right)
}{m}.
\]

However, this high-probability bound is extremely conservative.
Therefore, in our experiments, for the finite-$T$ ridge baseline, we use the actual retained
perturbed gradient residual
\[
\widehat{\varepsilon}'_T
=
\|\nabla L_b(\theta_T^{-};\D')\|_F
\]
and plug $\widehat{\varepsilon}'_T$ into \eqref{eq:epsilon-guo}.

\paragraph{Implementation details for Figure~\ref{fig:nyuv2-ours-vs-guo}.}
We use the same hyperparameters $\eta=1/L$ and 
$\lambda_{\text{PILU}} = n \lambda_{\text{Guo}} = 10^{-4}$.

For PILU, the learning and unlearning noise levels are fixed:
$\sigma_{\mathrm{learn}}=\sigma_{\mathrm{unlearn}}=0.1$, and we use a single
corrective unlearning step ($K=1$). For each value of $T$, the resulting per-instance
$(\varepsilon_i,\delta)$ unlearning guarantees are obtained using the reversed
Theorem~\ref{thm:ridge-gdp-main} for fixed
$i,T,K,\sigma_{\mathrm{learn}},\sigma_{\mathrm{unlearn}}$.

For the Newton-step baseline, we sample a single Gaussian perturbation with
scale $\sigma_{\mathrm{Guo}}=0.1$ and train the perturbed objective
\eqref{eq:perturbed-objective-guo}. We do not use
larger values of $\sigma_{\mathrm{Guo}}$, since they introduce a stronger bias in
the learned solution and noticeably degrade utility. For each value of $T$, we
apply the one-step Newton removal update and measure the retained-set gradient
residual; see the last column in Table~\ref{tab:guo-gradient-residual-vs-T}. The resulting
unlearning guarantee $\varepsilon$ is computed by plugging this residual
$\widehat{\varepsilon}'_T$ into \eqref{eq:epsilon-guo}.

\subsection{CIFAR-10: Non-Linear Unlearning and Empirical Privacy Evaluation}
\label{app:cifar10}

\paragraph{Dataset and splits.}
We use the CIFAR-10 dataset, consisting of $50{,}000$ training images and
$10{,}000$ test images of size $32\times32$ in $10$ classes.
We first subsample $5000$ images from the CIFAR-10 training split and $1000$
images from the test split, concatenate them, and then randomly select
$n=500$ points for training. All remaining samples are used for test.
The split is performed once using a fixed random permutation.

\paragraph{Preprocessing and data augmentation.}
Images are normalized using the standard CIFAR-10 channel-wise mean and
standard deviation.
For training, we apply random cropping with padding and random horizontal
flips, followed by normalization.
No additional data augmentation is used.
All preprocessing operations are fixed once the random seed is set.

\paragraph{Model architecture.}
We use a small VGG-style convolutional neural network tailored to CIFAR-10.
The architecture consists of three convolutional blocks, each composed of two
$3\times3$ convolutional layers with ReLU activations, followed by max-pooling.
The convolutional backbone is followed by a fully connected classifier with one
hidden layer of dimension $256$ and dropout.
Dropout is used to improve optimization stability in the small-sample regime.
All model parameters are trainable.

\paragraph{Training and unlearning protocol.}
The network is trained using gradient descent with explicit
$\ell_2$ regularization and optional isotropic Gaussian gradient noise.
Training proceeds for $T$ iterations on the full dataset $\D$.
Targeted unlearning is performed by continuing optimization for $K$ additional
iterations on the retain set $\Di$.

\paragraph{Empirical trade-off evaluation.}
Since no closed-form privacy accounting is available for the non-linear setting,
we evaluate unlearning guarantees empirically by estimating the privacy
trade-off curve between unlearned and retrained models.
For each index $i$, we repeat the following experiment $R=100$ times with independent
random seeds: we train the model on $D$, unlearn point $i$ using the
specified protocol, and independently retrain the model on $\Di$.
This yields two empirical distributions over final model parameters, denoted
$P_i$ (unlearned) and $Q_i$ (retrained).

\paragraph{Model representations and distinguisher.}
To compare $P_i$ and $Q_i$, we embed each trained model into a finite-dimensional
representation by evaluating its logits on a fixed probe set, which is sampled
once and shared across all indices and runs.
Both a linear (logistic regression) and a non-linear (two-hidden-layer MLP with 
Tanh activations) classifiers are then trained to distinguish between samples drawn 
from $P_i$ and $Q_i$, yielding a scalar score that approximates a log-likelihood 
ratio. 

\paragraph{Empirical trade-off curves.}
From the distinguisher scores, we construct an empirical trade-off curve
$\beta_i(\alpha)$, where $\alpha$ and $\beta$ denote type-I and type-II errors,
respectively.
In addition to the full trade-off curve, we report the area under the ROC curve (AUC)
(see Table~\ref{tab:nonlinear-epsilon}), 
which provides a sanity check on the distinguishability between $P_i$ and
$Q_i$ ; larger AUC values indicate that the two distributions are more easily
distinguishable by the chosen test, while values close to $0.5$ correspond to
near-indistinguishability.

\begin{figure}[!htbp]
  \centering
  \begin{subfigure}[t]{0.43\linewidth}
      \centering
      \includegraphics[width=\linewidth]{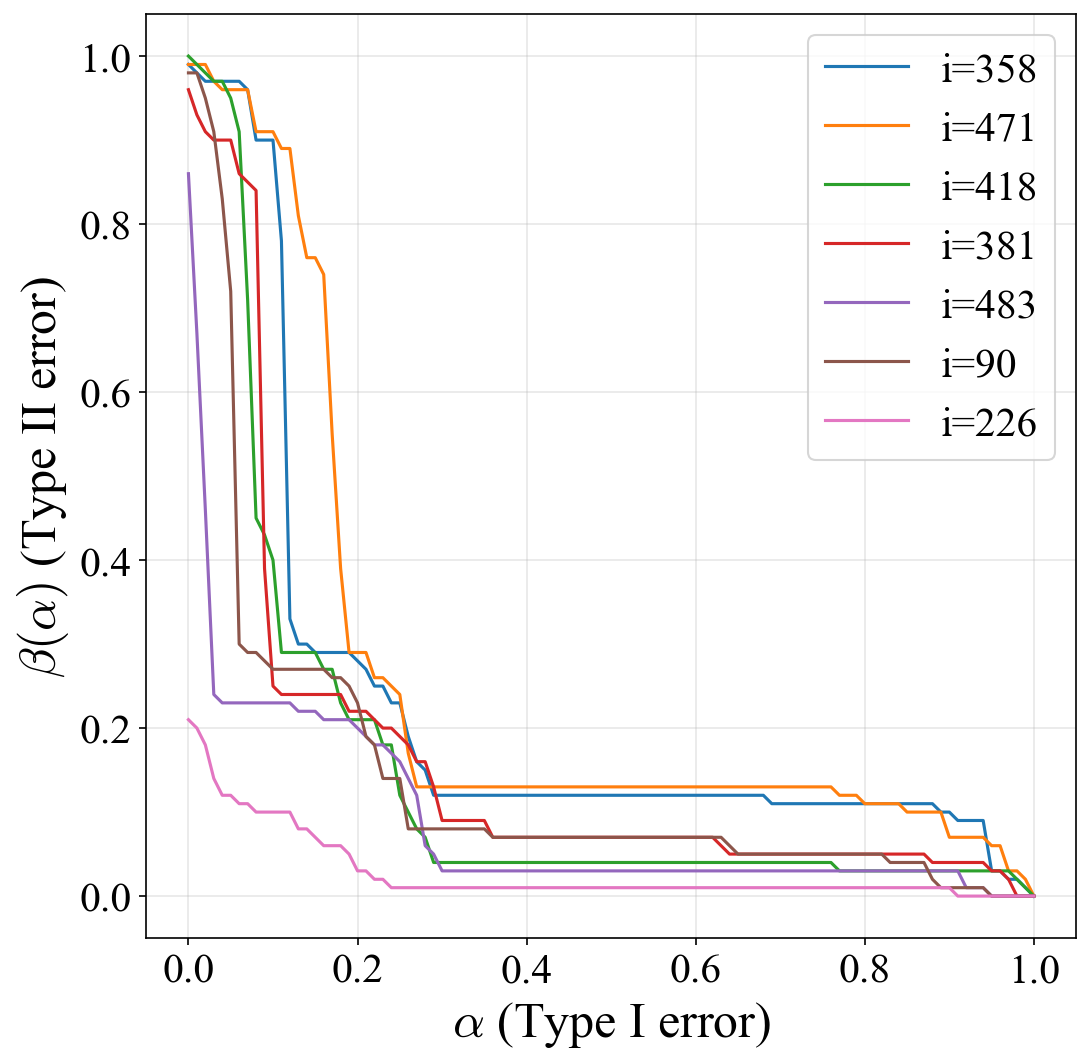}
      \caption{Linear distinguisher}
  \end{subfigure}
  \begin{subfigure}[t]{0.43\linewidth}
      \includegraphics[width=\linewidth]{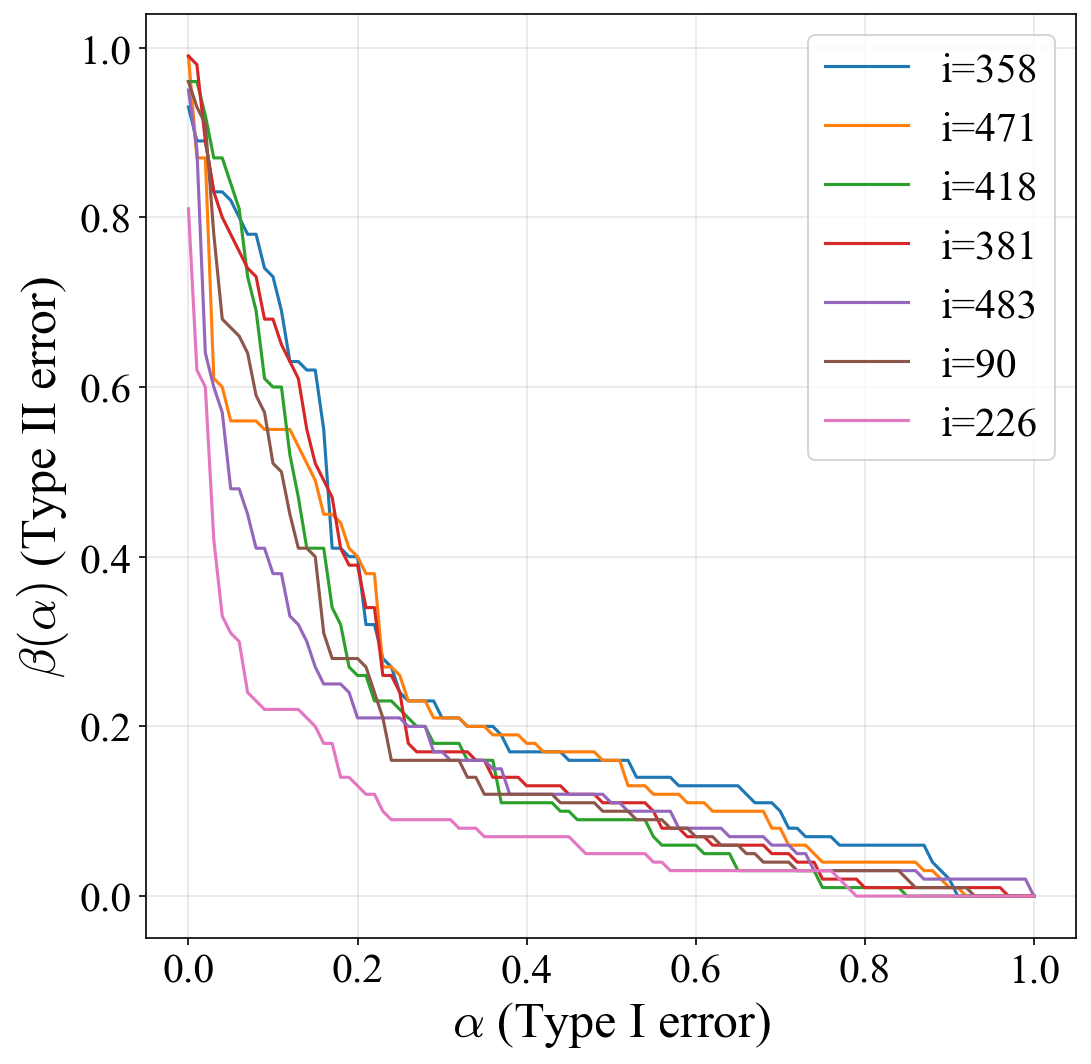}
      \caption{Non-linear distinguisher}
    \end{subfigure}
  \caption{Empirical hypothesis-testing trade-off curves $\beta_i(\alpha)$ for
  several representative CIFAR-10 points.
  All points are unlearned using the same noise level
  $\sigma_{\mathrm{unlearn}}=0.02$, yet the resulting trade-off profiles differ
  substantially across indices.}
  \label{fig:cifar10-tradeoff}
\end{figure}

\paragraph{Gaussian DP approximation.}
We fit for each index $i$ a Gaussian differential privacy (GDP) parameter
$\hat{\mu}_i$ by least-squares regression of the form
\[
\beta_i(\alpha) \approx \Phi\!\bigl(\Phi^{-1}(1-\alpha) - \mu_i\bigr),
\]
where $\Phi$ denotes the standard normal cumulative distribution function.

The quality of the approximation is assessed by the mean squared error (MSE)
between the fitted GDP curve and the empirical trade-off curve.
Across all evaluated indices, the fit error is consistently small (typically
between $10^{-3}$ and $10^{-2}$), indicating that GDP provides an accurate
approximation of the empirical hypothesis-testing behavior, despite the
non-convex and non-linear nature of the training dynamics.

\paragraph{Empirical $\hat \varepsilon_i$ at fixed $\delta$.}
Each fitted
$\hat{\mu}_i$ is converted into an empirical $(\hat \varepsilon_i,\delta)$ guarantee
by fixing $\delta = 1/n$, where $n$ denotes the number of training points.
The corresponding $\hat \varepsilon_i$ values are obtained by numerically inverting
the standard GDP-to-DP conversion.
We stress that these $\hat \varepsilon_i$ values are empirical and do not
constitute certified guarantees.
They should be interpreted as quantitative summaries of distinguishability
between unlearned and retrained models under the chosen experimental protocol.
Nevertheless, they provide a meaningful way to compare the relative difficulty
of unlearning different data points.

We only report the results from the non-linear distinguisher in the main 
paper as both distinguishers lead to similar heterogeneity (see 
Figure~\ref{tab:nonlinear-epsilon-distinguisher}).

\begin{table}[!htbp]
\centering
\caption{Empirical estimation of $\eps$ on CIFAR-10.
The linear distinguisher is a logistic regression, and the non-linear distinguisher
is a two-hidden-layer MLP with Tanh activations.}
\label{tab:nonlinear-epsilon-distinguisher}
\begin{tabular}{c c c}
\toprule
Index $i$ & $\hat{\varepsilon}_i$ (linear) & $\hat{\varepsilon}_i$ (non-linear) \\
\midrule
358 & 3.52 & 3.05 \\
471 & 2.83 & 3.60 \\
418 & 4.78 & 3.87 \\
381 & 4.59 & 3.48 \\
483 & 7.41 & 5.00 \\
90 & 5.36 & 4.28 \\
226 & 11.63 & 7.10 \\
\bottomrule
\end{tabular}
\end{table}

\section{Extension to Cyclic Mini-Batch Learning}
\label{app:mini-batch}

We also consider a deterministic cyclic mini-batch extension of our unlearning procedure.
The motivation is primarily practical: for larger models, full-batch updates may become
computationally impractical, whereas cyclic mini-batch training provides a scalable
alternative while preserving the same targeted post-hoc unlearning principle.

Let $n$ be the number of training samples and let $b$ be a batch size.
We partition the dataset into a deterministic sequence of contiguous batches
\[
B_1,\dots,B_M,
\qquad
M=\Bigl\lceil \frac{n}{b}\Bigr\rceil,
\]
and we keep this same order throughout both training and unlearning.
Given a point $i\in\{1,\dots,n\}$ to be removed, we denote by
$t_i\in\{1,\dots,M\}$ the unique batch index such that $i\in B_{t_i}$.

We consider the same ridge-regularized empirical objective as in the full-batch setting,
\[
f_D(\theta)=\sum_{j=1}^n \ell_j(\theta)+\frac{\lambda}{2M}\|\theta\|^2,
\]
and, in the ridge regression case,
\[
\ell_j(\theta)=\frac12\|x_j^\top \theta-y_j\|^2.
\]
For each batch $B_t$, the corresponding batch gradient is
\[
\nabla f_{B_t}(\theta)
=
X_{B_t}^\top\!\bigl(X_{B_t}\theta-Y_{B_t}\bigr)+\lambda \theta.
\]
Each batch visit then performs one Langevin-type update
\[
\theta \leftarrow \theta - \eta \nabla f_{B_t}(\theta)
+ \sqrt{2\eta}\,\sigma\,\xi,
\qquad \xi\sim\mathcal N(0,I),
\]
with $\sigma=\sigma_{\mathrm{learn}}$ during training and
$\sigma=\sigma_{\mathrm{unlearn}}$ during unlearning.

\paragraph{Learning phase.}
Training consists of $T$ batch updates, where $T$ is counted directly at the
update level. The batches are visited cyclically in the fixed order
$B_1,\dots,B_M$: at learning step \(k\in\{0,\dots,T-1\}\), the update uses batch
\[
B_{t_k},
\qquad
t_k = 1 + (k \bmod M).
\]

\paragraph{Unlearning phase.}
Starting from the learned parameter, we perform $K$ additional batch updates,
where $K$ is also counted directly at the update level. The same cyclic order is
used: at unlearning step \(k\in\{T,\dots,T+K-1\}\), the update uses batch
\[
B_{t_k},
\qquad
t_k = 1 + (k \bmod M).
\]
Whenever the batch containing the target point is visited, that point is removed
from the update, that is, $B_{t_i}$ is replaced by $B_{t_i}\setminus\{i\}$,
while all other batches are left unchanged. 

To use a single step-size across the whole cyclic procedure, we adopt a conservative choice
based on the largest batch smoothness. For each batch $B_t$, define
\[
L_t = \|X_{B_t}\|_{\mathrm{op}}^2 + \lambda,
\qquad
L_{\max}=\max_{1\le t\le M} L_t.
\]
We then use the common step-size $\eta=\frac{1}{L_{\max}}$, 
which is valid for every batch in the cycle.
For the strong-convexity parameter entering the privacy accountant, we use the global
conservative bound $m=\lambda$.

As in the full-batch setting, privacy calibration relies on high-probability
bounds on the historical influence of the point to be deleted. The only
difference is that the linear dynamics are now time-inhomogeneous, since the
batch changes at each update. At learning step \(k\in\{0,\dots,T-1\}\), let
\[
t_k = 1 + (k \bmod M)
\]
be the batch index, and define
\[
A_k
=
I_p-\eta\bigl(X_{B_{t_k}}^\top X_{B_{t_k}}+\lambda I_p\bigr),
\qquad
C_k=X_{B_{t_k}}^\top Y_{B_{t_k}} .
\]
The mean and covariance are propagated through this sequence of batch-dependent
linear maps:
\[
m_{k+1}
=
A_k m_k+\eta C_k,
\qquad
\Sigma_{k+1}
=
A_k \Sigma_k A_k^\top
+
2\eta\sigma_{\mathrm{learn}}^2 I_p .
\]
In particular, when \(t_k=t_i\), this is the residual immediately before the
batch containing \(i\) is processed. Since \(\theta_k\) is Gaussian, the residual
remains Gaussian:
\[
r_{i,k}=x_i^\top\theta_k-y_i
\sim
\mathcal N(u_{i,k},v_{i,k}I_d),
\qquad
u_{i,k}=x_i^\top m_k-y_i,
\quad
v_{i,k}=x_i^\top\Sigma_k x_i .
\]
Hence
\[
z_{i,k}:=\frac{r_{i,k}}{\sqrt{v_{i,k}}}
\sim
\mathcal N\!\left(
\frac{u_{i,k}}{\sqrt{v_{i,k}}},I_d
\right),
\]
and therefore
\[
\frac{\|r_{i,k}\|_2^2}{v_{i,k}}
=
\|z_{i,k}\|_2^2
\sim
\chi'^2_d\!\left(
\frac{\|u_{i,k}\|_2^2}{v_{i,k}}
\right).
\]
Let $q_{i,k}(\cdot)$ denote the quantile function of this noncentral
chi-square distribution.
Define
\[
\mathcal I_i:=\{k\in\{0,\dots,T-1\}: t_k=t_i\}, 
\qquad 
N_i
=
\bigl| \mathcal I_i\bigr|.
\]
$N_i$ is the number of learning updates at which the batch containing $i$ is
processed; equivalently, $N_i=\lceil T/M\rceil$ or $\lfloor T/M\rfloor$,
depending on the position of $B_{t_i}$ in the cycle.

Since point \(i\) can affect the gradient only when its own batch is processed,
we assign a nonzero sensitivity bound only at steps \(k\) such that \(t_k=t_i\).
For such steps, we set
\[
s_{i,k}^{\delta_{\mathrm s}}
=
\eta\|x_i\|_2
\sqrt{
v_{i,k}\,
q_{i,k}\!\left(1-\frac{\delta_{\mathrm s}}{N_i}\right)
},
\qquad t_k=t_i,
\]
and set the sensitivity to zero otherwise.  This yields a sparse
sensitivity sequence over the learning trajectory, with nonzero entries exactly
at the updates where the deleted point can influence the dynamics.

As in the full-batch proof of Proposition~\ref{prop:gdp-tracking}
(see Appendix~\ref{app:proof-prop-gdp-tracking}, ``Endpoint Constraint and
Optimization''), the GDP parameter is obtained by choosing interpolation
increments \(a_{k+1}\) so as to minimize the cumulative Gaussian cost
\[
\sum_{k=0}^{T+K-1}\frac{a_{k+1}^2}{2\eta\sigma_k^2},
\]
subject to the endpoint constraint \(z_{T+K}=0\) and to the feasibility
constraints
\[
z_{k+1}
=
cz_k+s_{i,k}^{\delta_{\mathrm s}}\mathbf 1_{\{k<T\}}-a_{k+1},
\qquad
0\le a_{k+1}\le
cz_k+s_{i,k}^{\delta_{\mathrm s}}\mathbf 1_{\{k<T\}}.
\]

The only additional issue in the cyclic mini-batch case is that the sparse
sensitivity sequence may start with several zero entries. Let
\[
\tau_i := \min \mathcal I_i
\]
be the first learning update at which the batch containing point \(i\) is
processed. For every \(k<\tau_i\), the two dynamics have not yet seen point
\(i\), hence \(s_{i,k}^{\delta_{\mathrm s}}=0\). Since also \(z_0=0\), the above
constraints force recursively
\[
a_{k+1}=0,
\qquad
z_{k+1}=0,
\qquad
k<\tau_i.
\]
Thus the interpolation cannot spend any privacy increment before point \(i\)
has first influenced the trajectory. We therefore impose \(a_{k+1}=0\) for
\(k<\tau_i\), and apply the same endpoint-constrained optimization from the
first visit onward. This removes the apparent feasibility issue caused by the
initial zero sensitivities. After \(\tau_i\), even when
\(s_{i,k}^{\delta_{\mathrm s}}=0\) at non-target batches, one may still have
\(z_k>0\), so the admissible upper bound is \(cz_k\) and \(a_{k+1}\) need not be
zero.

With this convention, we calibrate \(\sigma_{\mathrm{unlearn}}\) using the same
GDP accountant as in Theorem~\ref{thm:ridge-gdp-main}, applied to the sparse
mini-batch sensitivity sequence
\(\{s_{i,k}^{\delta_{\mathrm s}}\}_{k=0}^{T-1}\). Here \(T\) and \(K\) denote
total numbers of batch updates. The resulting value of
\(\sigma_{\mathrm{unlearn}}\) is chosen so that the final GDP parameter yields
the target \((\varepsilon,\delta)\) per-instance unlearning guarantee.

\paragraph{Experiments.} To illustrate the cyclic minibatch extension, 
Figure~\ref{fig:nyuv2-minibatch-tradeoff}
reports the per-instance privacy--utility tradeoff on NYU-Depth for four representative
batch sizes $\{256, 512, 1024, 2048\}$, while keeping the number of corrective unlearning updates fixed at $K=80$.
For each batch size, the number of learning updates $T$ is chosen so that the model reaches
the same pre-unlearning train RMSE, approximately $0.49$, matching the utility obtained
at convergence in the full-batch setting: the observed
differences are therefore not due to one method starting from a better trained model.

The comparison in Figure~\ref{fig:nyuv2-minibatch-tradeoff} shows a clear batch-size effect. For fixed pre-unlearning
utility and fixed unlearning horizon $K=80$, larger batches yield better
privacy--utility curves. The reason is that they require a shorter learning
horizon to reach the same training utility: in our NYU-Depth experiment,
$b=256$ requires $T=10\,650$ learning updates, whereas $b=2048$ requires only
$T=2280$. Since the privacy accountant accumulates contributions along the
learning trajectory, reducing $T$ also reduces the learning-time sensitivity
that must be compensated during unlearning. This explains why the curves improve
as $b$ increases in Figure~\ref{fig:nyuv2-minibatch-tradeoff}.

Nevertheless, Table~\ref{tab:nyuv2-minibatch-cost} shows the corresponding
computational cost, measured by the number of processed examples. Although
larger batches reduce the number of learning updates, each learning and unlearning
update is more expensive; the total cost increases from $2.75$M processed examples for
$b=256$ to $4.83$M for $b=2048$. 

Thus, the batch size controls a tradeoff between privacy--utility and
computation. Larger batches improve the privacy--utility curve by shortening
the learning trajectory, while smaller batches are computationally cheaper for
the same target pre-unlearning utility and fixed $K$. Compared with the
full-batch baseline, the cyclic minibatch variants are substantially cheaper in
processed examples (see Table~\ref{tab:nyuv2-minibatch-cost}), but their 
privacy--utility curves depend strongly on how the
batch size affects the required learning horizon.

\begin{figure}[!htbp]
    \centering
    \begin{subfigure}[t]{0.49\linewidth}
        \centering
      \includegraphics[width=\linewidth]{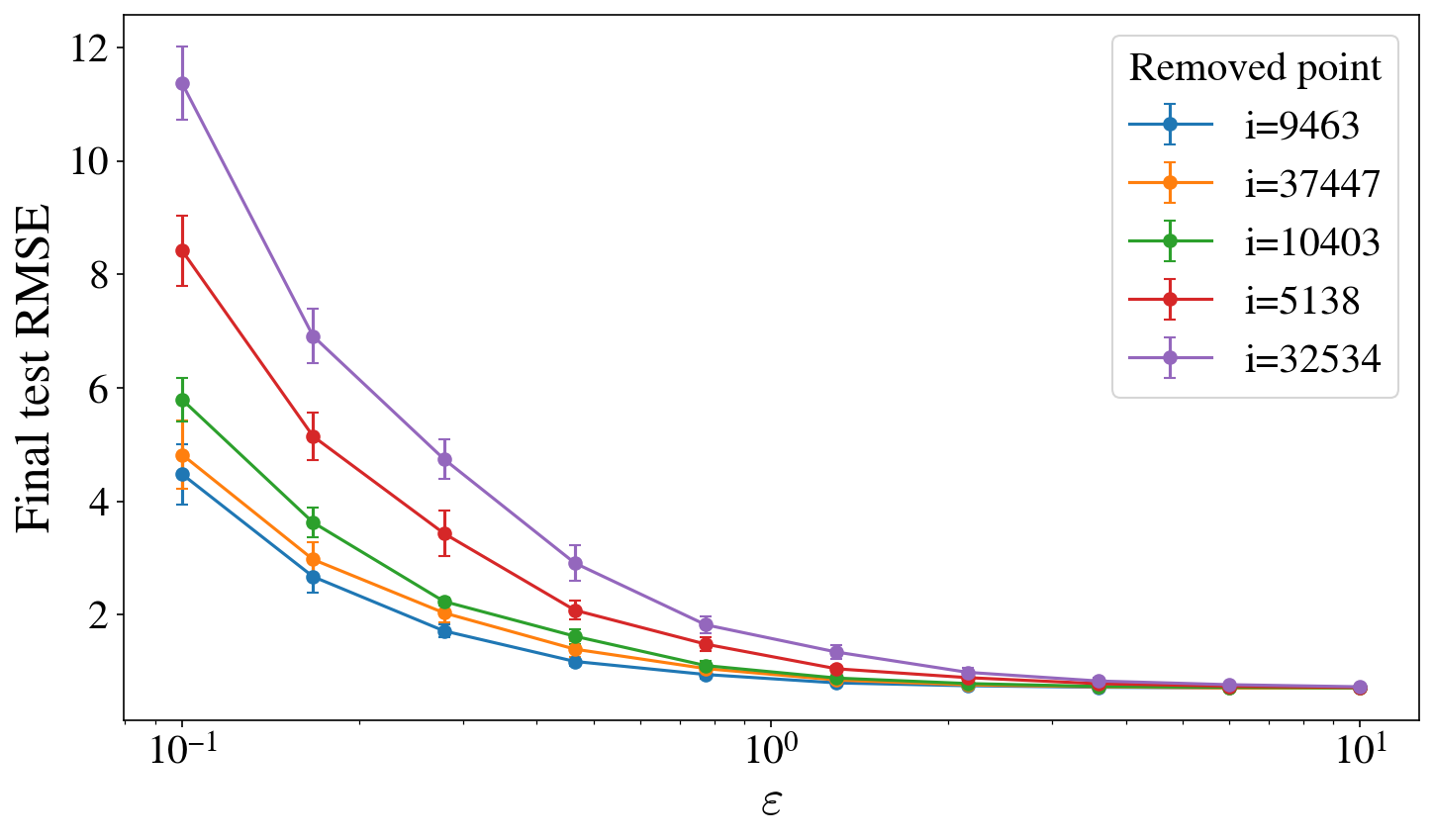}
        \caption{$b=256$ ($150$ batches), \\$T = 10\,650$, $K = 80$}
        \label{fig:nyuv2-minibatch-tradeoff-b256}
    \end{subfigure}
   \hfill
    \begin{subfigure}[t]{0.49\linewidth}
        \centering
      \includegraphics[width=\linewidth]{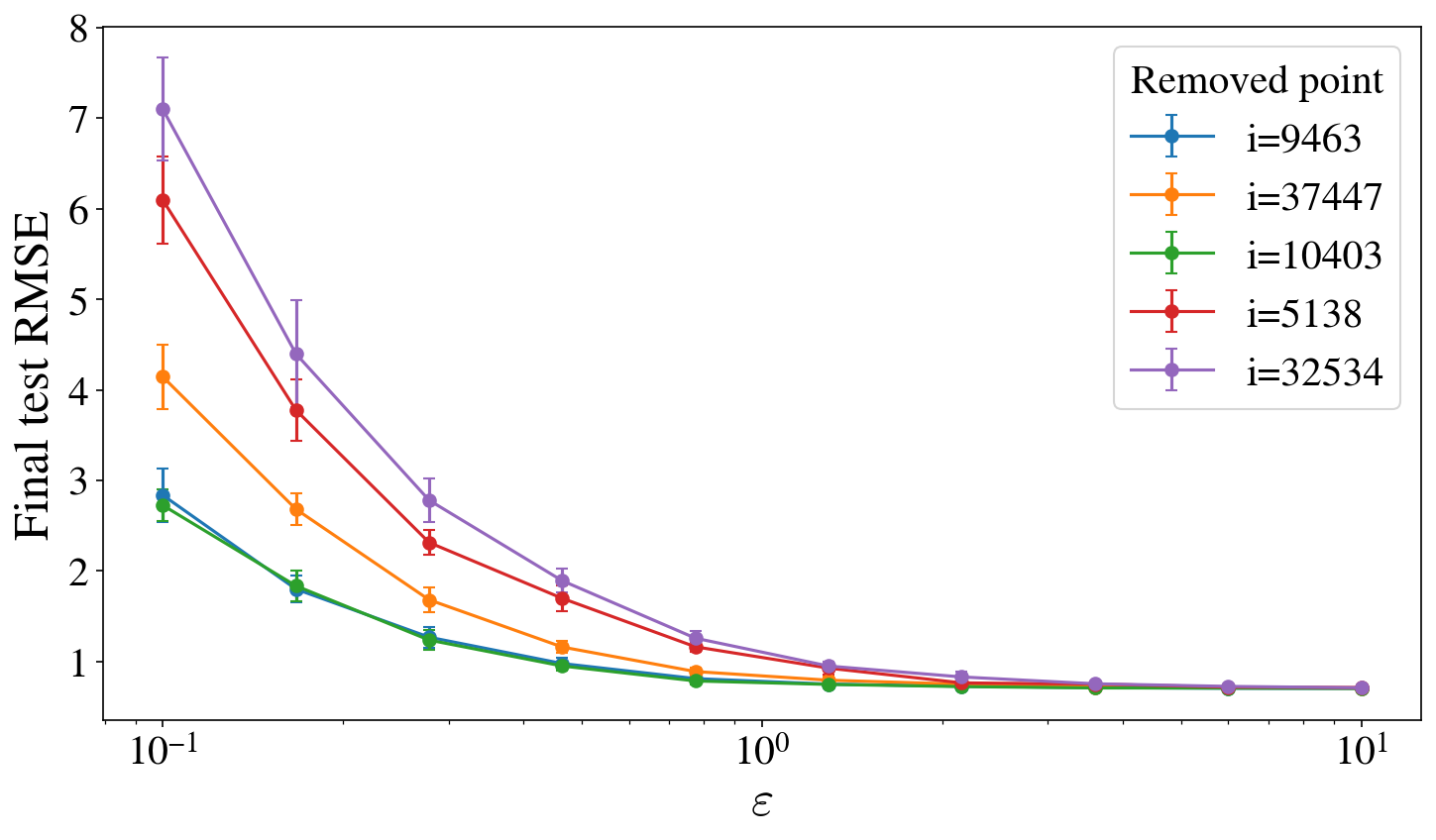}
       \caption{$b=512$ ($75$ batches), \\$T = 5625$, $K = 80$}
        \label{fig:nyuv2-minibatch-tradeoff-b512}
    \end{subfigure}
   \hfill
    \begin{subfigure}[t]{0.49\linewidth}
        \includegraphics[width=\linewidth]{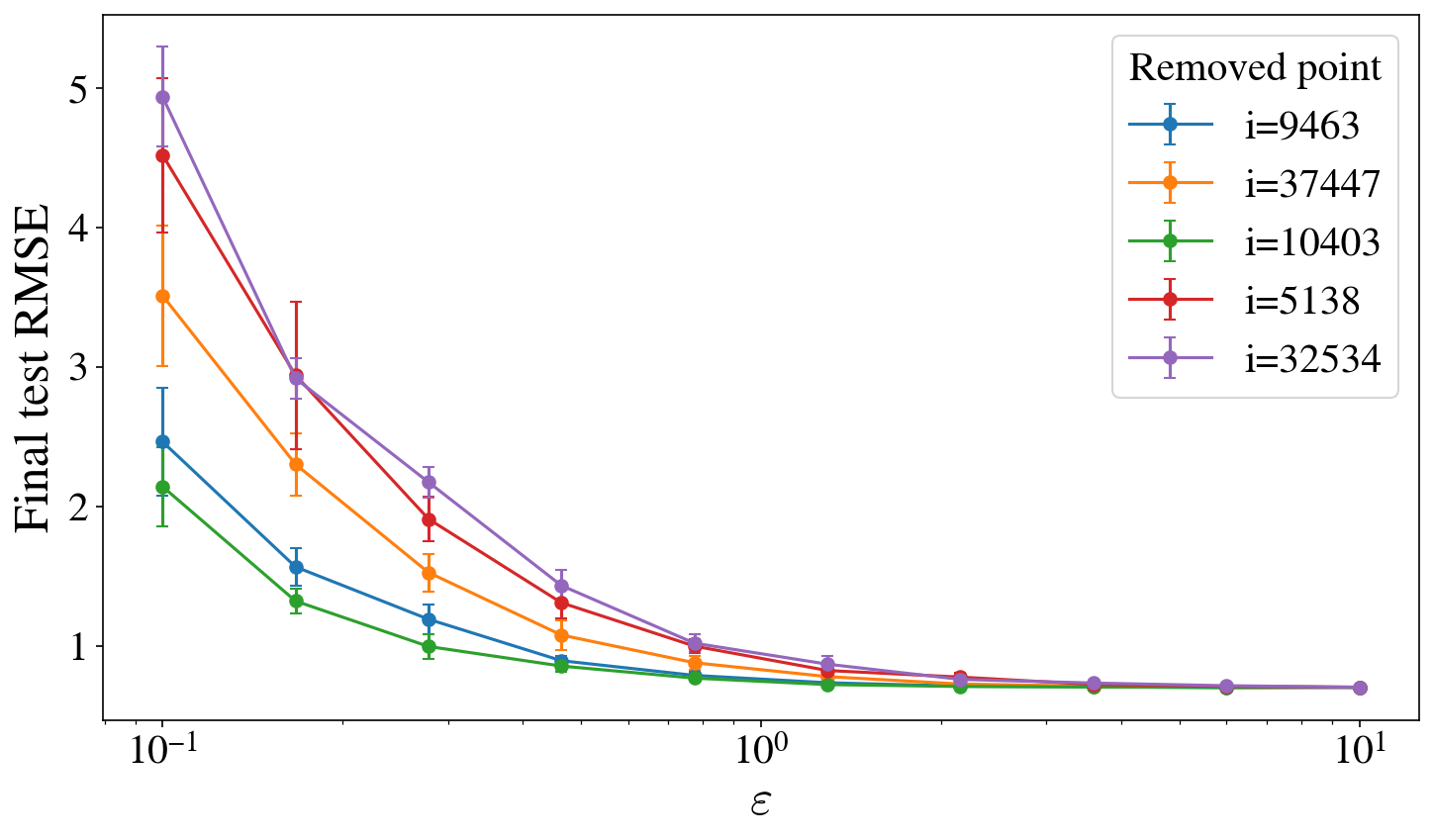}
        \caption{$b=1024$ ($38$ batches), \\$T= 3420$, $K = 80$}
        \label{fig:nyuv2-minibatch-tradeoff-b1024}
    \end{subfigure}
   \hfill
    \begin{subfigure}[t]{0.49\linewidth}
        \centering
        \includegraphics[width=\linewidth]{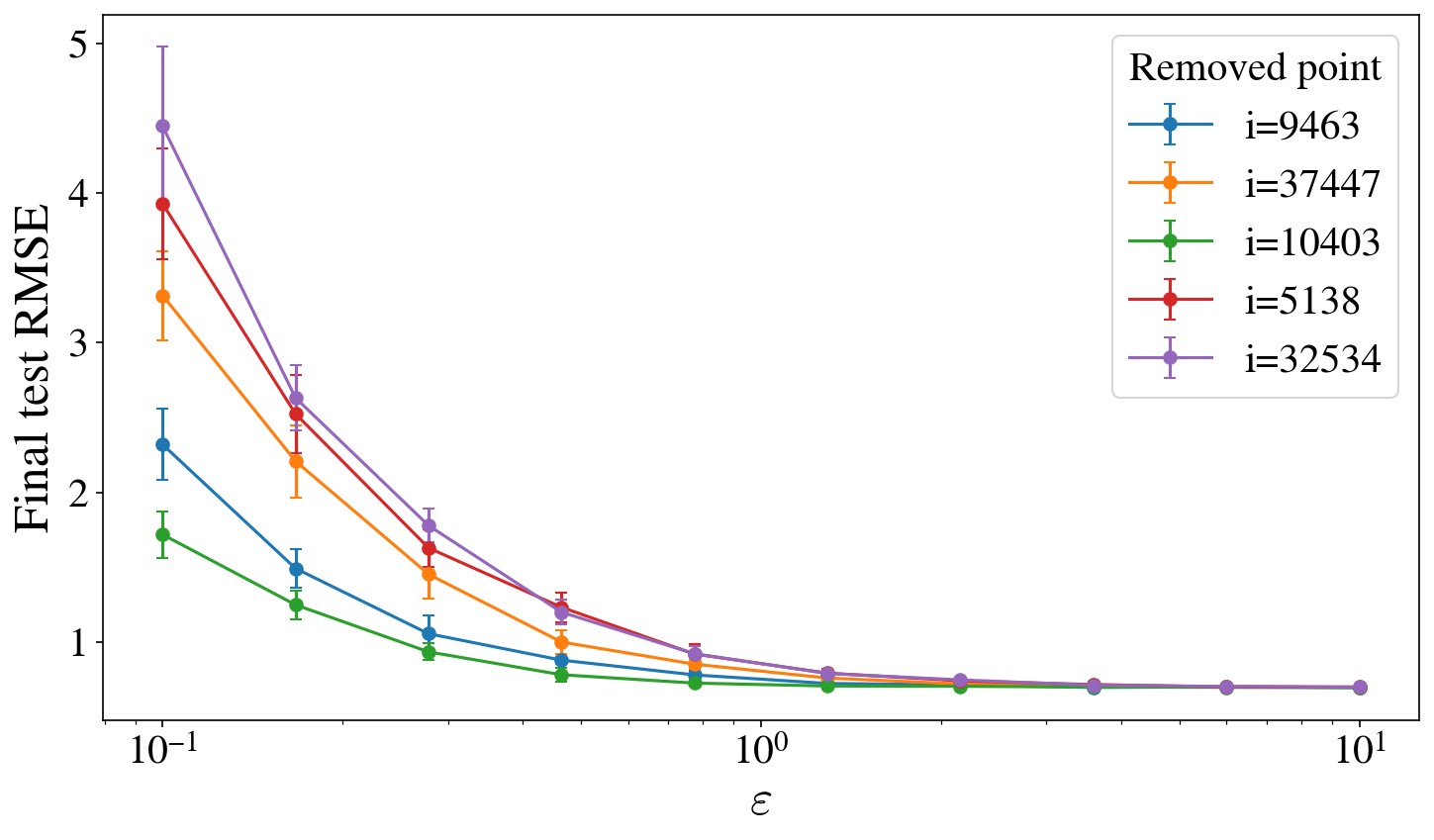}
        \caption{$b=2048$ ($19$ batches), \\$T = 2280$, $K = 80$}
        \label{fig:nyuv2-minibatch-tradeoff-b2048}
    \end{subfigure}

    \caption{Per-instance privacy--utility tradeoff for the cyclic minibatch extension on
    NYU-Depth, shown for four representative batch sizes $b$. For each batch size, the number
    of learning updates $T$ is chosen to reach the same utility at the end of training.}
    \label{fig:nyuv2-minibatch-tradeoff}
\end{figure}

\begin{table}[!htbp]
\centering
\caption{Computational cost of the cyclic minibatch experiments in 
Figure~\ref{fig:nyuv2-minibatch-tradeoff} and the full batch
experiment in Figure~\ref{fig:nyuv2-privacy-utility} for fixed $K=80$ and 
fixed pre-unlearning utility. We report the
number of examples processed during learning and unlearning, using
$b\times T$ and $b\times K$ as a simple cost proxy.}
\label{tab:nyuv2-minibatch-cost}
\begin{tabular}{c c c c c c}
\toprule
Batch size $b$ & $T$ & Learning cost $bT$ & $K$ & Unlearning cost $bK$ & Total cost $b(T+K)$ \\
\midrule
$256$  & $10\,650$ & $2\,726\,400$ & 80 & $20\,480$  & $2\,746\,880$ \\
$512$  & $5\,625$  & $2\,880\,000$ & 80 & $40\,960$  & $2\,920\,960$ \\
$1024$ & $3\,420$  & $3\,502\,080$ & 80 & $81\,920$  & $3\,584\,000$ \\
$2048$ & $2\,280$  & $4\,669\,440$ & 80 & $163\,840$ & $4\,833\,280$ \\
$n$ & $800$  & $30\,720\,000$ & 80 & $3\,072\,000$ & $33\,792\,000$ \\
\bottomrule
\end{tabular}
\end{table}

\section{Additional Experiments}
\label{app:additional}

\subsection{Experiments on MNIST}
\label{app:mnist-implementation}

\paragraph{Motivation.}
This experiment is of independent interest for three reasons. First, unlike the
NYU-Depth setup in the main text, the output here is vector-valued ($d=10$), showing
that our formulas and high-probability sensitivity estimates also apply beyond
the scalar-output case. Second, the downstream task is multiclass
classification rather than regression, providing a qualitatively different
application setting. Third, despite these differences, we observe the same
overall conclusions as in the main experiments.

\paragraph{Dataset and splits.}
We use the MNIST dataset, consisting of $60{,}000$ training images and $10{,}000$
test images of handwritten digits in $10$ classes.
Rather than training on the full dataset, we deliberately work with a
restricted training set size $n=4000$.
We first extract features from $6000$ training images and $1000$ test images,
then concatenate them and randomly select $n=4000$ points for training.
All remaining points are used for test.
The split is performed once using a fixed random permutation.

\paragraph{Feature extraction.}
Each MNIST image is mapped to a fixed feature representation using a
ResNet-50 network pre-trained on ImageNet.
The backbone is kept frozen throughout all experiments.
Images are resized to $224\times224$, converted to three channels, and normalized
using the standard ImageNet mean and variance.
We remove the final fully connected layer and extract the output of the global
average pooling layer, yielding a feature vector of dimension $2048$.

\paragraph{Labels and linear head.}
Targets are encoded as one-hot vectors in $\mathbb{R}^{10}$.
We train a multi-output linear ridge regression head on top of the frozen
features, resulting in a parameter matrix
$\theta\in\mathbb{R}^{p\times d}$ with $d=10$ outputs.
A bias term is included by appending a constant feature to each input vector,
so that the final input dimension is $p=2049$.

\paragraph{Choice of representative indices.}
To avoid evaluating all $n=4000$ training points, we select seven representative
indices spanning the full range of pointwise influence. Concretely, we compute
the per-sample gradient norms at the learned iterate, sort the training points
accordingly, and retain the empirical quantiles
$q\in\{0,0.05,0.25,0.5,0.75,0.95,1\}$. The resulting points are shown in
Figure~\ref{fig:mnist-selection}; they range from visually simple digits
to more atypical or ambiguous samples.

\begin{figure}[h]
    \centering
    \includegraphics[width=0.6\columnwidth]{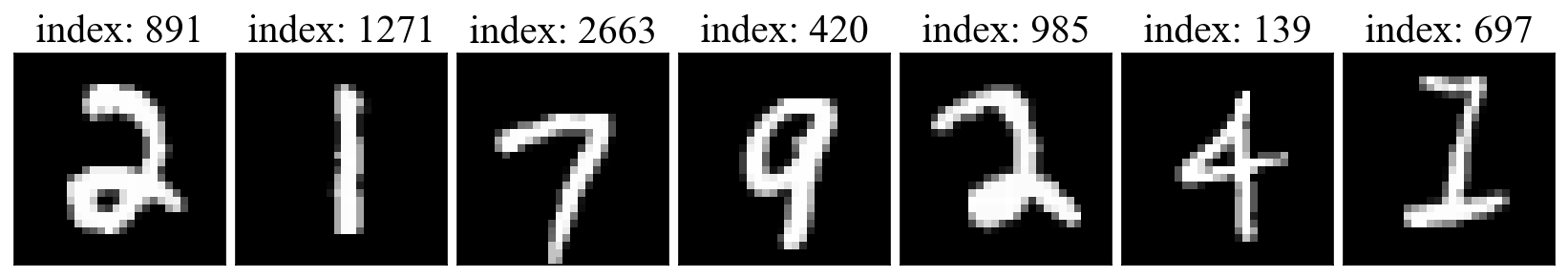}
    \caption{Seven representative MNIST training points, selected as empirical
    quantiles of the per-sample gradient norm at the learned iterate.}
    \label{fig:mnist-selection}
\end{figure}

\paragraph{High-probability sensitivities and privacy--utility tradeoff.}
Figure~\ref{fig:mnist-hp-and-tradeoff} summarizes the two main observations.
First, the deterministic high-probability bounds
$\{s_{i,k}^{\delta_s}\}_{k< T}$ are strongly heterogeneous across points and
iterations, showing that a single uniform sensitivity bound would be overly
conservative. Second, the final test
accuracy improves with the target privacy level $\varepsilon$, with a
substantial gap between easy and hard points. 

\begin{figure}[h]
    \centering
    \begin{subfigure}[t]{0.49\columnwidth}
        \centering
        \includegraphics[width=\columnwidth]{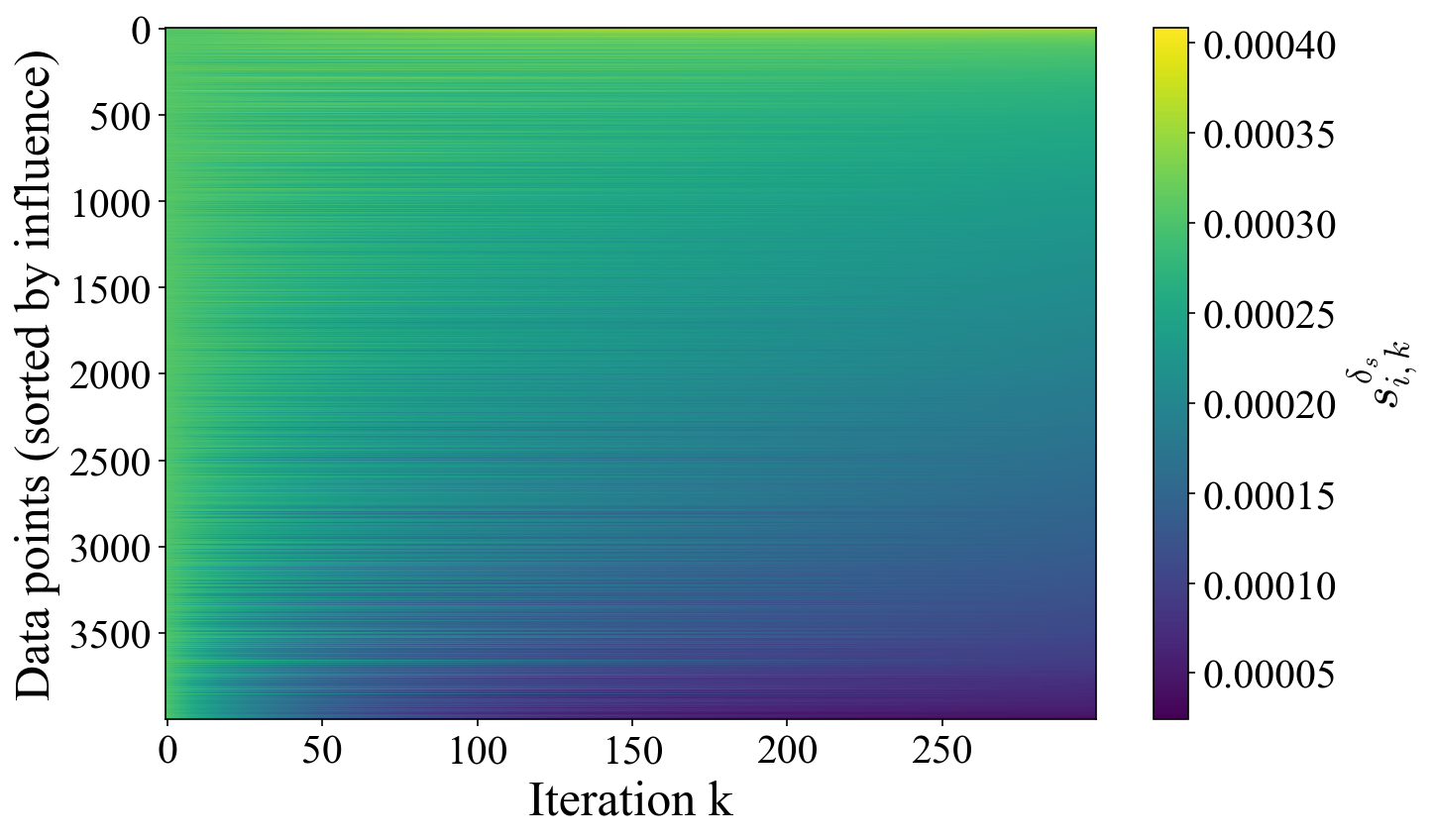}
        \caption{High-probability sensitivity bounds on training points across 
         $T=300$ learning iterations.}
        \label{fig:mnist-hp-sensitivity-map}
    \end{subfigure}
    \hfill
    \begin{subfigure}[t]{0.49\columnwidth}
        \centering
        \includegraphics[width=\columnwidth]{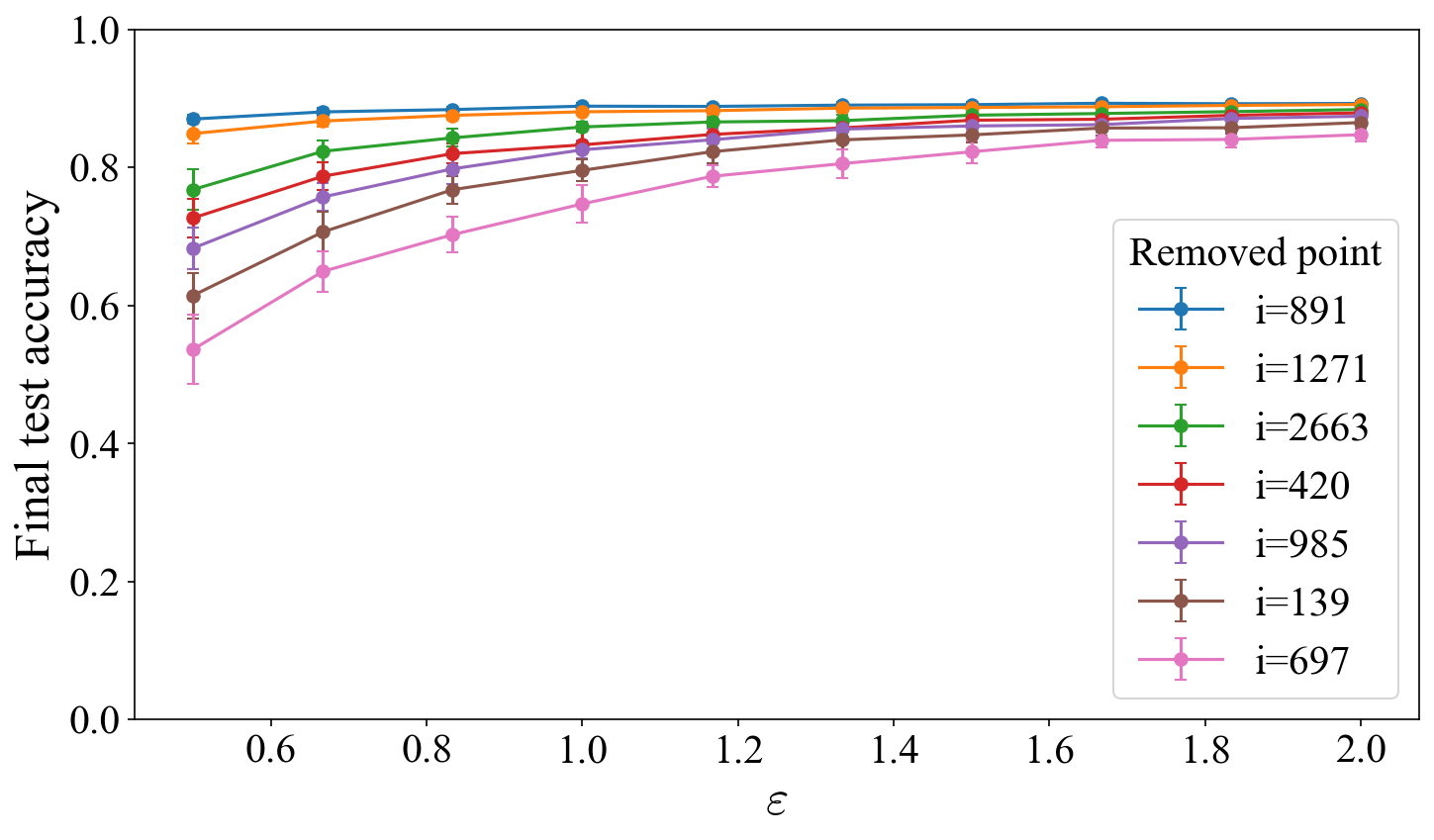}
       \caption{Privacy--utility trade-off ($T=300, K=20$). 
        Each curve corresponds to a distinct removed training point.}
        \label{fig:mnist-privacy-utility}
    \end{subfigure}
    \caption{Per-instance unlearning on MNIST. Left: the certified
    high-probability sensitivity bounds vary substantially across points and
    iterations. Right: utility improves with the privacy budget, with marked
    differences across removed points.}
    \label{fig:mnist-hp-and-tradeoff}
\end{figure}

\paragraph{Role of $K$ and $T$.}
Figure~\ref{fig:mnist-ablations} studies the effect of the unlearning horizon
$K$ and the learning horizon $T$. At fixed privacy level, increasing $K$
improves utility rapidly at first and then largely saturates, showing that a
short corrective phase is often sufficient in practice. Varying $T$ yields the
same qualitative pattern as in the main text: performance first improves, but
for the hardest points very large values of $T$ can eventually hurt utility
because the certified noise level increases.

\begin{figure}[h]
    \centering
    \begin{subfigure}[t]{0.49\columnwidth}
        \centering
        \includegraphics[width=\columnwidth]{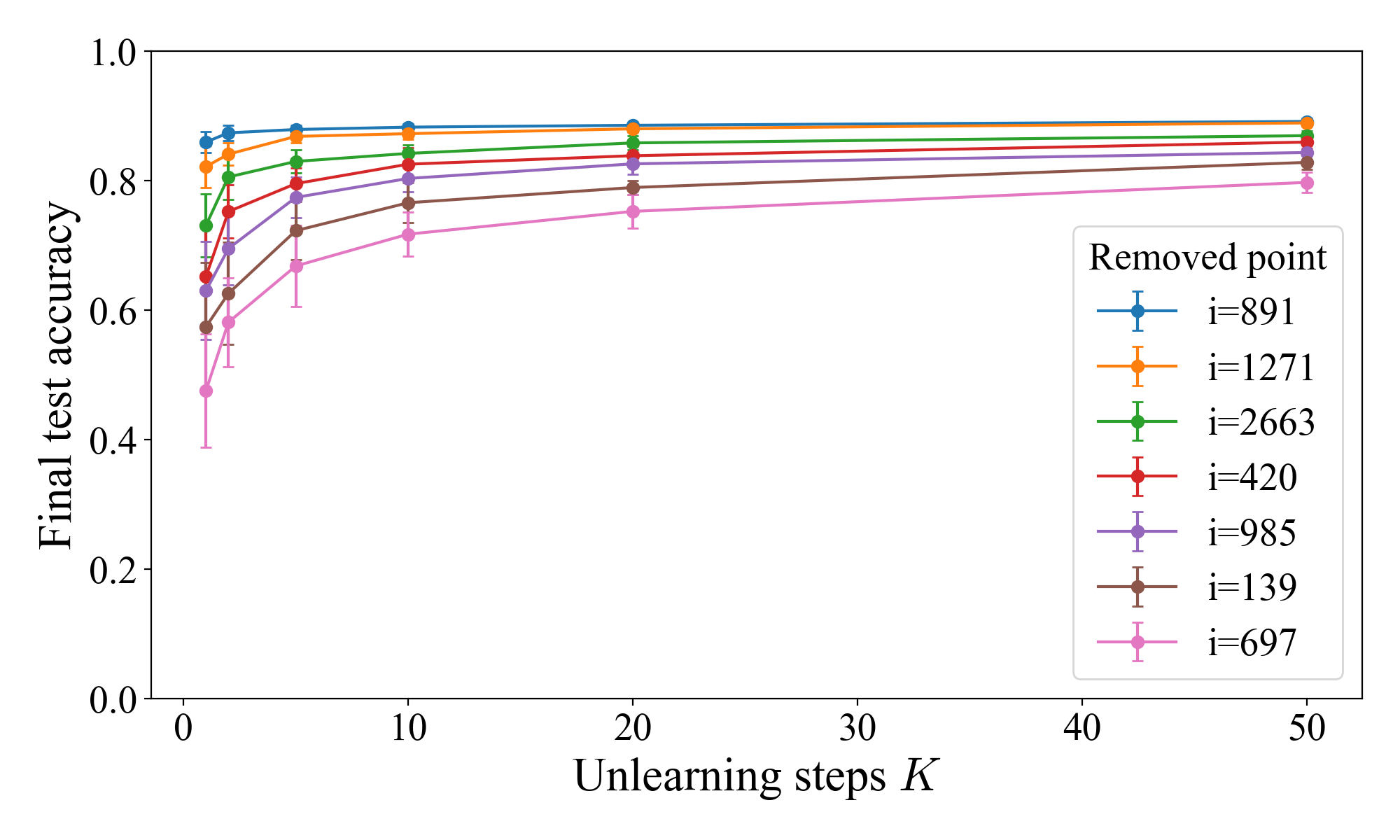}
        \caption{Effect of $K$ at fixed $\varepsilon=1$ and $T=300$.}
        \label{fig:mnist-ablation-K}
    \end{subfigure}
    \hfill
    \begin{subfigure}[t]{0.49\columnwidth}
        \centering
        \includegraphics[width=\columnwidth]{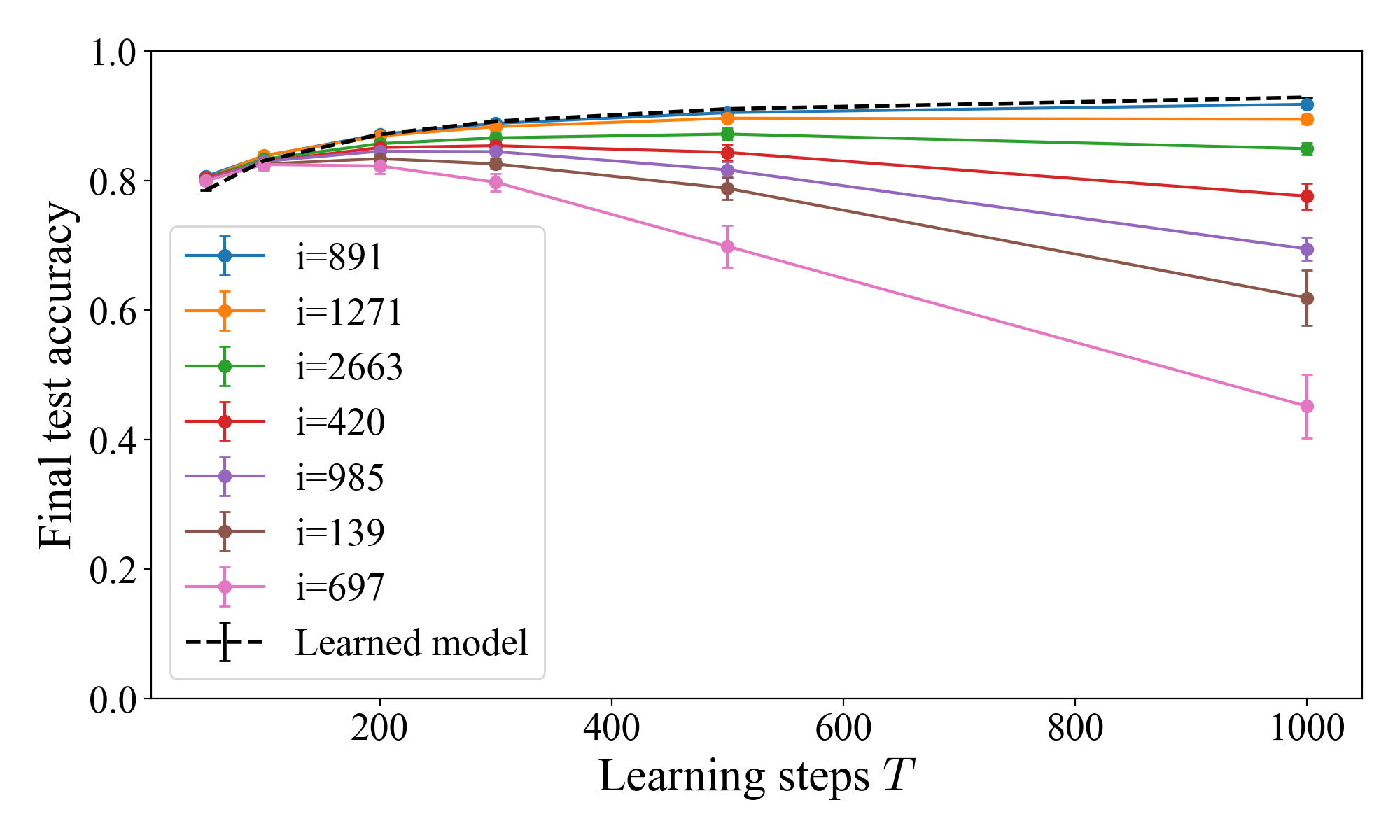}
        \caption{Effect of $T$ at fixed $\varepsilon=1$ and $K=20$.}
        \label{fig:mnist-ablation-T}
    \end{subfigure}
    \caption{Ablations on the correction horizon $K$ and the training horizon
    $T$ at fixed privacy level. Utility saturates quickly with $K$, while the
    dependence on $T$ is non-monotone for the hardest points.}
    \label{fig:mnist-ablations}
\end{figure}

\paragraph{Comparison with baselines.}
Figure~\ref{fig:mnist-baselines} compares our point-dependent calibration with
two baselines. Against uniform DP-GD, we recover the same qualitative
conclusion as in the main paper: PILU achieves a better privacy--utility
tradeoff, with the largest gains on the most influential points. We also
compare with the Newton-step baseline of~\citet{guo2020certified} as a function
of $T$. The Newton-step privacy cost decreases as training approaches
convergence, whereas our privacy cost increases with $T$ because it accounts
for the full historical influence of the removed point.

\begin{figure}[h]
    \centering
    \begin{subfigure}[t]{0.49\columnwidth}
        \centering
        \includegraphics[width=\columnwidth]{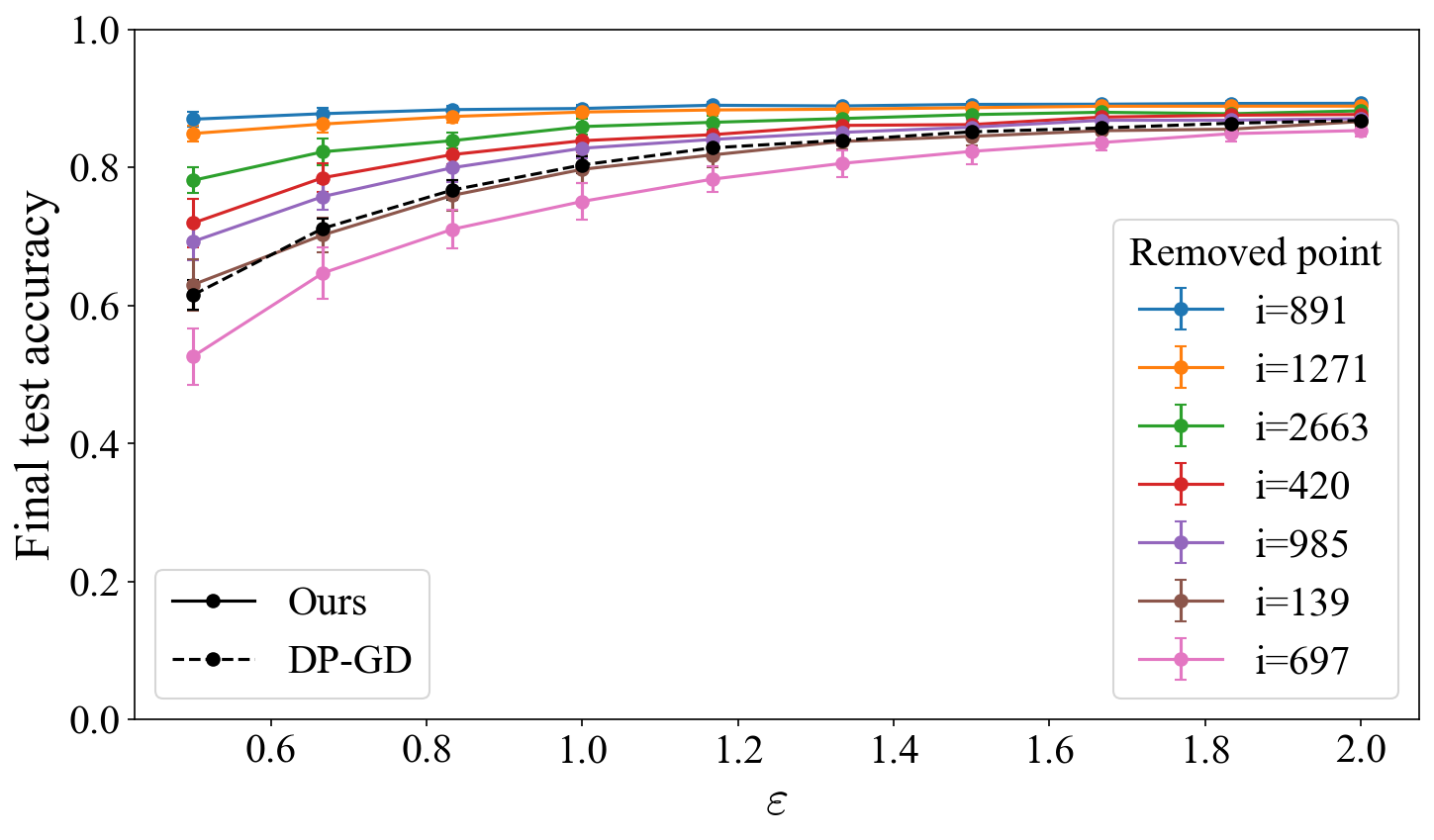}
        \caption{PILU vs. DP-GD.}
        \label{fig:mnist-ours-vs-dpgd}
    \end{subfigure}
    \hfill
    \begin{subfigure}[t]{0.49\columnwidth}
        \centering
        \includegraphics[width=\columnwidth]{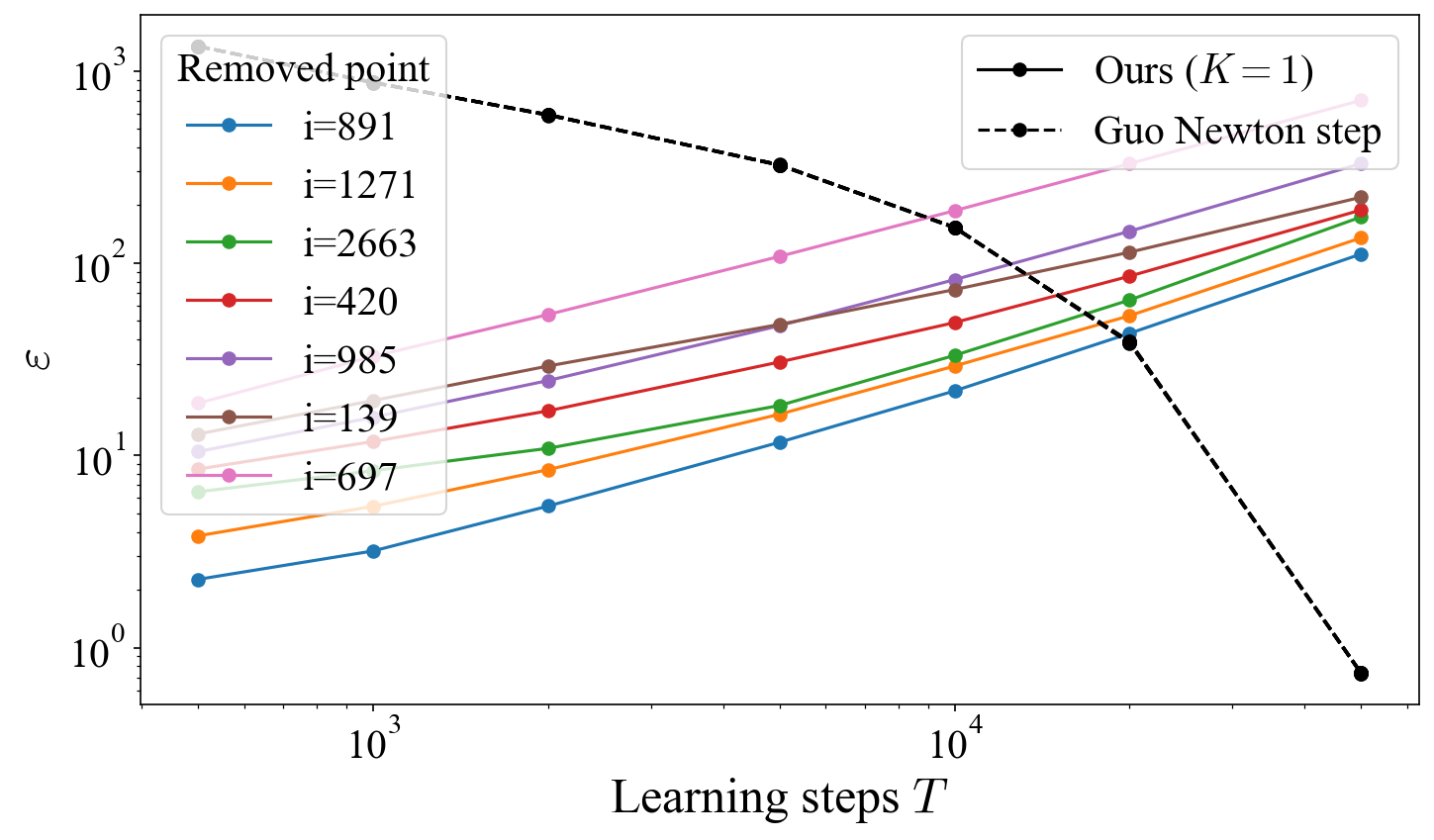}
        \caption{PILU vs. Guo's Newton step.}
        \label{fig:mnist-guo}
    \end{subfigure}
    \caption{Comparison with baselines on MNIST. 
    Left: final test accuracy as a function of the target privacy level
  $\varepsilon$, comparing our per-instance unlearning method with learning-time
    privacy via DP-GD ($T=300, K=20$).
    Right: unlearning guarantee $\varepsilon$ as a function of the training horizon
    $T$, comparing PILU ($K=1$, $\sigma_\mathrm{learn}= \sigma_\mathrm{unlearn}$) to
     the Newton-step baseline.}
    \label{fig:mnist-baselines}
\end{figure}

\section{Computing Resources}
\label{app:compute}

All experiments reported in the paper were run on the following hardware setups.
Existing assets are credited in the paper and in this appendix:
NYU-Depth~\citep{silberman2012indoor} (original dataset; the official dataset
page does not specify a standard redistribution license), MNIST~\citep{lecun1998mnist}
(CC BY-SA 3.0), CIFAR-10~\citep{krizhevsky2009learning} (research use, no formal
license), DINOv2~\citep{oquab2024dinov2}
(Apache-2.0 for the code; CC-BY-4.0 for the released model assets), and a
torchvision ResNet-50 pre-trained on ImageNet (BSD-3-Clause; ImageNet terms of
use for research). The DP-GD baseline~\citep{Abadi_2016} and the Newton-step
baseline~\citep{guo2020certified} are re-implemented from the published
descriptions and attributed accordingly.

\paragraph{Linear (ridge regression) experiments on NYU-V2.} 
These experiments were executed on a single GPU node of a cluster 
(NVIDIA A100 SXM4, 80\,GB GPU memory). A single run of Algorithm~\ref{alg:full-unlearning} with
$n=38400$, $T=800$, and $K=80$ takes approximately $1.02$ seconds
wall-clock ($\approx 1.7 \times 10^{-2}$ minutes). The privacy--utility sweep
over 10 values of $\varepsilon$ and 5 representative points
(Figure~\ref{fig:nyuv2-privacy-utility}) with $20$ trials takes approximately $0.29$\,h
($\approx 17.6$ minutes); the ablation over $T$ (Figure~\ref{fig:ablation-T})
takes $0.11$\,h ($7$ minutes) and the ablation over $K$ (Figure~\ref{fig:ablation-K}) 
takes approximately $0.19$\,h ($11.6$ minutes). The comparison with the learning-time privacy 
(Figure~\ref{fig:nyuv2-ours-vs-dpgd}) takes $0.34$\,h ($20.6$ minutes) and the comparison 
with the Newton step by \citet{guo2020certified} (Figure~\ref{fig:nyuv2-ours-vs-guo}) 
takes $0.06$\,h ($3.5$ minutes).

\paragraph{Linear (ridge regression) experiments on MNIST.} 
All runs were executed on a single Apple-Silicon workstation
using the PyTorch MPS backend ($\approx$16\,GB unified memory). On MNIST,
a single run of Algorithm~\ref{alg:full-unlearning} with
$n=4000$, $T=300$, and $K=20$ takes approximately $0.55$ seconds
wall-clock ($\approx 9.1 \times 10^{-3}$ minutes). The privacy--utility sweep
over 10 values of $\varepsilon$ and 7 representative points
(Figure~\ref{fig:mnist-privacy-utility}) takes approximately $0.21$\,h
($\approx 12.8$ minutes); the ablations over $T$ and $K$
(Figure~\ref{fig:ablation-TK}) take approximately $0.13$\,h each
($\approx 7.7$ minutes each).

\paragraph{Non-linear (end-to-end VGG) experiments on CIFAR-10.} These
experiments were executed on a single GPU node of a cluster 
(NVIDIA A100, 80\,GB GPU memory). For each representative index
$i$ we run $R=100$ independent unlearn/retrain pairs with different random
seeds; a single pair takes approximately 20 seconds. This corresponds to
approximately 3.9 GPU-hours for all 7 representative indices, and
approximately 2.8 GPU-hours for the 5 indices reported in
Table~\ref{tab:nonlinear-epsilon}.

\paragraph{Total compute for reported experiments.}
Summing only the experiments explicitly reported above, the total compute is
approximately $4.9$ GPU-hours on the cluster  and $0.47$ MPS-hours on the
Apple-Silicon workstation. These totals exclude preliminary or discarded runs,
as well as one-off preprocessing and feature-extraction costs.

\end{document}